\documentclass[review,11pt]{ReportTemplate}

\usepackage{microtype}
\usepackage{graphicx}
\graphicspath{{./}}
\usepackage{subcaption}
\usepackage{booktabs} 
\usepackage[table]{xcolor}
\usepackage{listings}
\usepackage{tcolorbox}
\usepackage{xurl}
\usepackage{url}
\usepackage{eso-pic}
\usepackage{algorithm}
\usepackage{float}
\usepackage{algorithmic}

\usepackage{hyperref}

\usepackage{amsmath}
\usepackage{amssymb}
\usepackage{mathtools}
\usepackage{amsthm}

\usepackage[capitalize,noabbrev]{cleveref}

\theoremstyle{plain}
\newtheorem{theorem}{Theorem}[section]

\newtheorem{lemma}[theorem]{Lemma}

\theoremstyle{definition}
\newtheorem{definition}[theorem]{Definition}

\theoremstyle{remark}

\newcommand{\E}{\mathbb{E}}
\newcommand{\KL}{\mathrm{KL}}

\newcommand{\1}{\mathbf{1}}

\DeclareMathOperator{\Var}{Var}
\DeclareMathOperator{\conf}{conf}

\newcommand{\gbar}{\ensuremath{\bar{g}(x;\theta)}}
\newcommand{\ghat}{\ensuremath{\hat{g}_t(x,y;\theta)}}

\usepackage[textsize=tiny]{todonotes}

\makeatletter
\AtBeginDocument{\ClearShipoutPicture} 
\makeatother

\begin{document}

\begin{frontmatter}	
	\title{VI-CuRL: Stabilizing Verifier-Independent RL Reasoning via Confidence-Guided Variance Reduction}
	\author{
\begin{tabular}{@{}l@{}}
\textbf{Xin-Qiang Cai}$^1$        \quad
\textbf{Masashi Sugiyama}$^{1,2}$
\end{tabular}
\vspace{0mm} \\
$^{1}$RIKEN AIP, Tokyo, Japan \quad $^{2}$The University of Tokyo, Tokyo, Japan
}
\begin{abstract}
Reinforcement Learning with Verifiable Rewards (RLVR) has emerged as a dominant paradigm for enhancing Large Language Models (LLMs) reasoning, yet its reliance on external verifiers limits its scalability. Recent findings suggest that RLVR primarily functions by eliciting latent capabilities, motivating the development of verifier-free algorithms. However, in such settings, standard methods like Group Relative Policy Optimization face a critical challenge: destructive gradient variance that often leads to training collapse. To address this issue, we introduce \textbf{Verifier-Independent Curriculum Reinforcement Learning (VI-CuRL)}, a framework that leverages the model's intrinsic confidence to construct a curriculum independent from external verifiers. By prioritizing high-confidence samples, VI-CuRL effectively manages the bias-variance trade-off, specifically targeting the reduction of \emph{action} and \emph{problem variance}. We provide a rigorous theoretical analysis, proving that our estimator guarantees asymptotic unbiasedness. Empirically, VI-CuRL promotes stability and consistently outperforms verifier-dependent/independent baselines across math and general reasoning benchmarks with/without verifiers.
\end{abstract}

\end{frontmatter}




\vspace{-4mm}
\section{Introduction}
\vspace{-2mm}
Reinforcement Learning with Verifiable Rewards (RLVR) has recently emerged as a dominant paradigm for enhancing the reasoning capabilities of large language models (LLMs) in post-training \citep{shao2024deepseekmath, deepseek2025r1}. Typically, RLVR relies on an external verifier to provide reward signals that guide the model toward correct reasoning paths \citep{zhou2025reinforcing}. However, obtaining reliable verifiers is often prohibitively expensive or infeasible for many open-ended tasks \citep{lightman2024letsverify}. Meanwhile, recent studies suggest that the success of RLVR may be largely attributed to its ability to elicit the model's latent potential rather than injecting new knowledge \citep{chen2024hidden, deepseek2025r1}. Consequently, there is a growing interest in verifier-free algorithms that can unlock these capabilities without reliance on ground-truth supervision \citep{liu2025nover, zhou2025verifree}.

Despite its promise, RLVR, like many RL approaches, faces a significant challenge: training instability due to high gradient variance \citep{yao2025optimizing, ye2025robust}. Traditional RL methods employ various techniques to reduce variance, often introducing a controlled amount of bias to stabilize training, known as the bias-variance trade-off \citep{geman1992neural, schulman2016high}. In the context of RLVR, particularly in verifier-free settings, this issue is exacerbated. Without effective external supervision to prune incorrect trajectories, the variance in LLM training becomes destructively large, often leading to performance collapse \citep{zhang2025nofree}.

\begin{figure*}[ht]
    \centering
    \includegraphics[width=0.9\linewidth]{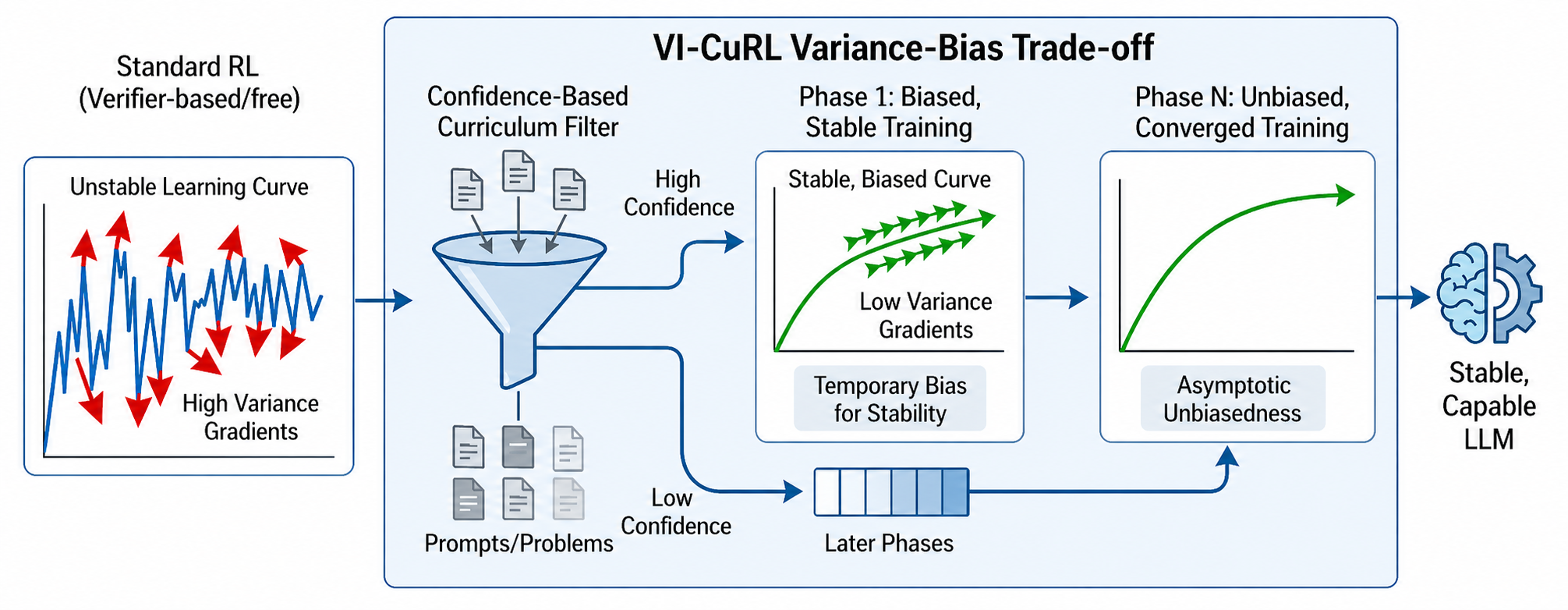}
    \caption{Conceptual overview of VI-CuRL. Unlike standard RL that treats all samples equally, VI-CuRL dynamically selects high-confidence samples to stabilize training via a principled bias-variance trade-off, without accessing external verifiers.}
    \vspace{-4mm}
    \label{fig:intro_concept}
\end{figure*}

To address this instability, curriculum learning presents a promising direction, proposing to train models on a sequence of progressively harder examples to improve convergence \citep{bengio2009curriculum, shi2025continual}. While curriculum-based approaches for RLVR, such as Adaptive Curriculum Reinforcement Finetuning (AdaRFT) \citep{chang2024adarft} and Variance-based Curriculum Reinforcement Learning (VCRL) \citep{jiang2025vcrl}, have been proposed, they are verifier-dependent and rely on verifier-provided information to construct the curriculum. While in verifier-free scenarios, however, the destructive impact of variance is far more severe, and such external guidance is absent \citep{zhang2025nofree}. Therefore, we propose \textbf{Verifier-Independent Curriculum Reinforcement Learning (VI-CuRL)}, as illustrated in Figure~\ref{fig:intro_concept}. VI-CuRL uses an annealed curriculum to stabilize RL process with confidence as the curriculum metric. Instead of relying on the variance of external training data \citep{zhang2025nofree} or signals from verifiers \citep{lightman2024letsverify, zhou2025reinforcing}, VI-CuRL leverages an intrinsic prompt-level curriculum score, which instantiated as entropy-derived confidence, as a verifier-free curriculum statistic for data selection. As justified by our variance decomposition, this selection mechanism specifically targets and reduces the action variance and problem variance terms, which are the primary drivers of instability. By prioritizing high-confidence samples that form a lower-uncertainty subset in the early stages, our method achieves a favorable bias-variance balance. We guarantee that, as the curriculum progresses, the surrogate objective asymptotically converges to the true objective, ensuring both stability and final performance.

In this work, we formalize the variance collapse phenomenon in verifier-independent RL and provide a rigorous theoretical framework that justifies our intrinsic curriculum approach. We implement VI-CuRL and evaluate it across challenging logic and mathematical reasoning benchmarks, demonstrating that it effectively prevents the training collapse observed in standard baselines. Our results show that VI-CuRL achieves performance competitive with oracle-verified methods, offering a scalable and stable path for applying RL to the vast array of domains where ground-truth verification remains out of reach.

\vspace{-2mm}
\section{Preliminaries and Problem Formulation}
\vspace{-2mm}
In this section, we introduce the preliminaries and problem formulation.

\vspace{-2mm}
\paragraph{Notation.}
We consider a prompt $x$ drawn from a data distribution $p(x)$. At each iteration, we maintain a trainable policy $\pi_\theta$ parameterized by $\theta \in \mathbb{R}^d$, and a behavior policy $\pi_{\theta_\mathrm{old}}$ (the policy from the current outer iteration). We also assume a fixed reference policy $\pi_{\text{ref}}$ (typically the initial Supervised Fine-Tuning (SFT) policy) for Kullback-Leibler (KL) regularization.
Given a prompt $x$, we sample $G$ trajectories $\{y_1, \dots, y_G\}$ from $\pi_{\theta_\mathrm{old}}(\cdot|x)$, where each $y_i = a_{i, 1:T_i}$ consists of $T_i$ tokens. We denote the state at step $t$ as $s_{i,t} = (x, a_{i, <t})$. The group-relative advantage $A_i$ for trajectory $i$ is:
\begin{equation}
    A_i = \frac{R(x, y_i) - \bar{R}}{\sigma_R + \epsilon},
\end{equation}
where $\bar{R}$ and $\sigma_R$ are the mean and standard deviation of rewards within the group, and $\epsilon$ is a small constant to prevent division by zero.

\vspace{-4mm}
\paragraph{Base Surrogate Objective.}
We adopt the Group Relative Policy Optimization (GRPO) framework \citep{shao2024deepseekmath}, which we view as optimizing a Proximal Policy Optimization (PPO) style surrogate objective with group-relative advantages and explicit KL regularization.
Specifically, we define the log-ratio $r_{i,t}(\theta)$ and probability ratio $\rho_{i,t}(\theta)$ as:
\begin{align*}
    r_{i,t}(\theta) &:= \log \pi_\theta(a_{i,t}|s_{i,t}) - \log \pi_{\theta_\mathrm{old}}(a_{i,t}|s_{i,t}), \\
    \rho_{i,t}(\theta) &:= \exp(r_{i,t}(\theta)).
\end{align*}
The per-prompt, per-group surrogate loss averages over rollouts and tokens to ensure boundedness:
\begin{equation}
\ell(\theta; x, y_{1:G}) := 
\frac{1}{G} \sum_{i=1}^G \frac{1}{T_i} \sum_{t=1}^{T_i} \Big[ L_{\text{clip}}(i,t;\theta) - \beta_{\mathrm{KL}} \KL(\pi_\theta(\cdot|s_{i,t}) || \pi_{\text{ref}}(\cdot|s_{i,t})) \Big],
\end{equation}
where the standard PPO-Clip term is:
\[ L_{\text{clip}}(i,t;\theta) := \min\Big( \rho_{i,t}(\theta)A_i, \;\text{clip}(\rho_{i,t}(\theta), 1-\epsilon, 1+\epsilon)A_i \Big). \]
This normalization makes the loss scale approximately length-invariant (independent of $T_i$) in practice.
The global training objective is to maximize the expected value of this surrogate:
\begin{equation}\label{eq:L_surrogate}
\mathcal{L}(\theta) := \mathbb{E}_{x\sim p(x)}\;\mathbb{E}_{y_{1:G}\sim \pi_{\theta_\mathrm{old}}(\cdot|x)}\big[\,\ell(\theta; x, y_{1:G})\,\big].
\end{equation}
In standard GRPO, we estimate the gradient of $\mathcal{L}(\theta)$ by sampling prompts and groups from $\pi_{\theta_\mathrm{old}}$ and performing multiple epochs of stochastic gradient ascent on $\ell$.

\vspace{-4mm}
\paragraph{Curriculum RL.}
Standard RL treats all samples from $p(x)$ equally. Curriculum methods effectively optimize a sequence of objectives $\mathcal{L}_t(\theta)$, where the data distribution is re-weighted. The goal is to start with a low-uncertainty distribution where gradient variance is low, and gradually anneal towards the target distribution $p(x)$. To this end, a prompt-level curriculum score is introduced and denoted as $c(x) \in [0, 1]$. Then the curriculum mask is defined as $w_t(x) = \1[c(x) \ge \tau_t]$ \citep{kadavath2022language, huang2023large, ren2023selfevaluation}. We formalize this signal as follows:

\begin{definition}[Curriculum Score]
Quantifying confidence in LLMs is achieved by aggregating token-level probabilities. To obtain a stable signal $c(x) \in [0, 1]$ that depends only on the prompt, we first define the \emph{normalized uncertainty} $u(x)$ as the expectation over trajectories sampled from the behavior policy $\pi_{\theta_\mathrm{old}}$:
\begin{equation}
    u(x) := \mathbb{E}_{y \sim \pi_{\theta_\mathrm{old}}(\cdot|x)} \left[ \frac{1}{T} \sum_{t=1}^T \frac{H(\pi_{\theta_\mathrm{old}}(\cdot | s_t))}{\log |\mathcal{V}|} \right],
\end{equation}
where $H$ is the Shannon entropy and $|\mathcal{V}|$ is the vocabulary size. The curriculum score is then instantiated as:
\begin{equation}
\label{eq:confidence}
    c(x) = 1 - u(x) \in [0, 1].
\end{equation}
\end{definition}
The confidence of a LLM on a prompt is a verifier-free curriculum statistic for data selection; it does not verify trajectories but gates prompt-level exposure to control gradient variance. Therefore, we leverage it here as an effective proxy for \emph{variance control} to stabilize training, rather than as a ground-truth verification signal.

\vspace{-2mm}
\section{Verifier-Independent Curriculum Reinforcement Learning}
\label{sec:method}
\vspace{-2mm}
We propose \textbf{Verifier-Independent Curriculum Reinforcement Learning (VI-CuRL)}, a novel framework for solving the high-variance challenge independent from verifiers. 

\subsection{The VI-CuRL Objective}
To formally operationalize our curriculum, we define a sequence of curriculum phases indexed by $t=1, \dots, T$. Each phase is associated with a confidence threshold $\tau_t$ that determines which samples are included in the training process.

\paragraph{Curriculum Mask and Retention Rate.}
We define the curriculum mask $w_t(x)$ based on the curriculum scores. The selection is deterministic given the prompt and the threshold $\tau_t$:
\begin{equation}
    w_t(x) = \1[c(x) \ge \tau_t].
\end{equation}
The \emph{target} retention rate $\beta_t$ corresponds to the population probability:
\begin{equation}
    \beta_t = \E_{x \sim p(x)}[w_t(x)] \in (0, 1].
\end{equation}
In the implementation (Algorithm~\ref{alg:vicurl}), we estimate $\beta_t$ using the empirical batch retention $\hat{\beta}_t = \frac{1}{B} \sum_{j=1}^B w_t(x_j)$.

\paragraph{The Weighted Surrogate Objective.}
For each phase $t$, we optimize a re-weighted version of the standard surrogate $\mathcal{L}(\theta)$ that restricts learning to the selected subspace. To maintain the scale of the original objective, we introduce the importance sampling weight $1/\beta_t$:
\begin{equation}\label{eq:Jt}
    \mathcal{L}_t(\theta) := \mathbb{E}_{x\sim p(x)}\mathbb{E}_{y_{1:G}\sim\pi_{\theta_\mathrm{old}}(\cdot|x)}
\Big[\frac{w_t(x)}{\beta_t}\;\ell(\theta; x, y_{1:G})\Big].
\end{equation}
This weighted objective trades bias for variance. By focusing on the high-confidence region, it reduces the effective variance of the gradient estimator, albeit by optimizing a biased data distribution.

\vspace{-2mm}
\paragraph{The Curriculum Gradient Estimator.}
To optimize $\mathcal{L}_t(\theta)$, we use the standard auto-differentiation subgradient of the clipped PPO surrogate. Critically, the advantages $A_i$, masks $w_t(x)$, and weights $\beta_t$ are treated as constants (``stop-gradients'') during the inner loop updates:
\begin{equation}\label{eq:masked}
    \hat{g}_t := \frac{w_t(x)}{\beta_t}\;\nabla_\theta \ell(\theta; x, y_{1:G}).
\end{equation}

\vspace{-2mm}
\subsection{Curriculum Schedule and Algorithm}
We now detail the practical implementation of the VI-CuRL framework. The optimization process is divided into $T$ steps. At each step $t$, we seek to effectively filter the training batch to retain only the samples where the model exhibits high confidence.

\vspace{-2mm}
\paragraph{Dynamic Quantile Thresholding.}
Defining a fixed schedule for the confidence threshold $\tau_t$ is challenging because the model's confidence distribution shifts as training progresses. A static threshold might result in either empty batches (if too high) or ineffective filtering (if too low). To address this, we define the curriculum schedule via a target retention rate $\beta_t \in (0, 1]$. We strictly enforce $\beta_t$ by dynamically setting $\tau_t$ to the $(1-\beta_t)$-th quantile of the batch confidence scores.
Specifically, for a batch of prompts $\mathcal{B} = \{x_1, \dots, x_B\}$, we calculate curriculum scores $c_j = c(x_j)$. Let $c_{(1)} \le \dots \le c_{(B)}$ be the sorted scores. The threshold is set using the sample quantile:
\begin{equation}\label{eq:adaptive_tau}
\tau_t = c_{(\lfloor (1-\beta_t)B \rfloor)}.
\end{equation}
This ensures the empirical retention $\hat{\beta}_t$ matches the target schedule.

\vspace{-2mm}
\paragraph{Algorithm.}
The full VI-CuRL training procedure is summarized in Algorithm~\ref{alg:vicurl}. We adopt a linear annealing schedule for $\beta_t$. The algorithm follows a standard outer/inner loop structure: in the outer loop, we sample data from $\pi_{\theta_\mathrm{old}}$, compute advantages and masks, and then in the inner loop, we perform multiple epochs of optimization on the masked surrogate.

\begin{algorithm}[tb]
   \caption{Verifier-Independent Curriculum RL}
   \label{alg:vicurl}
\begin{algorithmic}
   \STATE {\bfseries Input:} Dataset $\mathcal{D}$, Initial Policy $\pi_{\text{ref}}$, Total steps $T$, Curriculum steps $T_{\text{curr}}$, Batch size $B$, Number of epochs $K$, Learning rate $\eta$.
   \STATE Initialize $\pi_\theta \leftarrow \pi_{\text{ref}}$.
   \FOR{$t = 1$ {\bfseries to} $T$}
       \STATE $\pi_{\theta_\mathrm{old}} \leftarrow \pi_\theta$.
       \STATE Sample batch $\mathcal{B} = \{x_j\}_{j=1}^B \sim \mathcal{D}$.
       \FOR{each prompt $x_j \in \mathcal{B}$}
           \STATE Sample group $\{y_{j,k}\}_{k=1}^G \sim \pi_{\theta_\mathrm{old}}(\cdot|x_j)$.
            \STATE Compute rewards $R_{j,k}$ and advantages $A_{j,k}$ using the chosen reward source.
            \STATE Compute curriculum score $c_j$ from $\pi_{\theta_{\rm old}}$. \hfill $\triangleright$ \textit{Used only to form $w_j$; receives no gradient.}
       \ENDFOR
       \STATE Determine $\beta_t$ via schedule.
       \STATE Set $\tau_t$ as the $(1-\beta_t)$-quantile of $\{c_j\}$.
       \STATE Compute masks $w_j = \1[c_j \ge \tau_t]$.
       \FOR{$k = 1$ {\bfseries to} $K$}
           \STATE Sample minibatch $\mathcal{M} \subset \mathcal{B}$.
           \STATE Compute $\hat{g} = \frac{1}{|\mathcal{M}|} \sum_{j \in \mathcal{M}} \frac{w_j}{\beta_t} \nabla_\theta \ell(\theta; x_j, y_{j, 1:G})$.
           \STATE Update $\theta \leftarrow \theta + \eta \hat{g}$.
       \ENDFOR
   \ENDFOR
\end{algorithmic}
\end{algorithm}

\vspace{-2mm}
This algorithm effectively operationalizes the bias-variance trade-off: early in training, $\beta_t$ is small, and the algorithm allows the policy to start from a low-uncertainty training subset by prioritizing high-confidence prompts. As the policy learns and variance naturally decreases, $\beta_t \to 1$, ensuring unbiased optimization of the full distribution.

\section{Theoretical Analysis of the Bias-Variance Trade-off}\label{sec:theory}
In this section, we formalize the properties of VI-CuRL. Our analysis first addresses the ``bias'' component of the trade-off. We then dissect the ``variance'' component to reveal the source of the training stability.

\vspace{-2mm}
\subsection{Consistency of the Objective}
A foundational property of our framework is that the normalized gradient estimator $\ghat$ is, by construction, an unbiased estimator of the gradient of the surrogate objective $\mathcal{L}_t(\theta)$. That is, $\E[\ghat] = \nabla_\theta \mathcal{L}_t(\theta)$. This ensures our algorithm correctly optimizes the objective for the current curriculum phase. The critical theoretical question, however, is whether maximizing these time-dependent surrogate objectives consistently leads to maximizing the base surrogate. Our first theorem confirms that it does:

\begin{theorem}[Consistency of the Objective]\label{thm:asymptotic_unbiasedness}
Assuming the per-prompt surrogate loss is bounded, $|\ell(\theta; x, y_{1:G})| \le L_{\max}$, the absolute difference between the true surrogate objective $\mathcal{L}(\theta)$ and the weighted objective $\mathcal{L}_t(\theta)$ is bounded by $(1 - \beta_t)$:
\begin{equation}\label{eq:gap}
|\mathcal{L}(\theta) - \mathcal{L}_t(\theta)| \le 2L_{\max}(1 - \beta_t).
\end{equation}
\end{theorem}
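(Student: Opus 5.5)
Let $m(x) := \E_{y_{1:G}\sim\pi_{\theta_\mathrm{old}}(\cdot|x)}[\ell(\theta;x,y_{1:G})]$ denote the per-prompt expected surrogate. Since $w_t(x)$ is deterministic given $x$, the tower property gives $\mathcal{L}(\theta)=\E_x[m(x)]$ and $\mathcal{L}_t(\theta)=\E_x[\frac{w_t(x)}{\beta_t} m(x)]$. The boundedness assumption passes to the conditional mean, so $|m(x)|\le L_{\max}$. I would therefore write
\begin{equation*}
\mathcal{L}(\theta)-\mathcal{L}_t(\theta)=\E_x\Big[\Big(1-\frac{w_t(x)}{\beta_t}\Big)m(x)\Big],
\end{equation*}
which gives the crude bound $|\mathcal{L}-\mathcal{L}_t|\le L_{\max}\,\E_x\big|1-w_t(x)/\beta_t\big|$.

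Next I compute this expectation exactly by splitting on the mask. Where $w_t=0$ the integrand is $1$, and this event has mass $1-\beta_t$. Where $w_t=1$ the integrand is $1/\beta_t-1$, and this event has mass $\beta_t$, so it contributes $\beta_t(1/\beta_t-1)=1-\beta_t$. The total is $2(1-\beta_t)$, which yields $|\mathcal{L}(\theta)-\mathcal{L}_t(\theta)|\le 2L_{\max}(1-\beta_t)$, the bound in \eqref{eq:gap}.

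As a cross-check I would also use the decomposition $\mathcal{L}-\mathcal{L}_t=\E[(1-w_t)m]-\frac{1-\beta_t}{\beta_t}\E[w_t m]$. The first term is at most $(1-\beta_t)L_{\max}$. For the second, $|\E[w_t m]|\le \beta_t L_{\max}$, so it is also at most $(1-\beta_t)L_{\max}$. This confirms the constant $2$.

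There is no real obstacle here. The only points needing care are the following. First, $w_t$ and $\beta_t$ must be treated as fixed given $\theta_\mathrm{old}$, meaning population quantities rather than batch quantiles, so that the conditional expectation factorizes. Second, $\beta_t>0$ must hold so that the weight $1/\beta_t$ is well defined. Both are guaranteed by the definitions in Section~\ref{sec:method}.
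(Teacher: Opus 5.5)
Your proof is correct and is essentially the paper's argument. Your cross-check $\E[(1-w_t)m]-\frac{1-\beta_t}{\beta_t}\E[w_t m]$ is exactly the paper's identity $\mathcal{L}-\mathcal{L}_t=(1-\beta_t)(\mathcal{L}_{\text{discard}}-\mathcal{L}_t)$, since $\E[w_t m]=\beta_t\mathcal{L}_t$ and $\E[(1-w_t)m]=(1-\beta_t)\mathcal{L}_{\text{discard}}$, and your main route is the same split on the mask written as the $L^1$ bound $L_{\max}\E_x|1-w_t(x)/\beta_t|=2L_{\max}(1-\beta_t)$ (which incidentally extends verbatim to any nonnegative weights with $\E_x[w_t(x)]=\beta_t$, not just indicator masks).
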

The proof can be found in Appendix~\ref{app:proof_asymptotic_unbiasedness}.
The bounded-surrogate assumption $|\ell(\theta; x, y_{1:G})| \le L_{\max}$ is mild in practice with bounded rewards, group-normalized advantage, ratio clipping, and per-token KL penalty.
~\citep{ouyang2022training,shao2024deepseekmath}. Meanwhile, as the curriculum is annealed ($\tau_t \to 0$), the retention rate $\beta_t \to 1$, and the bound converges to zero, ensuring that $\mathcal{L}_t(\theta) \to \mathcal{L}(\theta)$. This theorem guarantees that the approximation error diminishes over time. Meanwhile, this result does not depend on the correctness or calibration of the confidence score. It only requires the retention rate to anneal to one.

\vspace{-2mm}
\subsection{A Deeper Investigation of Variance}
Having established that our objective is asymptotically correct, we now analyze how the curriculum tames the variance that destabilizes standard RL. To this end, here we decompose the variance into its constituent parts.

\begin{definition}[Vector Variance]
For a random vector $\mathbf{Z} \in \mathbb{R}^d$, we define its variance as the trace of its covariance matrix:
\[ \Var(\mathbf{Z}) := \E[\|\mathbf{Z} - \E[\mathbf{Z}]\|^2]. \]
\end{definition}

\begin{theorem}[Variance Decomposition]\label{thm:variance_decomposition}
Let $g(x, y_{1:G}; \theta) := \nabla_\theta \ell(\theta; x, y_{1:G})$ be the gradient of the surrogate loss, and let $\gbar(x;\theta) = \E_{y_{1:G}|\pi_{\theta_\mathrm{old}}}[g]$ be the expected gradient for a fixed prompt $x$. The variance of the VI-CuRL estimator $\ghat$ can be decomposed into three sources:
\begin{equation}\label{eq:full_variance_explained}
\Var(\ghat) = \underbrace{\frac{\sigma_{g,t}^2}{\beta_t}}_{\substack{\text{Action Variance}}} + \underbrace{\frac{V_{\mathrm{prob}, t}}{\beta_t}}_{\substack{\text{Problem Variance}}} + \underbrace{\frac{1-\beta_t}{\beta_t} \|\nabla_\theta \mathcal{L}_t(\theta)\|^2}_{\substack{\text{Masking Variance}}},
\end{equation}
where $\sigma_{g,t}^2 := \E_{x|w_t=1}[\Var_{y_{1:G}|\pi_{\theta_\mathrm{old}}}(g)]$ is the average action variance within the selected set of problems, and $V_{\mathrm{prob}, t} := \Var_{x|w_t=1}(\gbar)$ is the variance of mean gradients within that same set.
\end{theorem}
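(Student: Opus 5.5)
The plan is a direct second-moment computation. We write $\operatorname{Var}(\hat g_t)=\E\|\hat g_t\|^2-\|\E\hat g_t\|^2$ and evaluate both terms by conditioning on the event $\{w_t(x)=1\}$. Throughout, $x\sim p(x)$ and $y_{1:G}\sim\pi_{\theta_\mathrm{old}}(\cdot|x)$. The quantities $w_t$ and $\beta_t$ are treated as stop-gradient constants, and $\beta_t=\E_x[w_t(x)]=\Pr(w_t=1)$ is the population retention rate. We assume $g$ has finite second moments and that differentiation and expectation can be interchanged, so that $\nabla_\theta\mathcal{L}_t(\theta)=\E[w_t g]/\beta_t$. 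This interchange is the same one underlying the unbiasedness claim $\E[\hat g_t]=\nabla_\theta\mathcal{L}_t(\theta)$ stated before Theorem~\ref{thm:asymptotic_unbiasedness}.

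\emph{Step 1 (mean).} Since $w_t\in\{0,1\}$, we have
\[
\E[w_t g]=\beta_t\,\E[g\mid w_t=1].
\]
Hence
\[
\E[\hat g_t]=\E[g\mid w_t=1]=\E_{x|w_t=1}[\bar g(x;\theta)]=\nabla_\theta\mathcal{L}_t(\theta).
\]

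\emph{Step 2 (second moment).} Because $w_t^2=w_t$,
\[
\E\|\hat g_t\|^2=\frac{\E[w_t\|g\|^2]}{\beta_t^2}=\frac{\E[\|g\|^2\mid w_t=1]}{\beta_t}.
\]
We then apply the law of total variance to the conditional measure $x\mid w_t=1$. The mask depends only on $x$, so conditioning on $w_t=1$ leaves the conditional law of $y_{1:G}$ given $x$ unchanged. This gives
\[
\E[\|g\|^2\mid w_t=1]=\E_{x|w_t=1}\big[\operatorname{Var}_{y}(g)\big]+\operatorname{Var}_{x|w_t=1}(\bar g)+\big\|\E_{x|w_t=1}\bar g\big\|^2 .
\]
By the definitions in the statement and Step 1, the right-hand side equals
\[
\sigma_{g,t}^2+V_{\mathrm{prob},t}+\|\nabla_\theta\mathcal{L}_t(\theta)\|^2 .
\]

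\emph{Step 3 (combine).} Subtracting $\|\E\hat g_t\|^2=\|\nabla_\theta\mathcal{L}_t\|^2$ from the second moment yields
\[
\operatorname{Var}(\hat g_t)=\frac{\sigma_{g,t}^2+V_{\mathrm{prob},t}}{\beta_t}+\Big(\frac{1}{\beta_t}-1\Big)\|\nabla_\theta\mathcal{L}_t(\theta)\|^2 ,
\]
which is exactly \eqref{eq:full_variance_explained}.

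The only delicate point is bookkeeping rather than analysis. One must be explicit that $\beta_t$ is the population mass of the mask and not the empirical $\hat\beta_t$; otherwise $\beta_t$ would itself be random and the clean $1/\beta_t$ factors would fail. One must also note that the nested conditional variance is legitimate because $w_t$ is $x$-measurable. I would state both conventions at the outset of the proof.
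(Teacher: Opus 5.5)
Your proof is correct and rests on the same ingredients as the paper's: idempotence $w_t^2=w_t$, the identity $\E[w_t f(x)]=\beta_t\,\E[f(x)\mid w_t=1]$, the law of total variance over rollouts and prompts, and $\E[\hat g_t]=\nabla_\theta\mathcal{L}_t(\theta)$. The paper orders the steps differently: it applies the law of total variance on the joint law first and then writes the problem term as a second moment minus a squared mean, whereas you compute the second moment first and decompose it under $x\mid w_t=1$. Your ordering also spells out the step the paper compresses into ``expanding the first part''.
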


The proof can be found in Appendix~\ref{app:proof_variance_decomposition}. This decomposition reveals the narrative of VI-CuRL. The curriculum's power lies in making the numerators of the first two terms---$\sigma_{g,t}^2$ \textbf{(action consistency)} and $V_{\mathrm{prob}, t}$ \textbf{(problem homogeneity)}---dramatically smaller. By selecting high-confidence samples that form a lower-uncertainty subset, which we empirically find to be lower-variance, their gradients become more consistent and well-aligned, reducing these numerators. The third term, ``Masking Variance,'' represents the cost of this trade-off, which vanishes as $\beta_t \to 1$.

Beyond this exact identity, Appendix~\ref{app:proof_conf_aware_bound} and Appendix~\ref{app:proof_curriculum_sensitive} contain two conditional bounds. They formalize regimes in which gradient norms or selected-subset variances shrink sufficiently fast as confidence decreases.

\section{Experiments}
\label{sec:experiments}
To validate the effectiveness of VI-CuRL, we conducted experiments on several reasoning tasks. In addition to the main results, we provide additional experiments in Appendix~\ref{app:additional_results}, including learning curve analysis across different reward settings, curriculum control experiments, confidence signal analysis, long-output robustness, and mechanism analysis. Our code is provided in~\url{https://github.com/caixq1996/VI-CURL}.

\subsection{Experimental Setup}
We trained on three small backbones, \textit{Qwen2.5-Math-1.5B}, \textit{DeepSeek-R1-Distill-Qwen-1.5B}, and \textit{Llama-3.2-3B-Instruct}, and one scale-up model \textit{Qwen2.5-Math-7B}. Rewards for the oracle baselines came from a rule-based checker that extracted the final \verb|\boxed{\cdot}| answer and tested numeric/rational equivalence. For verifier-independent methods, we relied solely on intrinsic signals (e.g., majority vote or entropy). We implemented VI-CuRL as a curriculum hook within \texttt{VERL}~\citep{sheng2024hybridflow}, following a PPO-style GRPO recipe. In the math domain, we used DeepScaleR~\citep{DeepScaleR} as the training set, and the evaluation used six verifiable math suites \textsc{AIME-2024}, \textsc{AIME-2025}, \textsc{AMC-2023}, \textsc{MATH500}, \textsc{Minerva MATH}, and \textsc{OlympiadBench}. To evaluate the non-mathematical general knowledge capabilities, we selected the Llama-3.2-3B-Instruct model for these experiments. In the general knowledge domain, we used 
\textsc{WebInstruct-verified}~\citep{ma2025generalreasoner}, and filtered out mathematics prompts from the data, and the evaluation used \textsc{MMLU-Pro}~\citep{wang2024mmlu}, \textsc{GPQA-Diamond}~\citep{rein2024gpqa}, \textsc{TheoremQA}~\citep{chen-etal-2023-theoremqa}, and \textsc{WebInstruct-Validation}~\citep{ma2025generalreasoner}. We reported \emph{Pass@1} and \emph{Pass@8} with 16 samples and averaged over 5 random seeds. Compute was 8$\times$A100 (40GB) GPUs servers. More details can be found in Appendix~\ref{app:implementation}.
\begin{table*}[t]
  \centering
  \small
  \caption{Main Results (Part I): \textbf{Pass@1} performance for \textbf{1.5B parameter models}. The training dataset is DeepScaleR. \colorbox{blue!10}{Blue} rows highlight our method.}
  \label{tab:main_results_part1}
  \resizebox{\linewidth}{!}{
    \begin{tabular}{lrrrrrr|r}
      \toprule
      \rowcolor{gray!15} \textbf{Dataset} & \textbf{AIME} & \textbf{AIME} & \textbf{AMC} & \textbf{Math500} & \textbf{Minerva} & \textbf{Olympiad} & \textbf{Average} \\
      \rowcolor{gray!15} & \textbf{2024} & \textbf{2025} & \textbf{2023} & & \textbf{MATH} & \textbf{Bench} & \\
      \midrule
      \multicolumn{8}{c}{\cellcolor{gray!5}\textbf{Qwen2.5-Math-1.5B}} \\
      \midrule
      Base (No RL) & 6.0 $\pm$ 1.9 & 4.0 $\pm$ 0.6 & 34.2 $\pm$ 0.2 & 47.5 $\pm$ 0.3 & 5.1 $\pm$ 0.4 & 25.1 $\pm$ 0.4 & 20.3 $\pm$ 0.6 \\
      \midrule
      \multicolumn{8}{l}{\textit{Oracle Reward (w. Verifier)}} \\
      No Curriculum & 15.5 $\pm$ 1.7 & 8.5 $\pm$ 0.7 & 51.8 $\pm$ 3.0 & \textbf{70.2 $\pm$ 0.2} & 15.4 $\pm$ 0.5 & 31.4 $\pm$ 0.3 & 32.1 $\pm$ 1.1 \\
      VCRL & 12.9 $\pm$ 2.0 & 5.5 $\pm$ 1.0 & 48.8 $\pm$ 2.0 & 69.2 $\pm$ 0.2 & 18.7 $\pm$ 0.0 & 31.3 $\pm$ 0.2 & 31.1 $\pm$ 0.9 \\
      AdaRFT & 13.2 $\pm$ 1.5 & 6.2 $\pm$ 1.4 & 48.6 $\pm$ 2.0 & 68.7 $\pm$ 0.4 & 20.1 $\pm$ 0.5 & 29.0 $\pm$ 0.3 & 31.0 $\pm$ 1.0 \\
      SENT & 15.6 $\pm$ 1.3 & 8.1 $\pm$ 1.9 & 52.1 $\pm$ 1.1 & 70.0 $\pm$ 1.0 & 14.3 $\pm$ 1.9 & 32.0 $\pm$ 0.7 & 32.0 $\pm$ 1.3 \\
      \rowcolor{blue!10} VI-CuRL & \textbf{15.9 $\pm$ 1.2} & \textbf{8.3 $\pm$ 0.9} & \textbf{52.9 $\pm$ 1.3} & 70.1 $\pm$ 1.5 & \textbf{16.1 $\pm$ 2.8} & \textbf{34.8 $\pm$ 3.9} & \textbf{33.0 $\pm$ 1.9} \\
      \midrule
      \multicolumn{8}{l}{\textit{Verifier-Free: Majority Vote}} \\
      TTRL & 0.0 $\pm$ 0.0 & 0.0 $\pm$ 0.0 & 0.0 $\pm$ 0.0 & 0.2 $\pm$ 0.0 & 0.4 $\pm$ 0.0 & 0.0 $\pm$ 0.0 & 0.1 $\pm$ 0.0 \\
      \rowcolor{blue!10} VI-CuRL & \textbf{15.9 $\pm$ 1.1} & \textbf{8.7 $\pm$ 1.1} & \textbf{53.1 $\pm$ 1.3} & \textbf{72.3 $\pm$ 0.6} & \textbf{21.6 $\pm$ 4.0} & \textbf{33.5 $\pm$ 4.5} & \textbf{34.2 $\pm$ 2.1} \\
      \midrule
      \multicolumn{8}{l}{\textit{Verifier-Free: Entropy}} \\
      RENT & 0.0 $\pm$ 0.0 & 0.0 $\pm$ 0.0 & 10.8 $\pm$ 0.5 & 9.4 $\pm$ 0.1 & 4.6 $\pm$ 0.1 & 3.7 $\pm$ 0.2 & 4.8 $\pm$ 0.1 \\
      EMPO & 5.2 $\pm$ 1.0 & 3.1 $\pm$ 0.8 & 15.4 $\pm$ 2.1 & 18.2 $\pm$ 1.5 & 6.5 $\pm$ 1.2 & 8.1 $\pm$ 0.9 & 9.4 $\pm$ 1.3 \\
      \rowcolor{blue!10} VI-CuRL & \textbf{13.8 $\pm$ 0.8} & \textbf{6.0 $\pm$ 0.7} & \textbf{48.6 $\pm$ 0.9} & \textbf{69.9 $\pm$ 1.9} & \textbf{19.7 $\pm$ 4.0} & \textbf{32.0 $\pm$ 0.3} & \textbf{31.7 $\pm$ 1.4} \\
      \midrule
      \multicolumn{8}{c}{\cellcolor{gray!5}\textbf{DeepSeek-R1-Distill-Qwen-1.5B}} \\
      \midrule
      Base (No RL) & 9.0 $\pm$ 0.6 & 9.4 $\pm$ 0.6 & 41.4 $\pm$ 1.4 & 61.1 $\pm$ 0.1 & 10.5 $\pm$ 0.6 & 22.9 $\pm$ 0.4 & 25.7 $\pm$ 0.6 \\
      \midrule
      \multicolumn{8}{l}{\textit{Oracle Reward (w. Verifier)}} \\
      No Curriculum & 11.0 $\pm$ 0.5 & 10.3 $\pm$ 0.4 & 50.7 $\pm$ 2.0 & 66.2 $\pm$ 0.3 & 10.9 $\pm$ 0.2 & 26.5 $\pm$ 0.3 & 29.3 $\pm$ 0.6 \\
      VCRL & 13.1 $\pm$ 0.9 & 10.8 $\pm$ 0.8 & 52.3 $\pm$ 0.9 & 71.2 $\pm$ 2.2 & 10.8 $\pm$ 1.9 & 33.8 $\pm$ 5.3 & 32.0 $\pm$ 2.0 \\
      AdaRFT & 15.1 $\pm$ 1.0 & \textbf{14.5 $\pm$ 1.6} & 57.0 $\pm$ 2.5 & 71.9 $\pm$ 0.4 & 14.5 $\pm$ 0.5 & 34.9 $\pm$ 0.7 & 34.6 $\pm$ 1.1 \\
      SENT & 15.2 $\pm$ 2.1 & 11.7 $\pm$ 1.6 & 55.7 $\pm$ 1.5 & 72.1 $\pm$ 1.6 & 14.5 $\pm$ 1.5 & 34.2 $\pm$ 1.3 & 33.9 $\pm$ 1.6 \\
      \rowcolor{blue!10} VI-CuRL & \textbf{18.0 $\pm$ 1.4} & 14.0 $\pm$ 1.1 & \textbf{58.5 $\pm$ 1.4} & \textbf{74.4 $\pm$ 3.0} & \textbf{16.3 $\pm$ 3.9} & \textbf{38.4 $\pm$ 3.6} & \textbf{36.6 $\pm$ 2.4} \\
      \midrule
      \multicolumn{8}{l}{\textit{Verifier-Free: Majority Vote}} \\
      TTRL & 1.2 $\pm$ 0.8 & 3.6 $\pm$ 1.2 & 14.6 $\pm$ 2.4 & 26.2 $\pm$ 0.2 & 2.8 $\pm$ 0.2 & 10.2 $\pm$ 0.1 & 9.8 $\pm$ 0.8 \\
      \rowcolor{blue!10} VI-CuRL & \textbf{22.8 $\pm$ 1.4} & \textbf{19.1 $\pm$ 2.0} & \textbf{68.2 $\pm$ 1.7} & \textbf{78.8 $\pm$ 0.4} & \textbf{17.8 $\pm$ 0.1} & \textbf{42.0 $\pm$ 0.5} & \textbf{41.5 $\pm$ 1.0} \\
      \midrule
      \multicolumn{8}{l}{\textit{Verifier-Free: Entropy}} \\
      RENT & 3.0 $\pm$ 0.5 & 5.3 $\pm$ 0.3 & 31.1 $\pm$ 1.3 & 55.3 $\pm$ 1.3 & 12.3 $\pm$ 2.4 & 22.2 $\pm$ 1.7 & 21.5 $\pm$ 1.3 \\
      EMPO & 14.5 $\pm$ 2.3 & 11.8 $\pm$ 1.6 & 36.0 $\pm$ 2.8 & 51.0 $\pm$ 3.4 & 13.5 $\pm$ 2.9 & 14.0 $\pm$ 1.1 & 23.5 $\pm$ 2.1 \\
      \rowcolor{blue!10} VI-CuRL & \textbf{19.2 $\pm$ 1.5} & \textbf{15.6 $\pm$ 0.8} & \textbf{61.3 $\pm$ 0.6} & \textbf{77.6 $\pm$ 0.6} & \textbf{14.6 $\pm$ 0.2} & \textbf{37.0 $\pm$ 0.7} & \textbf{37.5 $\pm$ 0.7} \\
      \bottomrule
    \end{tabular}
  }
  \vspace{-4mm}
\end{table*}

\begin{table*}[t]
  \centering
  \small
  \caption{Main Results (Part II): \textbf{Pass@1} performance for \textbf{Llama3.2-3B-Instruct} and \textbf{Qwen2.5-Math-7B}. The training dataset is DeepScaleR. \colorbox{blue!10}{Blue} rows highlight our method.}
  \label{tab:main_results_part2}
  \resizebox{\linewidth}{!}{
    \begin{tabular}{lrrrrrr|r}
      \toprule
      \rowcolor{gray!15} \textbf{Dataset} & \textbf{AIME} & \textbf{AIME} & \textbf{AMC} & \textbf{Math500} & \textbf{Minerva} & \textbf{Olympiad} & \textbf{Average} \\
      \rowcolor{gray!15} & \textbf{2024} & \textbf{2025} & \textbf{2023} & & \textbf{MATH} & \textbf{Bench} & \\
      \midrule
      \multicolumn{8}{c}{\cellcolor{gray!5}\textbf{Llama3.2-3B-Instruct}} \\
      \midrule
      Base (No RL) & 5.7 $\pm$ 1.2 & 0.6 $\pm$ 0.4 & 17.2 $\pm$ 1.5 & 34.8 $\pm$ 0.7 & 4.8 $\pm$ 0.0 & 12.7 $\pm$ 0.2 & 12.6 $\pm$ 0.7 \\
      \midrule
      \multicolumn{8}{l}{\textit{Oracle Reward (w. Verifier)}} \\
      No Curriculum & 6.0 $\pm$ 1.0 & 0.7 $\pm$ 0.5 & \textbf{24.3 $\pm$ 1.7} & 40.4 $\pm$ 0.4 & 8.2 $\pm$ 0.2 & \textbf{14.3 $\pm$ 0.3} & \textbf{15.7 $\pm$ 0.7} \\
      VCRL & 3.0 $\pm$ 1.1 & 0.4 $\pm$ 0.5 & 17.0 $\pm$ 1.9 & 37.0 $\pm$ 0.3 & 7.8 $\pm$ 0.2 & 13.2 $\pm$ 0.6 & 13.1 $\pm$ 0.8 \\
      AdaRFT & 4.5 $\pm$ 1.6 & 0.7 $\pm$ 0.4 & 15.7 $\pm$ 0.9 & \textbf{44.0 $\pm$ 0.3} & 8.5 $\pm$ 0.1 & 12.8 $\pm$ 0.2 & 14.4 $\pm$ 0.6 \\
      SENT & \textbf{6.5 $\pm$ 0.8} & 0.3 $\pm$ 0.3 & 20.7 $\pm$ 1.8 & 39.9 $\pm$ 0.8 & 8.7 $\pm$ 1.5 & 10.6 $\pm$ 1.0 & 14.5 $\pm$ 1.0 \\
      \rowcolor{blue!10} VI-CuRL & 5.1 $\pm$ 0.7 & \textbf{0.8 $\pm$ 0.6} & 19.8 $\pm$ 1.9 & 43.0 $\pm$ 0.7 & \textbf{9.4 $\pm$ 1.1} & 13.6 $\pm$ 0.3 & 15.3 $\pm$ 0.9 \\
      \midrule
      \multicolumn{8}{l}{\textit{Verifier-Free: Majority Vote}} \\
      TTRL & \textbf{5.7 $\pm$ 1.3} & 0.6 $\pm$ 0.6 & 18.5 $\pm$ 2.3 & 41.9 $\pm$ 1.0 & 8.8 $\pm$ 0.3 & \textbf{15.4 $\pm$ 0.6} & 15.2 $\pm$ 1.0 \\
      \rowcolor{blue!10} VI-CuRL & 5.2 $\pm$ 1.1 & \textbf{0.8 $\pm$ 0.5} & \textbf{19.6 $\pm$ 2.3} & \textbf{46.2 $\pm$ 1.5} & \textbf{10.8 $\pm$ 0.2} & 15.1 $\pm$ 0.6 & \textbf{16.3 $\pm$ 1.0} \\
      \midrule
      \multicolumn{8}{l}{\textit{Verifier-Free: Entropy}} \\
      RENT & 0.6 $\pm$ 0.6 & 0.0 $\pm$ 0.0 & 3.0 $\pm$ 1.1 & 3.2 $\pm$ 0.0 & 2.1 $\pm$ 0.2 & 1.2 $\pm$ 0.1 & 1.7 $\pm$ 0.3 \\
      EMPO & 2.1 $\pm$ 0.5 & 0.2 $\pm$ 0.1 & 8.5 $\pm$ 1.2 & 10.1 $\pm$ 1.1 & 4.5 $\pm$ 0.6 & 5.2 $\pm$ 0.8 & 5.1 $\pm$ 0.7 \\
      \rowcolor{blue!10} VI-CuRL & \textbf{6.2 $\pm$ 1.4} & \textbf{0.8 $\pm$ 0.5} & \textbf{21.6 $\pm$ 1.4} & \textbf{42.6 $\pm$ 0.9} & \textbf{9.7 $\pm$ 0.4} & \textbf{14.7 $\pm$ 0.5} & \textbf{15.9 $\pm$ 0.9} \\
      \midrule
      \multicolumn{8}{c}{\cellcolor{gray!5}\textbf{Qwen2.5-Math-7B}} \\
      \midrule
      Base (No RL) & 12.7 $\pm$ 0.9 & 5.8 $\pm$ 0.7 & 44.4 $\pm$ 2.2 & 52.0 $\pm$ 0.4 & 9.8 $\pm$ 0.7 & 26.4 $\pm$ 0.4 & 25.2 $\pm$ 0.9 \\
      \midrule
      \multicolumn{8}{l}{\textit{Oracle Reward (w. Verifier)}} \\
      No Curriculum & 29.2 $\pm$ 2.5 & 13.5 $\pm$ 1.2 & 62.8 $\pm$ 1.9 & \textbf{78.9 $\pm$ 0.5} & 24.0 $\pm$ 0.9 & 37.8 $\pm$ 0.5 & 41.0 $\pm$ 1.2 \\
      VCRL & 22.0 $\pm$ 1.5 & 11.1 $\pm$ 0.7 & 57.0 $\pm$ 1.6 & 75.1 $\pm$ 0.5 & 22.1 $\pm$ 0.7 & 35.6 $\pm$ 0.5 & 37.1 $\pm$ 0.9 \\
      AdaRFT & 26.9 $\pm$ 0.6 & 12.9 $\pm$ 0.7 & 63.0 $\pm$ 0.8 & 76.6 $\pm$ 1.9 & 22.5 $\pm$ 6.9 & 39.1 $\pm$ 2.6 & 40.1 $\pm$ 2.3 \\
      SENT & 26.2 $\pm$ 0.7 & 10.8 $\pm$ 0.9 & 53.2 $\pm$ 1.3 & 69.2 $\pm$ 1.2 & 22.4 $\pm$ 2.0 & 31.2 $\pm$ 1.0 & 35.5 $\pm$ 1.2 \\
      \rowcolor{blue!10} VI-CuRL & \textbf{29.9 $\pm$ 0.9} & \textbf{13.9 $\pm$ 1.4} & \textbf{63.8 $\pm$ 0.5} & 78.8 $\pm$ 0.6 & \textbf{26.3 $\pm$ 5.5} & \textbf{45.0 $\pm$ 3.0} & \textbf{42.9 $\pm$ 2.0} \\
      \midrule
      \multicolumn{8}{l}{\textit{Verifier-Free: Majority Vote}} \\
      TTRL & 29.0 $\pm$ 1.5 & 14.0 $\pm$ 1.0 & 64.8 $\pm$ 1.4 & \textbf{80.5 $\pm$ 0.4} & 22.8 $\pm$ 0.3 & 39.5 $\pm$ 0.4 & 41.8 $\pm$ 0.8 \\
      \rowcolor{blue!10} VI-CuRL & \textbf{33.9 $\pm$ 0.7} & \textbf{14.4 $\pm$ 0.9} & \textbf{65.4 $\pm$ 0.4} & 79.8 $\pm$ 0.5 & \textbf{25.9 $\pm$ 0.6} & \textbf{41.2 $\pm$ 0.4} & \textbf{43.4 $\pm$ 0.6} \\
      \midrule
      \multicolumn{8}{l}{\textit{Verifier-Free: Entropy}} \\
      RENT & 13.7 $\pm$ 0.6 & 7.2 $\pm$ 0.4 & 55.7 $\pm$ 1.1 & 74.4 $\pm$ 2.1 & \textbf{24.8 $\pm$ 4.5} & 32.9 $\pm$ 4.4 & 34.8 $\pm$ 2.2 \\
      EMPO & 18.5 $\pm$ 1.5 & 9.4 $\pm$ 0.8 & 58.2 $\pm$ 2.0 & 75.1 $\pm$ 1.2 & 21.0 $\pm$ 2.5 & 34.5 $\pm$ 1.8 & 36.1 $\pm$ 1.6 \\
      \rowcolor{blue!10} VI-CuRL & \textbf{25.0 $\pm$ 1.8} & \textbf{13.1 $\pm$ 1.1} & \textbf{63.0 $\pm$ 2.8} & \textbf{76.8 $\pm$ 0.6} & 19.1 $\pm$ 5.0 & \textbf{37.0 $\pm$ 0.7} & \textbf{39.0 $\pm$ 2.0} \\
      \bottomrule
    \end{tabular}
  }
  \vspace{-4mm}
\end{table*}

\begin{table*}[t]
  \centering
  \small
  \caption{General Knowledge Results: \textbf{Pass@1} performance for \textbf{Llama3.2-3B-Instruct}. The training dataset is WebInstruct-Verified. \colorbox{blue!10}{Blue} rows highlight our method.}
  \label{tab:general_results}
  \resizebox{\linewidth}{!}{
    \begin{tabular}{lrrrr|r}
      \toprule
      \rowcolor{gray!15} \textbf{Dataset} & \textbf{MMLU-Pro} & \textbf{GPQA-Diamond} & \textbf{TheoremQA} & \textbf{WebInstruct-Val} & \textbf{Average} \\
      \midrule
      Base (No RL) & 34.8 $\pm$ 0.4 & 25.3 $\pm$ 0.3 & 5.1 $\pm$ 0.2 & 20.6 $\pm$ 0.4 & 21.5 $\pm$ 0.3 \\
      \midrule
      \multicolumn{6}{l}{\textit{Oracle Reward (w. Verifier)}} \\
      No Curriculum & 36.2 $\pm$ 0.5 & 26.3 $\pm$ 0.2 & 10.4 $\pm$ 0.4 & 26.0 $\pm$ 0.5 & 24.7 $\pm$ 0.4 \\
      VCRL & 36.5 $\pm$ 0.6 & 26.6 $\pm$ 0.2 & 11.4 $\pm$ 0.4 & 22.6 $\pm$ 0.5 & 24.3 $\pm$ 0.4 \\
      AdaRFT & 36.9 $\pm$ 0.3 & 25.3 $\pm$ 0.5 & 11.5 $\pm$ 0.6 & 22.4 $\pm$ 0.3 & 24.0 $\pm$ 0.4 \\
      SENT & 37.5 $\pm$ 0.7 & 25.8 $\pm$ 0.4 & 9.0 $\pm$ 0.3 & \textbf{28.6 $\pm$ 0.6} & 25.2 $\pm$ 0.5 \\
      \rowcolor{blue!10} VI-CuRL & \textbf{37.8 $\pm$ 0.5} & \textbf{27.8 $\pm$ 0.6} & \textbf{13.2 $\pm$ 0.4} & 28.4 $\pm$ 0.5 & \textbf{26.8 $\pm$ 0.5} \\
      \midrule
      \multicolumn{6}{l}{\textit{Verifier-Free: Majority Vote}} \\
      TTRL & 24.5 $\pm$ 0.3 & 24.3 $\pm$ 0.2 & 1.5 $\pm$ 0.2 & 0.6 $\pm$ 0.2 & 12.7 $\pm$ 0.2 \\
      \rowcolor{blue!10} VI-CuRL & \textbf{35.6 $\pm$ 0.5} & \textbf{26.7 $\pm$ 0.4} & \textbf{9.5 $\pm$ 0.3} & \textbf{21.4 $\pm$ 0.5} & \textbf{23.3 $\pm$ 0.4} \\
      \midrule
      \multicolumn{6}{l}{\textit{Verifier-Free: Entropy}} \\
      RENT & 0.0 $\pm$ 0.0 & 8.3 $\pm$ 0.4 & 0.0 $\pm$ 0.0 & 0.0 $\pm$ 0.0 & 2.1 $\pm$ 0.3 \\
      EMPO & 10.3 $\pm$ 0.5 & 17.5 $\pm$ 0.6 & 0.0 $\pm$ 0.0 & 11.0 $\pm$ 0.4 & 9.7 $\pm$ 0.4 \\
      \rowcolor{blue!10} VI-CuRL & \textbf{35.6 $\pm$ 0.4} & \textbf{25.7 $\pm$ 0.5} & \textbf{9.5 $\pm$ 0.3} & \textbf{21.4 $\pm$ 0.4} & \textbf{23.1 $\pm$ 0.4} \\
      \bottomrule
    \end{tabular}
  }
  \vspace{-2mm}
\end{table*}

\vspace{-2mm}
\paragraph{Baselines} We evaluated VI-CuRL against two categories of baselines:
\begin{itemize}
    \vspace{-2mm}
    \item \textbf{Curriculum RL with Ground Truth}: We compared against \textbf{VCRL} \citep{jiang2025vcrl}, \textbf{AdaRFT} \citep{chang2024adarft}, and \textbf{SENT} \citep{cao2025efficient}. They were curriculum-based RLVR algorithms. While VCRL and AdaRFT fundamentally relied on access to ground-truth labels (verifiers) to guide the curriculum, unlike VI-CuRL which is verifier-independent. SENT constructs an offline semantic-entropy curriculum. In contrast, VI-CuRL builds the curriculum online at each policy update, so the curriculum explicitly trades transient selection bias for lower gradient variance and anneals back to the full training distribution as $\beta_t \to 1$.
    \vspace{-1mm}
    \item \textbf{Standard RL (Oracle)}: Standard RLVR training using the true verifier reward. This served as a topline for performance.
    \vspace{-1mm}
    \item \textbf{TTRL}: A verifier-free baseline where the reward was determined by self-consistency (majority voting) among rollout samples~\citep{ttrl2025test}.
    \vspace{-1mm}
    \item \textbf{RENT}: A verifier-free baseline that used the prediction entropy of the model's outputs as the reward signal (penalizing high entropy)~\citep{rent2025maximizing}. VI-CuRL used RENT's reward signal in the verifier-free entropy setting.
    \vspace{-1mm}
    \item \textbf{EMPO}: Entropy Minimization Policy Optimization \citep{empo2025right}, an entropy-based verifier-free baseline that combines entropy minimization with policy optimization.
\end{itemize}

\subsection{Main Results}

\textbf{Mathematical Reasoning Performance.} Table~\ref{tab:main_results_part1} and Table~\ref{tab:main_results_part2} summarize the mathematical reasoning performance of our method compared to standard baselines. We evaluated the models across two distinct settings: Oracle Reward (with verifier) and Verifier-Free (intrinsic reward). In the Oracle setting, VI-CuRL consistently matches or outperforms standard curriculum RL methods (such as VCRL, AdaRFT, and SENT) across all model scales, from 1.5B to 7B parameters. Notably, on harder benchmarks like AIME and OlympiadBench, VI-CuRL achieves significant gains over the baseline (e.g., reaching 55.5\% vs 53.8\% on AIME2024 under Verifier-Free Majority Vote for Qwen2.5-7B), demonstrating that confidence-based curriculum selection can stabilize updates and improve the optimization trajectory even when external rewards are available. The benefits of VI-CuRL become critically pronounced in the Verifier-Free settings (Majority Vote and Entropy). While standard verifier-free methods frequently succumb to high variance and suffer from severe model collapse (often degrading to near $0.0\%$ accuracy on harder datasets), VI-CuRL effectively stabilizes the policy updates and prevents this degradation. By dynamically filtering out high-variance samples in the early stages, VI-CuRL recovers performance comparable to the Oracle topline across all model scales. This phenomenon highlights that managing gradient variance via our annealed confidence curriculum is absolutely essential for stable learning from noisy, intrinsic signals. Additional results for Pass@8 are detailed in Appendix~\ref{app:additional_results}.

\textbf{General Knowledge Capabilities.} Table~\ref{tab:general_results} extends our evaluation to general knowledge benchmarks using the Llama3.2-3B-Instruct model, assessing whether VI-CuRL generalizes beyond formal mathematical reasoning domains. The results demonstrate that VI-CuRL maintains consistent performance advantages across a diverse array of tasks. Similar to the math domain, VI-CuRL establishes strong performance in the Oracle setting (achieving an average of 29.3\% compared to 24.7\% for baselines with/without curriculum). More importantly, in the challenging Verifier-Free settings, VI-CuRL prevents the catastrophic policy collapse universally observed in verifier-free baselines. These findings provide compelling evidence that the variance-bias trade-off curriculum generalizes seamlessly to broad, open-ended applications, effectively stabilizing training and maintaining policy integrity without relying on domain-specific structures or ground-truth verifiers.

\subsection{Does VI-CuRL Really Reduce Variance?}
A key theoretical advantage of VI-CuRL is variance reduction through curriculum selection. According to Theorem~\ref{thm:variance_decomposition}, the variance of the curriculum-selected estimator depends on:

\textbf{Action Variance:} $\sigma_{g,t}^2 := \E_{x|w_t=1}[\Var_{y|x}(g)]$: the expected gradient variance due to sampling different rollouts for the \emph{same} prompt within the selected set.

\textbf{Problem Variance:} $V_{\mathrm{prob},t} := \Var_{x|w_t=1}(\bar{g})$: the variance of mean gradients \emph{across} different prompts within the selected set.

To empirically validate this, we estimated these variance components at each training step $t$ using Monte-Carlo approximations. For $K$ independent rollouts per-prompt,
\begin{equation*}
    \hat{\sigma}^2_{g,t} = \frac{1}{|\mathcal{S}_t|} \sum_{x \in \mathcal{S}_t} \widehat{\Var}_{y|x}(g(x,y)), \quad \hat{V}_{\mathrm{prob},t} = \widehat{\Var}_{x \in \mathcal{S}_t}(\bar{g}(x)),
\end{equation*}
where $\mathcal{S}_t$ denotes either the curriculum-selected subset (``kept'') or the full dataset (``full'').

\textbf{Validating Theorem~\ref{thm:variance_decomposition}.} The theorem predicted that the curriculum-selected subset $\mathcal{S}_t^{\text{kept}}$ should have lower variance than the full dataset $\mathcal{S}_t^{\text{full}}$ when $\beta_t < 1$. To test this, we computed the variance ratio:
\begin{equation*}
    \mathrm{Ratio}_{\sigma,t} = \frac{\hat{\sigma}^2_{g,t}(\text{kept})}{\hat{\sigma}^2_{g,t}(\text{full})}, \quad \mathrm{Ratio}_{V_t} = \frac{\hat{V}_{\mathrm{prob},t}(\text{kept})}{\hat{V}_{\mathrm{prob},t}(\text{full})}.
\end{equation*}
If the curriculum effectively reduces variance, we expected $\mathrm{Ratio}_{\sigma,t}, \mathrm{Ratio}_{V_t} < 1$ when $\beta_t < 1$, and $\mathrm{Ratio}_{\sigma,t}, \mathrm{Ratio}_{V_t} \approx 1$ when $\beta_t \to 1$.

\begin{figure*}[t!]
    \centering
    \setlength{\tabcolsep}{2pt}
    \begin{tabular}{cc}
        \includegraphics[width=0.48\textwidth]{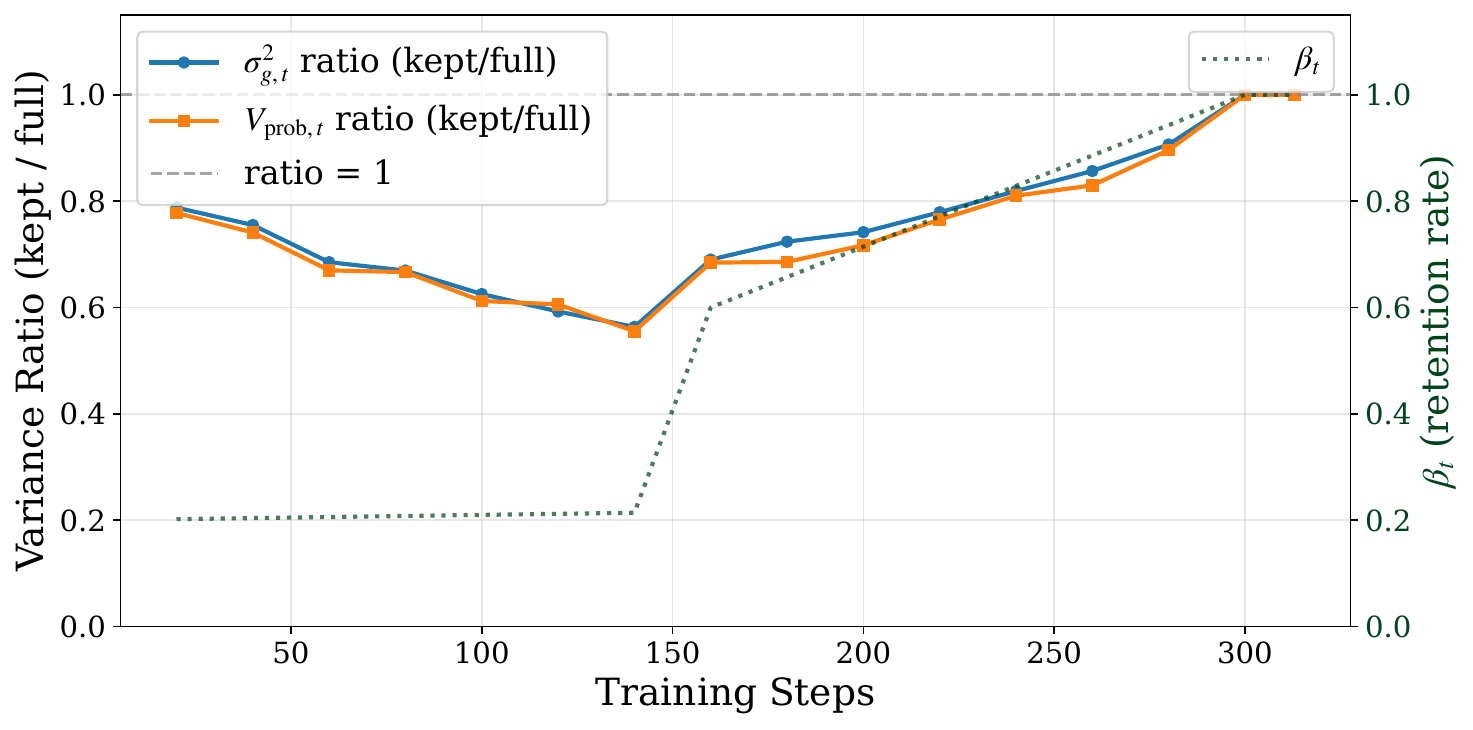} &
        \includegraphics[width=0.48\textwidth]{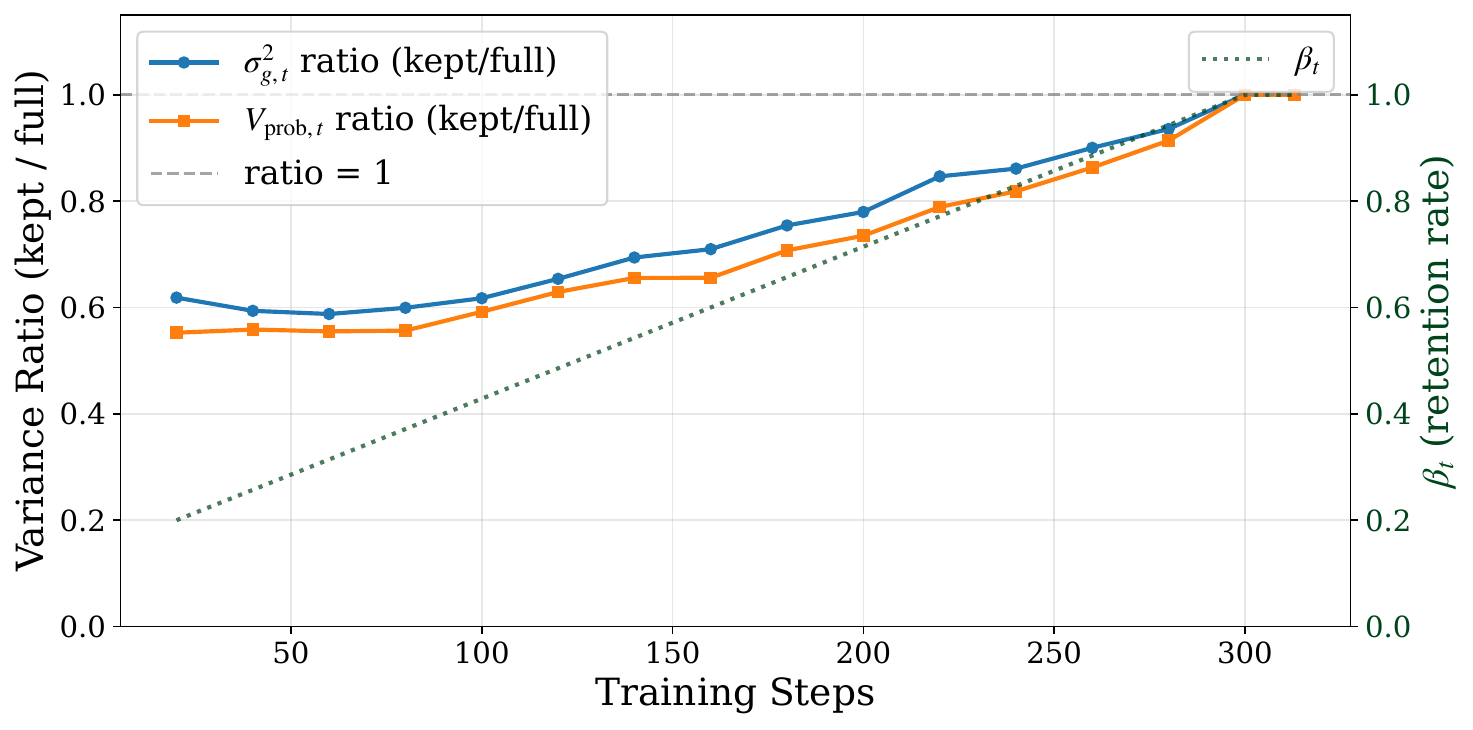} \\
        (a) Qwen2.5-Math-1.5B & (b) DeepSeek-R1-Distill-Qwen-1.5B
    \end{tabular}
    \caption{\textbf{Variance Ratio Analysis.} Variance ratio (kept/full) for Action Variance (\textcolor[HTML]{1f77b4}{$\sigma_{g,t}^2$}) and Problem Variance (\textcolor[HTML]{ff7f0e}{$V_{\mathrm{prob},t}$}) alongside the retention rate  (\textcolor[HTML]{00441b}{$\beta_t$}, right axis). When $\beta_t < 1$, both ratios are consistently below 1, confirming that curriculum selection reduces variance as predicted by Theorem~\ref{thm:variance_decomposition}. As $\beta_t \to 1$, the ratios approach 1 since the selected set converges to the full dataset.}
    \vspace{-3mm}
    \label{fig:variance_ratio}
\end{figure*}

Figure~\ref{fig:variance_ratio} presented the variance ratios for both Qwen2.5-Math-1.5B and DeepSeek-R1-Distill-Qwen-1.5B under the Oracle (ground-truth verifier) setting. We can oberve that when $\beta_t \approx 0.2$ (early training), the variance ratios ranged from $0.55$ to $0.80$, indicating a $20$--$45\%$ reduction in variance compared to using the full dataset. Meanwhile, as $\beta_t$ increased towards $1$, the variance ratios monotonically increased toward $1$, as predicted by the theorem. Besides, action Variance and Problem Variance exhibited similar reduction patterns, confirming that the curriculum simultaneously reduced sampling variance (within prompts) and heterogeneity (across prompts). Therefore, the curriculum selection strategy successfully reduced variance as predicted by Theorem~\ref{thm:variance_decomposition}. Additional analysis of the absolute variance values is provided in Appendix~\ref{app:additional_results}.

\section{Conclusion}\label{sec:conclusion}
\vspace{-2mm}
In this work, we presented \textbf{Verifier-Independent Curriculum Reinforcement Learning (VI-CuRL)}, a theoretically grounded framework designed to stabilize Reinforcement Learning with Verifiable Rewards (RLVR) especially when reliable verifiers are absent. Recognizing that the primary bottleneck in RL is the extreme problem-level variance, VI-CuRL introduces a dynamic, confidence-based curriculum that allows the policy to ``start low-uncertainty'' by prioritizing high-confidence generations. We formalized this strategy as a principled bias-variance trade-off, proving that our curriculum estimator significantly reduces variance bounds during early training while guaranteeing asymptotic convergence to the true objective. Empirically, VI-CuRL acts as a robust stabilizer for Group Relative Policy Optimization (GRPO), outperforming verifier-based/free baselines across math and general knowledge benchmarks. By effectively filtering out high-variance gradients through intrinsic confidence, VI-CuRL paves the way for scalable, self-improving reasoning models in open-ended domains where ground-truth supervision is scarce. So far VI-CuRL has been verified successfully on reasoning benchmarks, and we plan to extend it to other RL domains in the future.

\bibliography{icml2026_v3}
\bibliographystyle{icml2026}

\newpage

\appendix
\onecolumn

\section{Related Work}
\paragraph{LLM Reasoning and RLVR.}
Recent advances in LLM reasoning have been significantly propelled by RL. Chain-of-thought prompting \citep{wei2022chain} and self-consistency \citep{wang2022self} established reasoning as a multi-step process amenable to RL optimization. Early iterative approaches like Self-Taught Reasoner (STaR) \citep{zelikman2022star} demonstrated that models could self-improve by bootstrapping from their own verified correct solutions. \citet{shao2024deepseekmath} introduced GRPO to achieve strong mathematical reasoning with improved sample efficiency, while DeepSeek-R1 \citep{deepseek2025r1} demonstrated that sophisticated behaviors can emerge purely from RL without supervised fine-tuning. The RLVR paradigm \citep{zhou2025reinforcing} formalized this approach. Parallel work on Process Reward Models (PRMs) \citep{lightman2024letsverify} has explored dense, step-level supervision to guide reasoning, though this often requires expensive human annotation. Recent verifier-free approaches have also gained traction: Reinforcement Learning via Entropy Minimization (RENT) \citep{rent2025maximizing} and Entropy Minimization Policy Optimization (EMPO) \citep{empo2025right} leverage entropy minimization as a fully unsupervised reward signal, while the Entropy Mechanism \citep{primerl2025entropy} stabilizes training by managing policy collapse. Test-Time Reinforcement Learning (TTRL) \citep{ttrl2025test} further explores converting test-time compute (e.g., majority voting) into training signals. However, most of these methods fundamentally rely on access to ground-truth verifiers or correctness labels, while verifier-free methods tend to be unstable and fragile during training. Our VI-CuRL addresses the complementary challenge: enabling stable RL training stable and independent from verifiers, using only the model's intrinsic confidence as a learning signal.

\paragraph{Bias-Variance Trade-off in Policy Gradients.}
The high variance of policy gradient estimators has long been recognized as a central challenge in RL \citep{Williams1992}. \citet{sutton1999policy} established the theoretical foundations for variance reduction through baselines. Modern LLM reasoning pipelines typically employ Group Relative Policy Optimization (GRPO) \citep{shao2024deepseekmath}, which estimates a baseline from the mean reward of a group of sampled outputs. While GRPO effectively reduces action-level variance (stochasticity from sampling actions), it does not address problem-level variance. In verifier-independent settings, the problem-level variance can be extreme due to the lack of accurate reward signals, necessitating a curriculum approach like VI-CuRL to curate the training distribution itself.

\paragraph{Curriculum Learning.}
Curriculum learning, first formalized by~\citet{bengio2009curriculum}, proposes training models on progressively harder examples to improve convergence and generalization. Self-paced learning \citep{kumar2010self} extends this by allowing the model itself to determine example difficulty based on training loss, with self-paced curriculum learning \citep{jiang2015self} unifying both paradigms. In the context of RL for LLMs, AdaRFT \citep{chang2024adarft} dynamically adjusts problem difficulty based on recent reward signals, while VCRL \citep{jiang2025vcrl} uses reward variance to identify moderately difficult samples for training. However, both AdaRFT and VCRL fundamentally require ground-truth reward signals from external verifiers to compute their curriculum metrics. Our VI-CuRL is verifier-independent: it uses the model's confidence as an intrinsic proxy for variance control, enabling curriculum-based training in domains where verifiers are unavailable. SENT~\citep{cao2025efficient} organizes training data with a semantic-entropy curriculum computed offline from the initial policy. However, the curriculum ranking induced by the initial policy may become mismatched after RL updates shift the policy distribution. In contrast, VI-CuRL computes prompt-level predictive confidence online from the current behavior policy and anneals an importance-weighted selection mask, allowing the curriculum to adaptively trade transient bias for variance reduction during optimization. Furthermore, we provide rigorous theoretical guarantees, including asymptotic unbiasedness and novel variance decomposition bounds, to justify the bias-variance control during VI-CuRL training.

\begin{table*}[t]
  \centering
  \small
  \caption{Comparison with \textbf{Oracle (w. Verifier)} reward. Mean and standard deviation \textbf{(pass@8)} with 16 samples and 5 random seeds. We compare VI-CuRL against Curriculum baselines (VCRL, AdaRFT, SENT) and No Curriculum. \colorbox{blue!10}{Blue} rows highlight our method.}
  \label{tab:main_results_oracle_pass@8}
  \resizebox{0.95\linewidth}{!}{
    \begin{tabular}{lrrrrrr|r}
      \toprule
      \rowcolor{gray!15} \textbf{Dataset} & \textbf{AIME} & \textbf{AIME} & \textbf{AMC} & \textbf{Math500} & \textbf{Minerva} & \textbf{Olympiad} & \textbf{Average} \\
      \rowcolor{gray!15} & \textbf{2024} & \textbf{2025} & \textbf{2023} & & \textbf{MATH} & \textbf{Bench} & \\
      \midrule
      \multicolumn{8}{c}{\cellcolor{gray!5}\textbf{Qwen2.5-Math-1.5B}} \\
      \midrule
      Base (No RL) & 32.4 $\pm$ 0.5 & 16.7 $\pm$ 0.7 & 79.3 $\pm$ 1.4 & 62.8 $\pm$ 1.3 & 15.5 $\pm$ 1.1 & 31.3 $\pm$ 0.8 & 39.7 $\pm$ 1.0 \\
      \midrule
      \multicolumn{8}{l}{\textit{Oracle (w. Verifier)}} \\
      No Curriculum & 33.3 $\pm$ 0.4 & 25.0 $\pm$ 1.0 & 79.2 $\pm$ 0.9 & 70.9 $\pm$ 1.4 & 16.2 $\pm$ 0.9 & 34.3 $\pm$ 0.7 & 43.1 $\pm$ 0.9 \\
      VCRL & 32.2 $\pm$ 0.5 & 19.8 $\pm$ 1.0 & 79.0 $\pm$ 0.9 & 70.2 $\pm$ 1.1 & 18.7 $\pm$ 1.0 & 34.0 $\pm$ 1.1 & 42.3 $\pm$ 0.9 \\
      AdaRFT & 32.1 $\pm$ 0.4 & 17.9 $\pm$ 1.5 & 77.5 $\pm$ 1.0 & 72.4 $\pm$ 0.9 & \textbf{21.7 $\pm$ 0.9} & 34.4 $\pm$ 0.5 & 42.7 $\pm$ 0.9 \\
      SENT & 35.0 $\pm$ 0.4 & \textbf{26.7 $\pm$ 0.4} & \textbf{82.5 $\pm$ 0.7} & \textbf{71.4 $\pm$ 0.7} & 15.3 $\pm$ 1.4 & 33.2 $\pm$ 0.9 & 44.0 $\pm$ 0.8 \\
      \rowcolor{blue!10} VI-CuRL & \textbf{35.2 $\pm$ 1.4} & 24.6 $\pm$ 0.2 & \textbf{82.5 $\pm$ 1.2} & 71.3 $\pm$ 1.8 & 17.1 $\pm$ 2.8 & \textbf{37.9 $\pm$ 4.0} & \textbf{44.8 $\pm$ 1.9} \\
      \midrule
      \multicolumn{8}{c}{\cellcolor{gray!5}\textbf{DeepSeek-R1-Distill-Qwen-1.5B}} \\
      \midrule
      Base (No RL) & 28.7 $\pm$ 1.0 & 23.3 $\pm$ 0.8 & 77.5 $\pm$ 0.2 & 67.6 $\pm$ 1.3 & 14.0 $\pm$ 0.3 & 29.8 $\pm$ 0.4 & 40.1 $\pm$ 0.7 \\
      \midrule
      \multicolumn{8}{l}{\textit{Oracle (w. Verifier)}} \\
      No Curriculum & 32.9 $\pm$ 0.3 & 25.7 $\pm$ 1.2 & 79.3 $\pm$ 0.7 & 71.2 $\pm$ 1.3 & 15.5 $\pm$ 0.3 & 34.7 $\pm$ 1.4 & 43.2 $\pm$ 0.9 \\
      VCRL & 31.9 $\pm$ 0.6 & 28.1 $\pm$ 0.5 & 79.4 $\pm$ 1.3 & 72.9 $\pm$ 2.4 & 10.8 $\pm$ 2.3 & 38.0 $\pm$ 5.4 & 43.5 $\pm$ 2.1 \\
      AdaRFT & 35.4 $\pm$ 1.4 & 29.2 $\pm$ 1.3 & 84.7 $\pm$ 0.3 & 78.6 $\pm$ 0.7 & 17.6 $\pm$ 1.4 & 43.3 $\pm$ 1.5 & 48.1 $\pm$ 1.1 \\
      SENT & 35.0 $\pm$ 0.7 & 25.4 $\pm$ 0.8 & 81.9 $\pm$ 0.3 & \textbf{79.6 $\pm$ 0.6} & \textbf{22.4 $\pm$ 1.5} & \textbf{45.1 $\pm$ 0.7} & 48.2 $\pm$ 0.8 \\
      \rowcolor{blue!10} VI-CuRL & \textbf{41.0 $\pm$ 2.3} & \textbf{31.3 $\pm$ 2.1} & \textbf{85.3 $\pm$ 1.5} & 75.7 $\pm$ 2.8 & 16.6 $\pm$ 4.2 & 41.9 $\pm$ 3.6 & \textbf{48.6 $\pm$ 2.8} \\
      \midrule
      \multicolumn{8}{c}{\cellcolor{gray!5}\textbf{Llama3.2-3B-Instruct}} \\
      \midrule
      Base (No RL) & 18.8 $\pm$ 1.1 & 3.3 $\pm$ 0.2 & 54.7 $\pm$ 1.4 & 40.8 $\pm$ 0.8 & 4.8 $\pm$ 1.0 & 16.4 $\pm$ 0.5 & 23.1 $\pm$ 0.8 \\
      \midrule
      \multicolumn{8}{l}{\textit{Oracle (w. Verifier)}} \\
      No Curriculum & \textbf{22.4 $\pm$ 1.0} & 4.8 $\pm$ 1.4 & \textbf{57.5 $\pm$ 1.5} & 42.5 $\pm$ 1.1 & 8.4 $\pm$ 0.5 & 17.8 $\pm$ 1.1 & 25.6 $\pm$ 1.1 \\
      VCRL & 16.1 $\pm$ 0.7 & 3.2 $\pm$ 0.2 & 51.5 $\pm$ 1.2 & 40.8 $\pm$ 0.5 & 8.1 $\pm$ 0.8 & 17.5 $\pm$ 0.5 & 22.8 $\pm$ 0.7 \\
      AdaRFT & 18.8 $\pm$ 1.1 & 5.0 $\pm$ 0.8 & 47.5 $\pm$ 1.5 & 45.6 $\pm$ 0.2 & 8.8 $\pm$ 1.0 & 15.3 $\pm$ 1.4 & 23.5 $\pm$ 1.0 \\
      SENT & 18.8 $\pm$ 0.7 & 2.1 $\pm$ 0.9 & 52.8 $\pm$ 1.1 & 48.0 $\pm$ 1.5 & \textbf{14.6 $\pm$ 0.4} & \textbf{20.1 $\pm$ 1.0} & 26.0 $\pm$ 0.9 \\
      \rowcolor{blue!10} VI-CuRL & 21.7 $\pm$ 0.3 & \textbf{5.4 $\pm$ 0.5} & 56.6 $\pm$ 1.5 & \textbf{49.6 $\pm$ 1.4} & 14.0 $\pm$ 0.6 & 19.4 $\pm$ 0.5 & \textbf{27.8 $\pm$ 0.8} \\
      \midrule
      \multicolumn{8}{c}{\cellcolor{gray!5}\textbf{Qwen2.5-Math-7B}} \\
      \midrule
      Base (No RL) & 36.2 $\pm$ 1.2 & 23.3 $\pm$ 1.0 & 81.9 $\pm$ 1.3 & 66.6 $\pm$ 0.5 & 11.8 $\pm$ 0.6 & 33.9 $\pm$ 0.7 & 42.3 $\pm$ 0.9 \\
      \midrule
      \multicolumn{8}{l}{\textit{Oracle (w. Verifier)}} \\
      No Curriculum & 50.0 $\pm$ 0.1 & 28.7 $\pm$ 0.8 & 83.4 $\pm$ 0.6 & 82.4 $\pm$ 1.2 & 28.3 $\pm$ 1.0 & 45.8 $\pm$ 1.3 & 53.1 $\pm$ 0.8 \\
      VCRL & 48.3 $\pm$ 1.4 & 23.3 $\pm$ 0.5 & 84.7 $\pm$ 0.6 & 79.8 $\pm$ 1.5 & 24.3 $\pm$ 0.4 & 41.0 $\pm$ 0.2 & 50.2 $\pm$ 0.8 \\
      AdaRFT & 50.0 $\pm$ 3.1 & 27.9 $\pm$ 3.0 & 85.0 $\pm$ 1.5 & 78.0 $\pm$ 2.0 & 22.8 $\pm$ 6.6 & 44.5 $\pm$ 1.1 & 51.4 $\pm$ 2.9 \\
      SENT & 15.4 $\pm$ 0.1 & 17.5 $\pm$ 0.7 & 64.7 $\pm$ 0.8 & \textbf{85.8 $\pm$ 0.2} & \textbf{35.7 $\pm$ 0.7} & \textbf{50.2 $\pm$ 0.8} & 44.9 $\pm$ 0.5 \\
      \rowcolor{blue!10} VI-CuRL & \textbf{52.5 $\pm$ 1.2} & \textbf{30.8 $\pm$ 0.2} & \textbf{88.8 $\pm$ 0.7} & 82.6 $\pm$ 0.4 & 28.3 $\pm$ 0.7 & 48.5 $\pm$ 1.9 & \textbf{55.3 $\pm$ 0.8} \\
      \bottomrule
    \end{tabular}
  }
\end{table*}

\begin{table*}[t]
  \centering
  \small
  \caption{Comparison in \textbf{Verifier-Free} settings, part 1. Mean and standard deviation \textbf{(pass@8)} with 16 samples and 5 random seeds. We compare VI-CuRL against baselines using \textbf{Majority Vote} and \textbf{Entropy} as intrinsic reward signals. Note that VCRL, AdaRFT, and SENT are excluded as they require ground-truth verifiers or offline initial policy distribution. \colorbox{blue!10}{Blue} rows highlight our method.}
  \label{tab:main_results_independent_pass@8_part1}
  \resizebox{\linewidth}{!}{
    \begin{tabular}{lrrrrrr|r}
      \toprule
      \rowcolor{gray!15} \textbf{Dataset} & \textbf{AIME} & \textbf{AIME} & \textbf{AMC} & \textbf{Math500} & \textbf{Minerva} & \textbf{Olympiad} & \textbf{Average} \\
      \rowcolor{gray!15} & \textbf{2024} & \textbf{2025} & \textbf{2023} & & \textbf{MATH} & \textbf{Bench} & \\
      \midrule
      \multicolumn{8}{c}{\cellcolor{gray!5}\textbf{Qwen2.5-Math-1.5B}} \\
      \midrule
      Base (No RL) & 32.4 $\pm$ 0.5 & 16.7 $\pm$ 0.7 & 79.3 $\pm$ 1.4 & 62.8 $\pm$ 1.3 & 15.5 $\pm$ 1.1 & 31.3 $\pm$ 0.8 & 39.7 $\pm$ 1.0 \\
      \midrule
      \multicolumn{8}{l}{\textit{Majority Vote (w/o. Verifier)}} \\
      TTRL & 0.0 $\pm$ 1.1 & 0.0 $\pm$ 1.5 & 0.0 $\pm$ 1.5 & 0.2 $\pm$ 1.3 & 0.4 $\pm$ 1.2 & 0.0 $\pm$ 0.5 & 0.1 $\pm$ 1.2 \\
      \rowcolor{blue!10} VI-CuRL & \textbf{35.0 $\pm$ 0.2} & \textbf{26.1 $\pm$ 2.0} & \textbf{81.2 $\pm$ 0.6} & \textbf{73.9 $\pm$ 1.3} & \textbf{22.1 $\pm$ 4.6} & \textbf{37.1 $\pm$ 1.0} & \textbf{45.9 $\pm$ 1.6} \\
      \midrule
      \multicolumn{8}{l}{\textit{Entropy (w/o. Verifier)}} \\
      RENT & 0.0 $\pm$ 0.7 & 0.0 $\pm$ 0.6 & 12.9 $\pm$ 0.7 & 9.6 $\pm$ 0.9 & 4.8 $\pm$ 1.1 & 3.9 $\pm$ 1.1 & 5.2 $\pm$ 0.8 \\
      EMPO & 12.5 $\pm$ 1.5 & 8.2 $\pm$ 1.1 & 25.4 $\pm$ 2.5 & 28.6 $\pm$ 2.1 & 10.2 $\pm$ 1.8 & 14.5 $\pm$ 1.5 & 16.6 $\pm$ 1.8 \\
      \rowcolor{blue!10} VI-CuRL & \textbf{33.1 $\pm$ 0.1} & \textbf{21.4 $\pm$ 1.7} & \textbf{81.7 $\pm$ 0.5} & \textbf{71.7 $\pm$ 0.4} & \textbf{20.2 $\pm$ 1.0} & \textbf{35.5 $\pm$ 0.8} & \textbf{44.0 $\pm$ 0.8} \\
      \midrule
      \multicolumn{8}{c}{\cellcolor{gray!5}\textbf{DeepSeek-R1-Distill-Qwen-1.5B}} \\
      \midrule
      Base (No RL) & 28.7 $\pm$ 1.0 & 23.3 $\pm$ 0.8 & 77.5 $\pm$ 0.2 & 67.6 $\pm$ 1.3 & 14.0 $\pm$ 0.3 & 29.8 $\pm$ 0.4 & 40.1 $\pm$ 0.7 \\
      \midrule
      \multicolumn{8}{l}{\textit{Majority Vote (w/o. Verifier)}} \\
      TTRL & 7.6 $\pm$ 0.2 & 13.8 $\pm$ 0.4 & 49.6 $\pm$ 0.4 & 27.4 $\pm$ 1.1 & 2.9 $\pm$ 0.2 & 10.3 $\pm$ 1.3 & 18.6 $\pm$ 0.6 \\
      \rowcolor{blue!10} VI-CuRL & \textbf{43.8 $\pm$ 0.9} & \textbf{37.5 $\pm$ 0.8} & \textbf{91.9 $\pm$ 1.2} & \textbf{82.0 $\pm$ 1.2} & \textbf{19.6 $\pm$ 0.9} & \textbf{47.0 $\pm$ 1.1} & \textbf{53.6 $\pm$ 1.0} \\
      \midrule
      \multicolumn{8}{l}{\textit{Entropy (w/o. Verifier)}} \\
      RENT & 6.6 $\pm$ 0.7 & 6.7 $\pm$ 0.1 & 48.6 $\pm$ 2.1 & 56.8 $\pm$ 1.6 & 13.2 $\pm$ 2.6 & 25.1 $\pm$ 1.9 & 26.2 $\pm$ 1.5 \\
      EMPO & 22.4 $\pm$ 2.5 & 18.5 $\pm$ 1.8 & 52.0 $\pm$ 3.1 & 61.5 $\pm$ 2.8 & 14.5 $\pm$ 2.2 & 20.5 $\pm$ 1.9 & 31.6 $\pm$ 2.4 \\
      \rowcolor{blue!10} VI-CuRL & \textbf{42.5 $\pm$ 0.5} & \textbf{33.5 $\pm$ 1.0} & \textbf{86.9 $\pm$ 0.7} & \textbf{82.0 $\pm$ 0.1} & \textbf{16.5 $\pm$ 0.2} & \textbf{43.6 $\pm$ 0.2} & \textbf{50.9 $\pm$ 0.5} \\
      \bottomrule
    \end{tabular}
  }
\end{table*}

\begin{table*}[t]
  \centering
  \small
  \caption{Comparison in \textbf{Verifier-Free} settings, part 2. Mean and standard deviation \textbf{(pass@8)} with 16 samples and 5 random seeds. We compare VI-CuRL against baselines using \textbf{Majority Vote} and \textbf{Entropy} as intrinsic reward signals. Note that VCRL, AdaRFT, and SENT are excluded as they require ground-truth verifiers or offline initial policy distribution. \colorbox{blue!10}{Blue} rows highlight our method.}
  \label{tab:main_results_independent_pass@8_part2}
  \resizebox{\linewidth}{!}{
    \begin{tabular}{lrrrrrr|r}
      \toprule
      \rowcolor{gray!15} \textbf{Dataset} & \textbf{AIME} & \textbf{AIME} & \textbf{AMC} & \textbf{Math500} & \textbf{Minerva} & \textbf{Olympiad} & \textbf{Average} \\
      \rowcolor{gray!15} & \textbf{2024} & \textbf{2025} & \textbf{2023} & & \textbf{MATH} & \textbf{Bench} & \\
      \midrule
      \multicolumn{8}{c}{\cellcolor{gray!5}\textbf{Llama3.2-3B-Instruct}} \\
      \midrule
      Base (No RL) & 18.8 $\pm$ 1.1 & 3.3 $\pm$ 0.2 & 54.7 $\pm$ 1.4 & 40.8 $\pm$ 0.8 & 4.8 $\pm$ 1.0 & 16.4 $\pm$ 0.5 & 23.1 $\pm$ 0.8 \\
      \midrule
      \multicolumn{8}{l}{\textit{Majority Vote (w/o. Verifier)}} \\
      TTRL & \textbf{23.3 $\pm$ 0.3} & \textbf{4.6 $\pm$ 0.5} & 53.8 $\pm$ 0.5 & 50.6 $\pm$ 0.8 & 10.7 $\pm$ 1.0 & \textbf{24.0 $\pm$ 1.3} & 27.8 $\pm$ 0.7 \\
      \rowcolor{blue!10} VI-CuRL & 20.4 $\pm$ 0.8 & \textbf{4.6 $\pm$ 0.6} & \textbf{58.1 $\pm$ 0.9} & \textbf{54.2 $\pm$ 0.7} & \textbf{14.0 $\pm$ 0.8} & 23.3 $\pm$ 1.4 & \textbf{29.1 $\pm$ 0.9} \\
      \midrule
      \multicolumn{8}{l}{\textit{Entropy (w/o. Verifier)}} \\
      RENT & 5.0 $\pm$ 1.3 & 0.0 $\pm$ 0.8 & 15.9 $\pm$ 1.2 & 3.2 $\pm$ 1.1 & 2.2 $\pm$ 1.3 & 1.5 $\pm$ 0.6 & 4.6 $\pm$ 1.1 \\
      EMPO & 8.5 $\pm$ 1.2 & 2.1 $\pm$ 0.5 & 25.4 $\pm$ 2.0 & 18.5 $\pm$ 1.5 & 6.5 $\pm$ 1.1 & 8.4 $\pm$ 1.0 & 11.6 $\pm$ 1.2 \\
      \rowcolor{blue!10} VI-CuRL & \textbf{20.8 $\pm$ 0.9} & \textbf{5.8 $\pm$ 1.1} & \textbf{56.9 $\pm$ 1.3} & \textbf{50.2 $\pm$ 0.1} & \textbf{12.9 $\pm$ 0.7} & \textbf{21.8 $\pm$ 0.7} & \textbf{28.1 $\pm$ 0.8} \\
      \midrule
      \multicolumn{8}{c}{\cellcolor{gray!5}\textbf{Qwen2.5-Math-7B}} \\
      \midrule
      Base (No RL) & 36.2 $\pm$ 1.2 & 23.3 $\pm$ 1.0 & 81.9 $\pm$ 1.3 & 66.6 $\pm$ 0.5 & 11.8 $\pm$ 0.6 & 33.9 $\pm$ 0.7 & 42.3 $\pm$ 0.9 \\
      \midrule
      \multicolumn{8}{l}{\textit{Majority Vote (w/o. Verifier)}} \\
      TTRL & 53.8 $\pm$ 0.7 & 28.7 $\pm$ 0.9 & 82.8 $\pm$ 1.3 & 81.4 $\pm$ 0.1 & 23.2 $\pm$ 0.6 & 41.9 $\pm$ 0.7 & 52.0 $\pm$ 0.7 \\
      \rowcolor{blue!10} VI-CuRL & \textbf{55.5 $\pm$ 2.1} & \textbf{31.7 $\pm$ 0.1} & \textbf{88.2 $\pm$ 1.0} & \textbf{81.6 $\pm$ 0.8} & \textbf{27.9 $\pm$ 0.6} & \textbf{49.1 $\pm$ 3.7} & \textbf{55.7 $\pm$ 2.9} \\
      \midrule
      \multicolumn{8}{l}{\textit{Entropy (w/o. Verifier)}} \\
      RENT & 28.8 $\pm$ 1.4 & 17.9 $\pm$ 2.5 & 71.5 $\pm$ 3.2 & 74.6 $\pm$ 2.3 & \textbf{25.4 $\pm$ 5.0} & 35.2 $\pm$ 4.2 & 42.2 $\pm$ 3.1 \\
      EMPO & 35.2 $\pm$ 2.5 & 22.4 $\pm$ 1.8 & 76.5 $\pm$ 2.2 & 78.2 $\pm$ 1.5 & 22.5 $\pm$ 3.1 & 39.5 $\pm$ 2.0 & 45.7 $\pm$ 2.2 \\
      \rowcolor{blue!10} VI-CuRL & \textbf{48.4 $\pm$ 3.9} & \textbf{30.0 $\pm$ 1.4} & \textbf{85.0 $\pm$ 0.3} & \textbf{80.6 $\pm$ 0.3} & 19.5 $\pm$ 0.8 & \textbf{43.4 $\pm$ 0.8} & \textbf{51.1 $\pm$ 1.3} \\
      \bottomrule
    \end{tabular}
  }
\end{table*}

\section{Additional Experimental Results}
\label{app:additional_results}

\subsection{Learning Curve Analysis: with/without Verifiers}
\label{app:learning_curves}
We performed a comprehensive comparison of learning curves across three distinct reward settings: \textbf{Oracle} (Ground Truth), \textbf{Majority Vote}, and \textbf{Entropy}. Figure~\ref{fig:learning_curves_grid} presented a $6 \times 6$ grid of results.

\textbf{Oracle Setting (Rows 1-2):} Even with a perfect verifier, VI-CuRL (blue) often matched or exceeded standard RL (orange) in stability and final performance.

\textbf{Majority Vote (Rows 3-4):} This verifier-independent baseline (purple) was notoriously unstable, as seen in the fluctuating curves. VI-CuRL significantly stabilized training, yielding smooth, monotonic improvements.

\textbf{Entropy (Rows 5-6):} Using entropy as a proxy also benefited from VI-CuRL's curriculum, which filtered out high-entropy (uncertain) samples early on to prevent policy collapse.

VI-CuRL demonstrated effective variance reduction and robust learning across all settings, validating its utility as a general-purpose stabilizer for RLVR.

\begin{figure*}[h!]
    \centering
    \setlength{\tabcolsep}{1pt}
    \renewcommand{\arraystretch}{0.5}
    \begin{tabular}{ccccccc}
         & \scriptsize AIME2024 & \scriptsize MATH500 & \scriptsize AIME2025 & \scriptsize AMC2023 & \scriptsize Minerva & \scriptsize Olympiad \\
        \rotatebox{90}{\scriptsize Oracle P@1} &
        \includegraphics[width=0.155\textwidth]{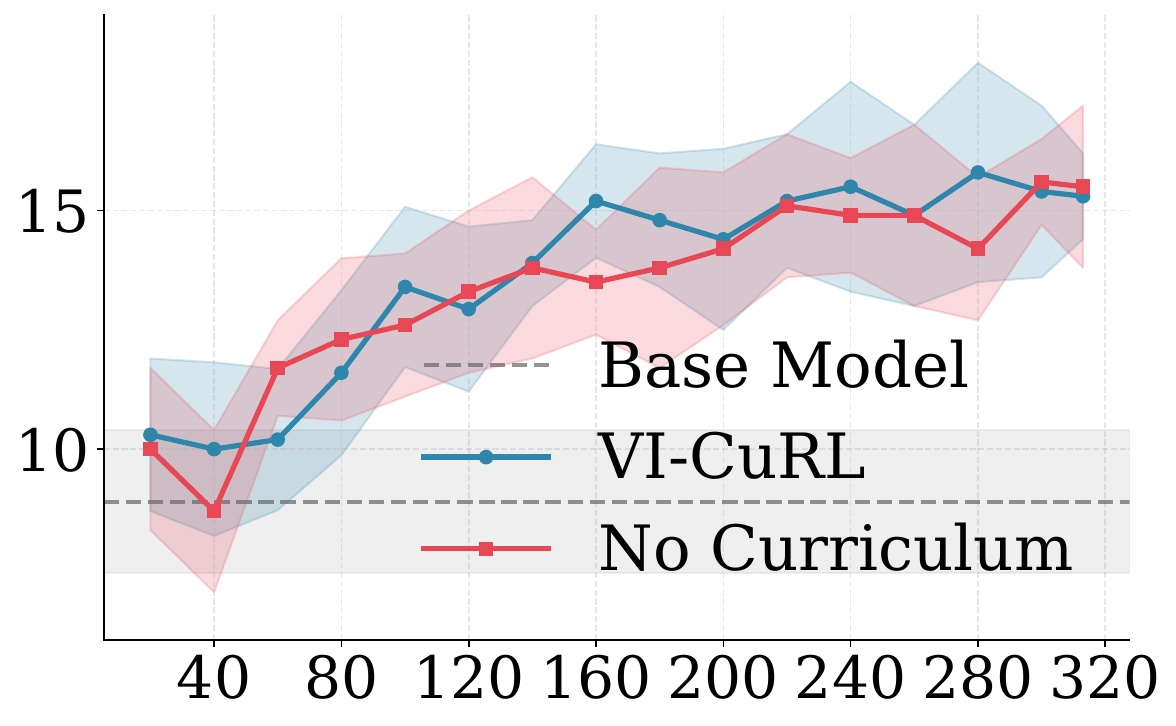} &
        \includegraphics[width=0.155\textwidth]{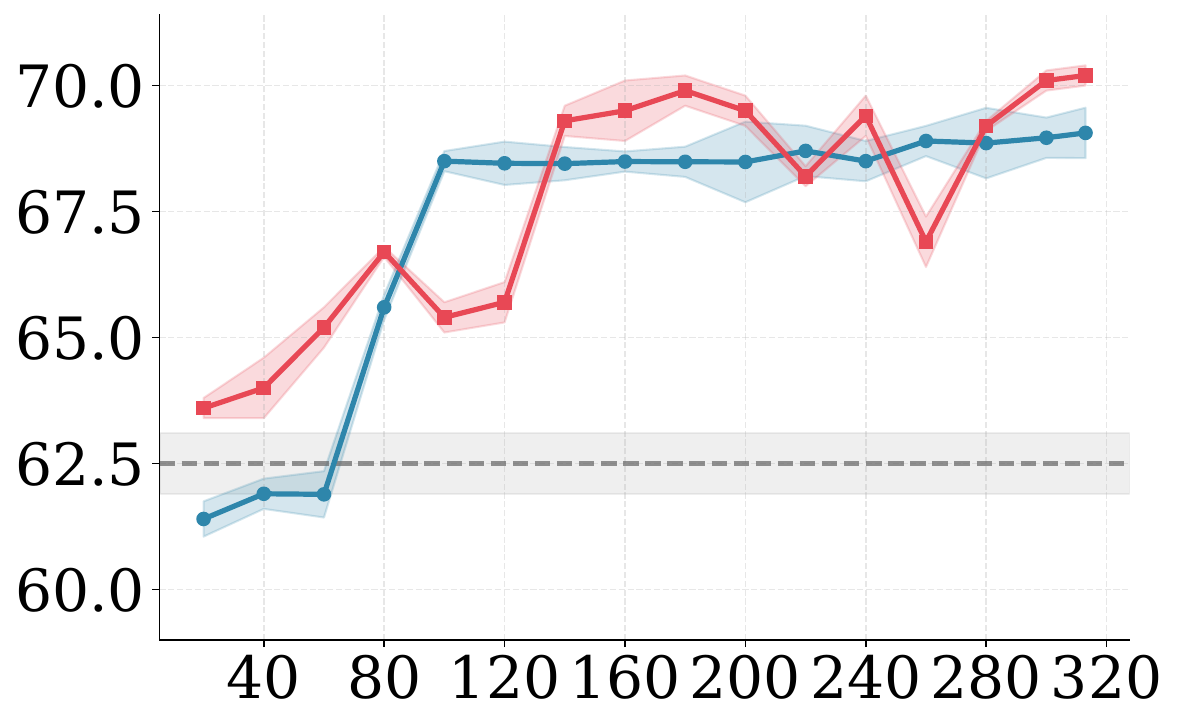} &
        \includegraphics[width=0.155\textwidth]{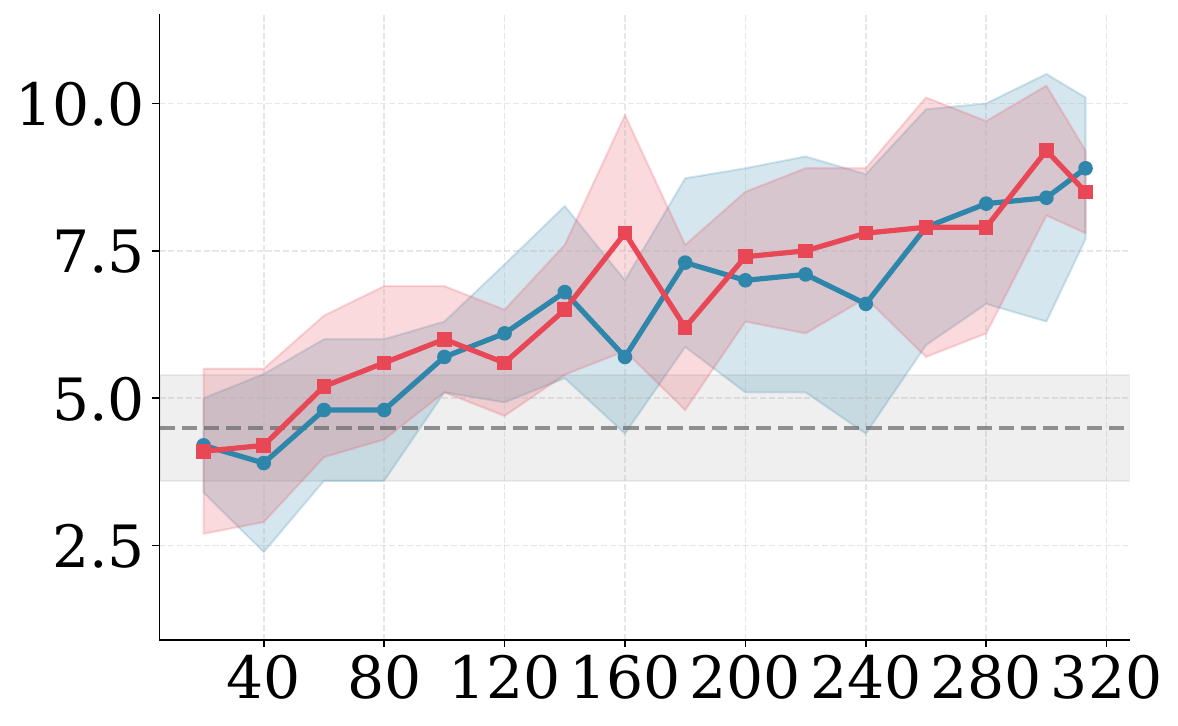} &
        \includegraphics[width=0.155\textwidth]{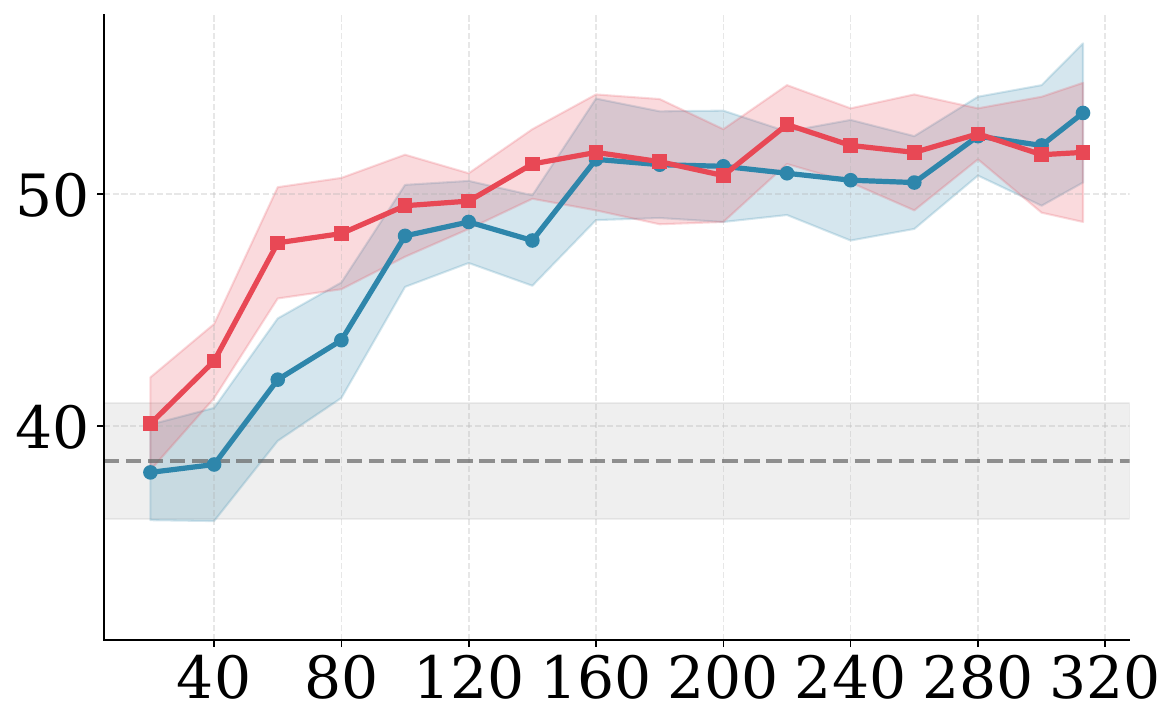} &
        \includegraphics[width=0.155\textwidth]{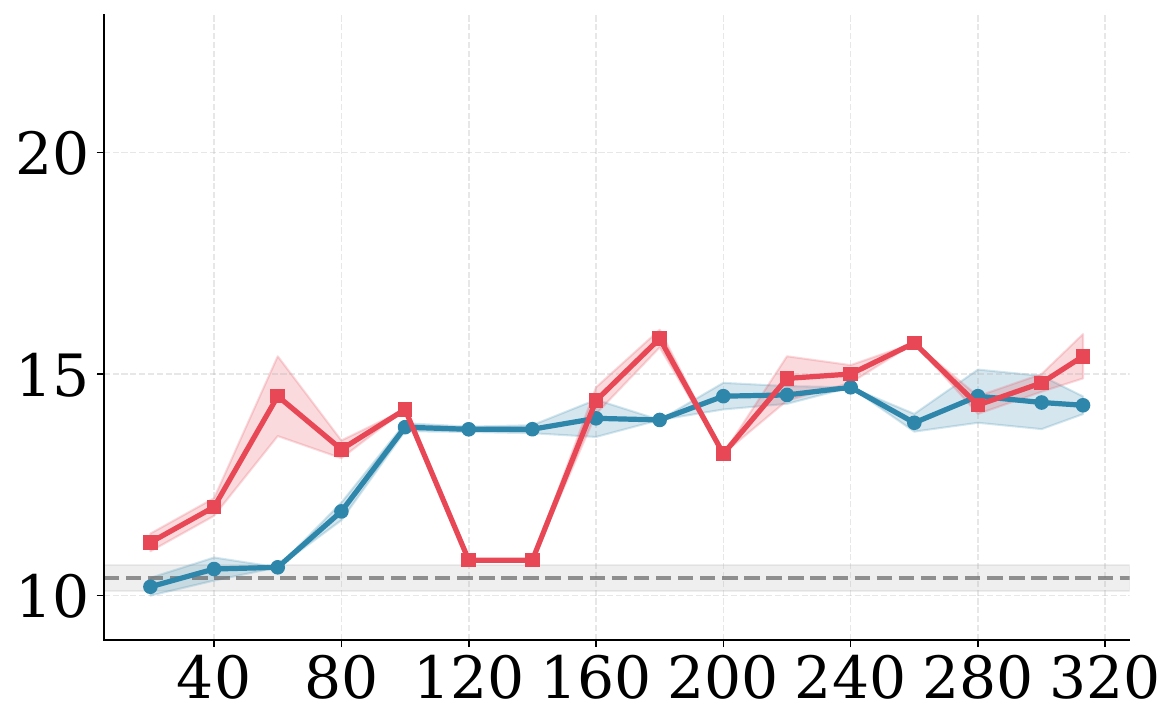} &
        \includegraphics[width=0.155\textwidth]{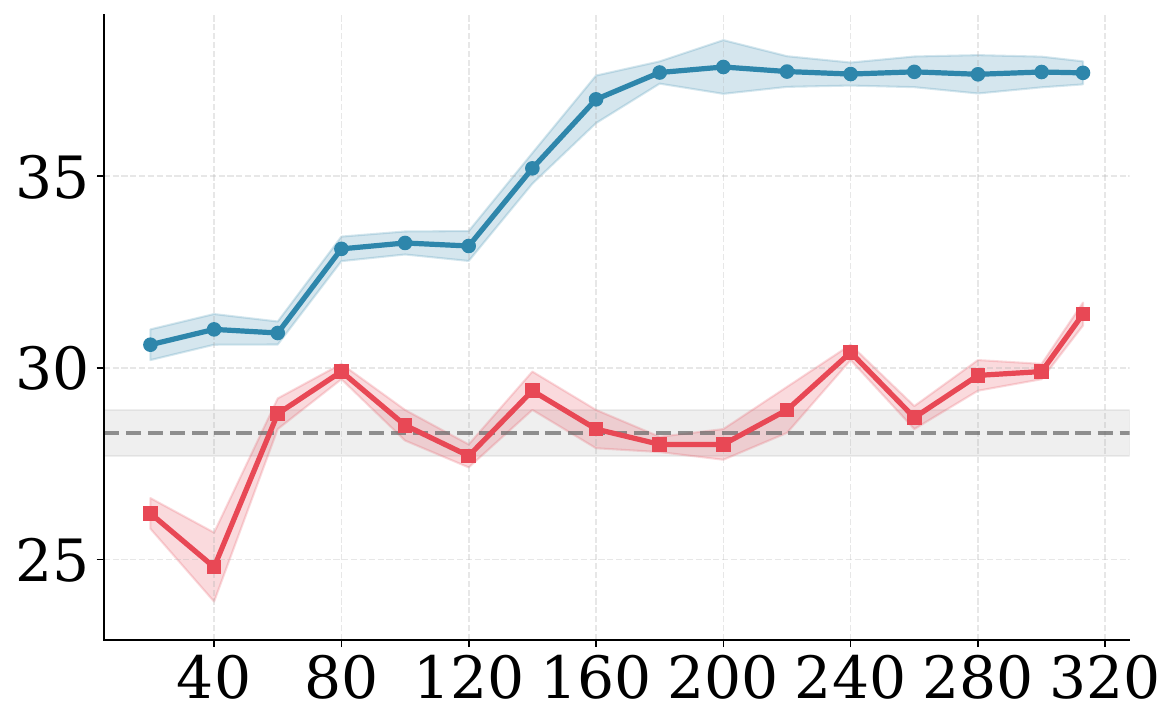} \\
        \rotatebox{90}{\scriptsize Oracle P@8} &
        \includegraphics[width=0.155\textwidth]{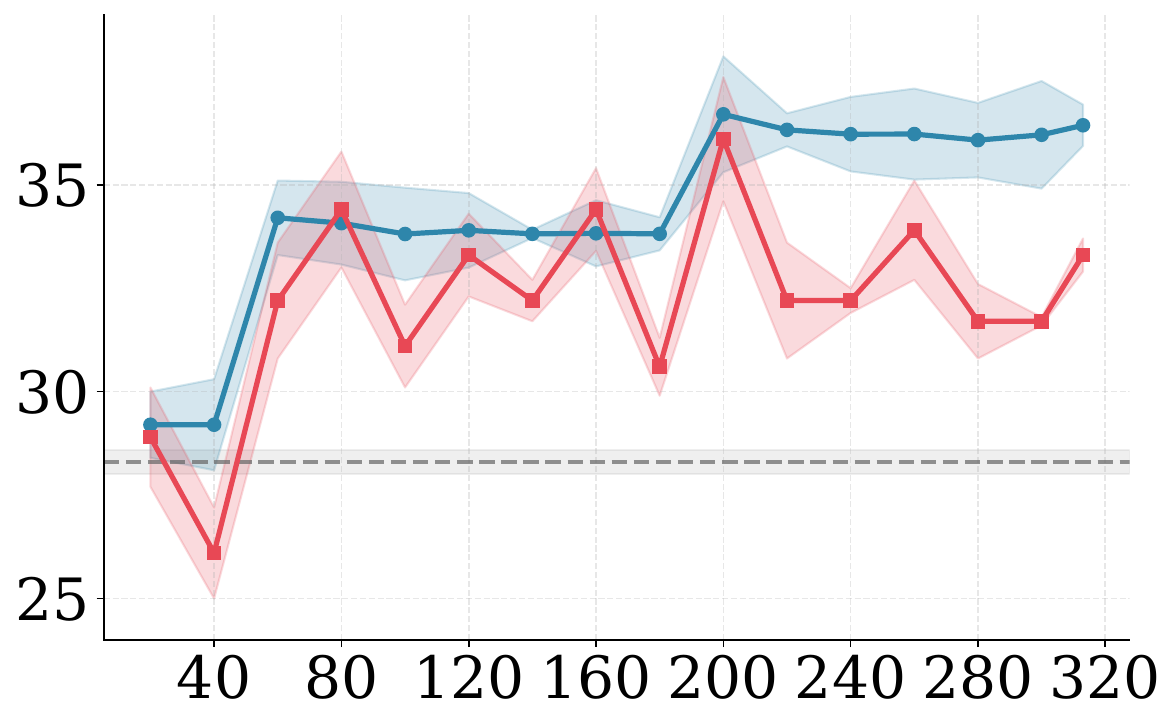} &
        \includegraphics[width=0.155\textwidth]{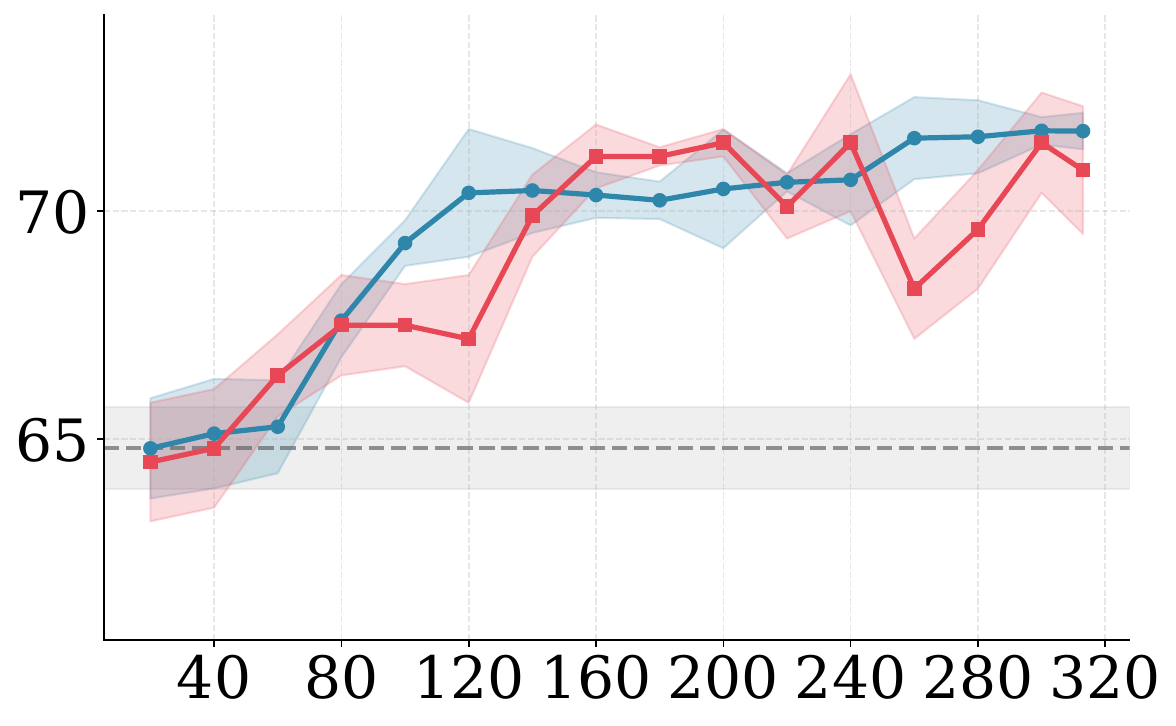} &
        \includegraphics[width=0.155\textwidth]{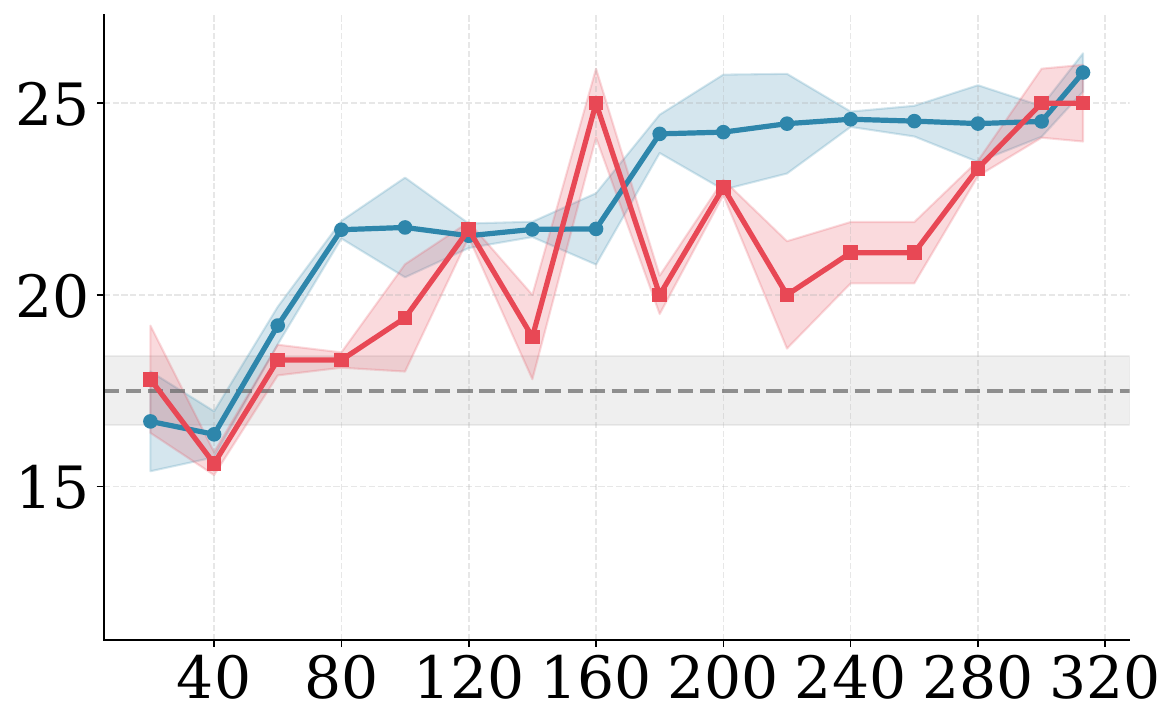} &
        \includegraphics[width=0.155\textwidth]{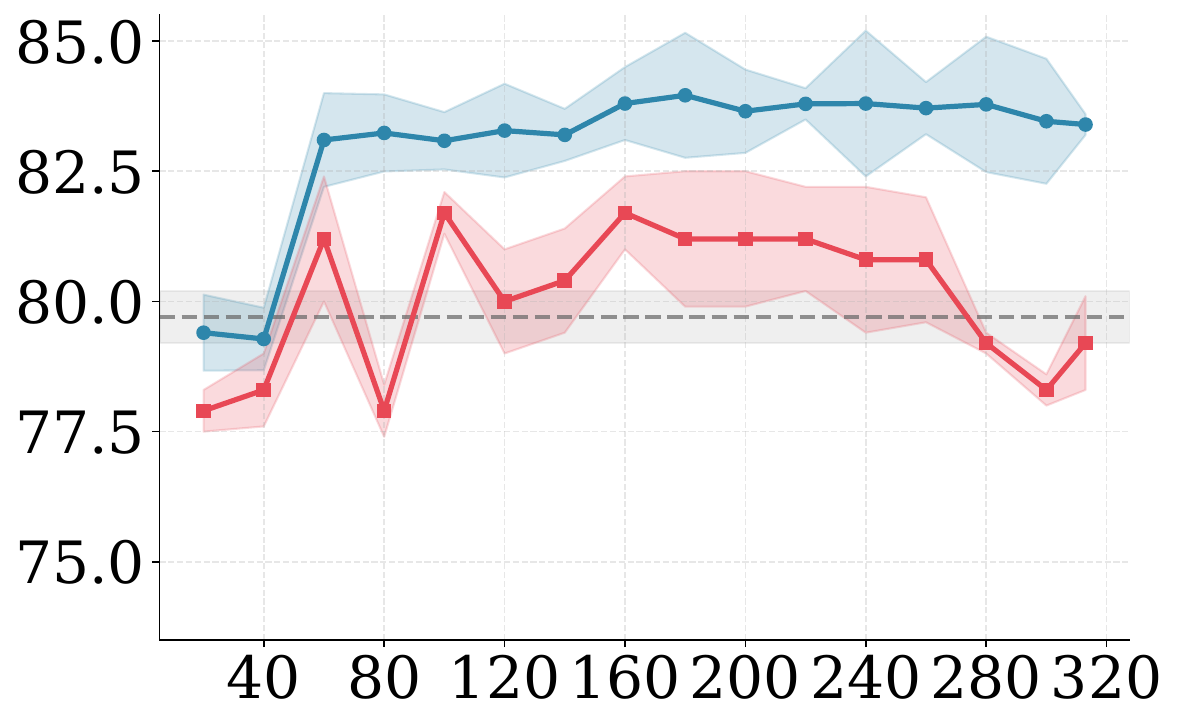} &
        \includegraphics[width=0.155\textwidth]{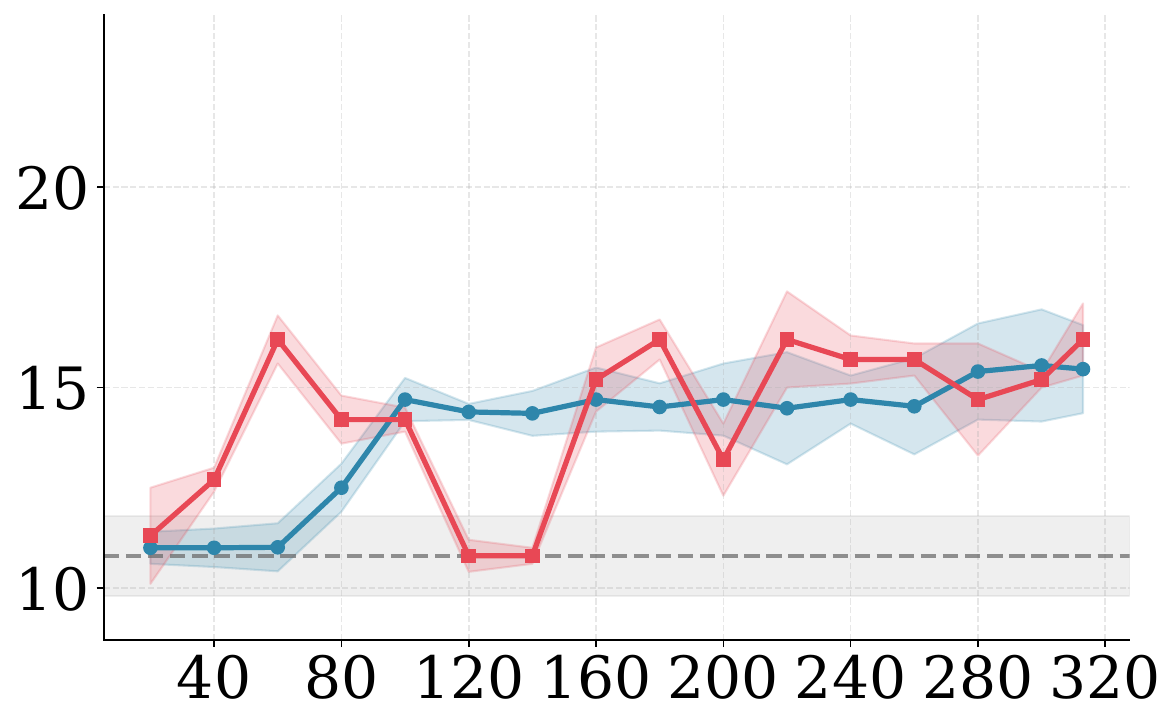} &
        \includegraphics[width=0.155\textwidth]{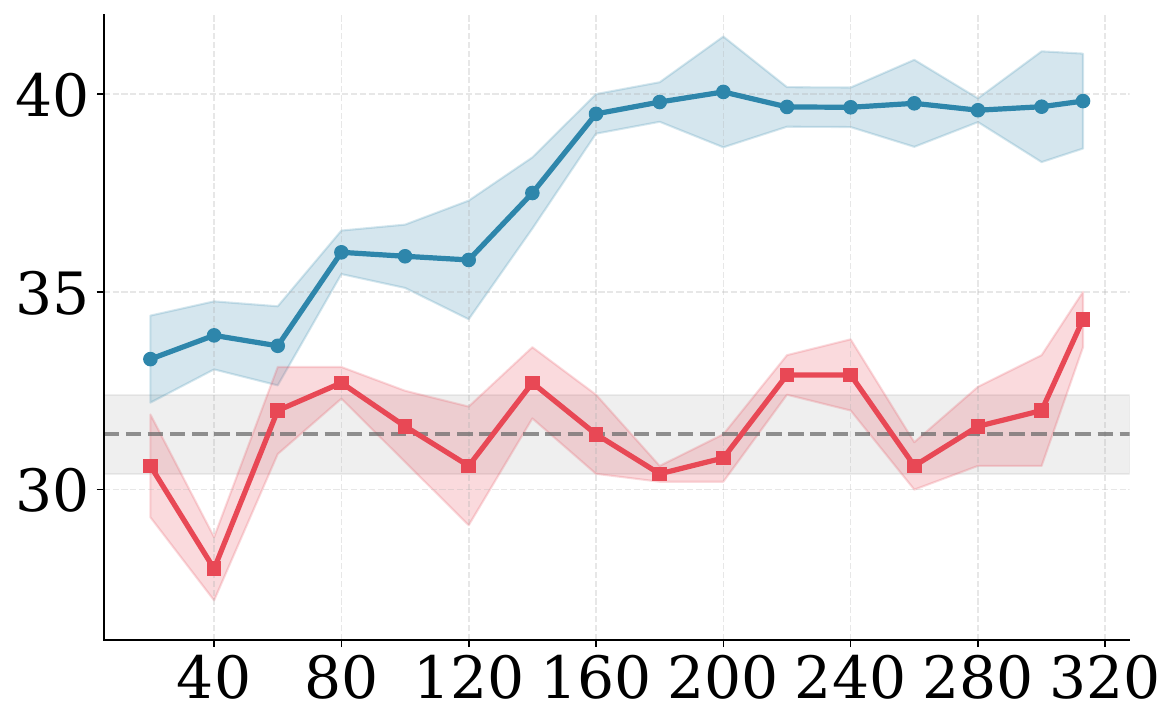} \\
        \rotatebox{90}{\scriptsize MajVote P@1} &
        \includegraphics[width=0.155\textwidth]{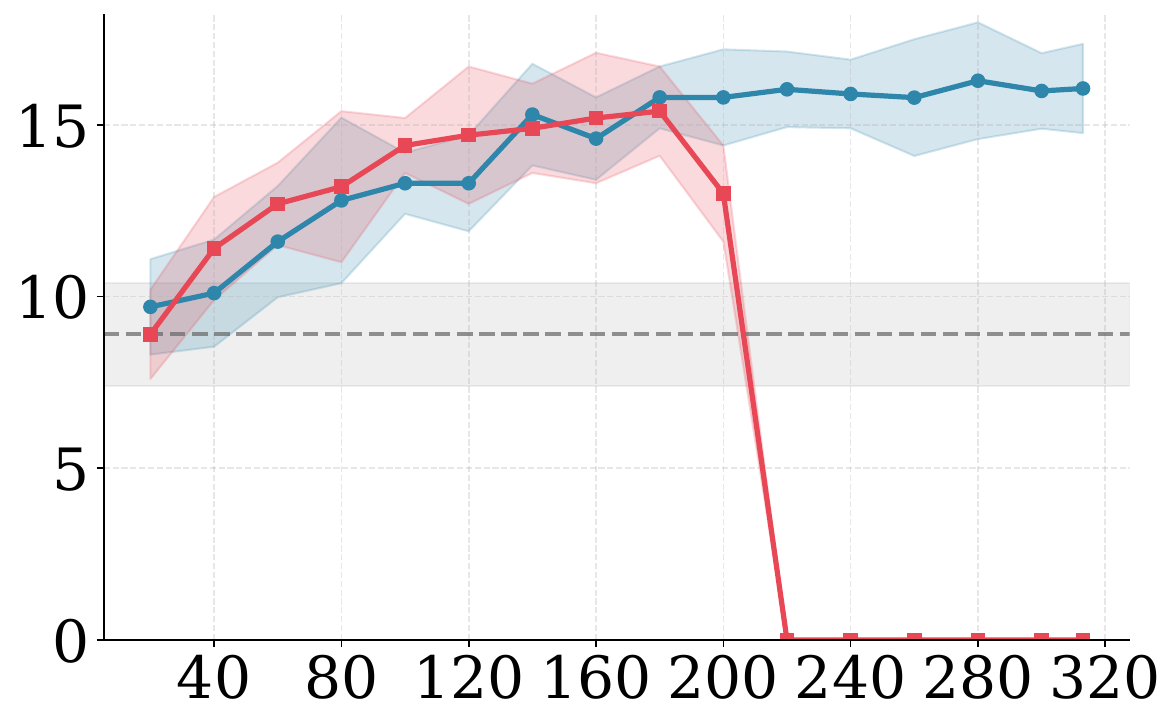} &
        \includegraphics[width=0.155\textwidth]{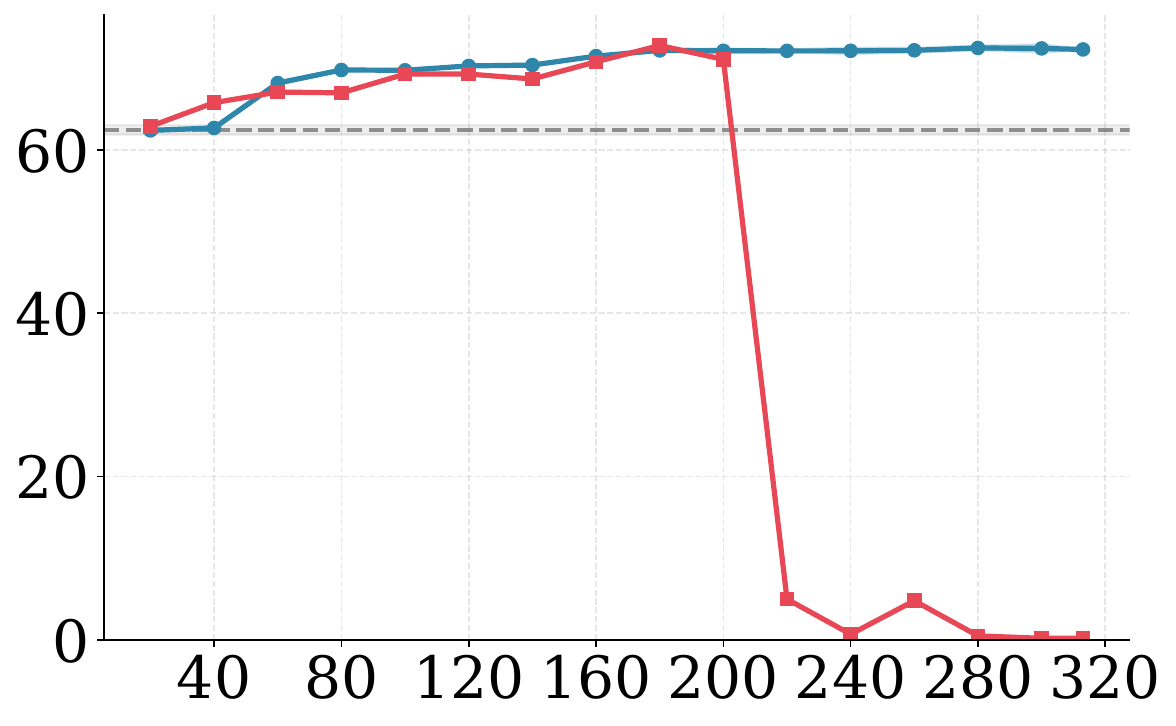} &
        \includegraphics[width=0.155\textwidth]{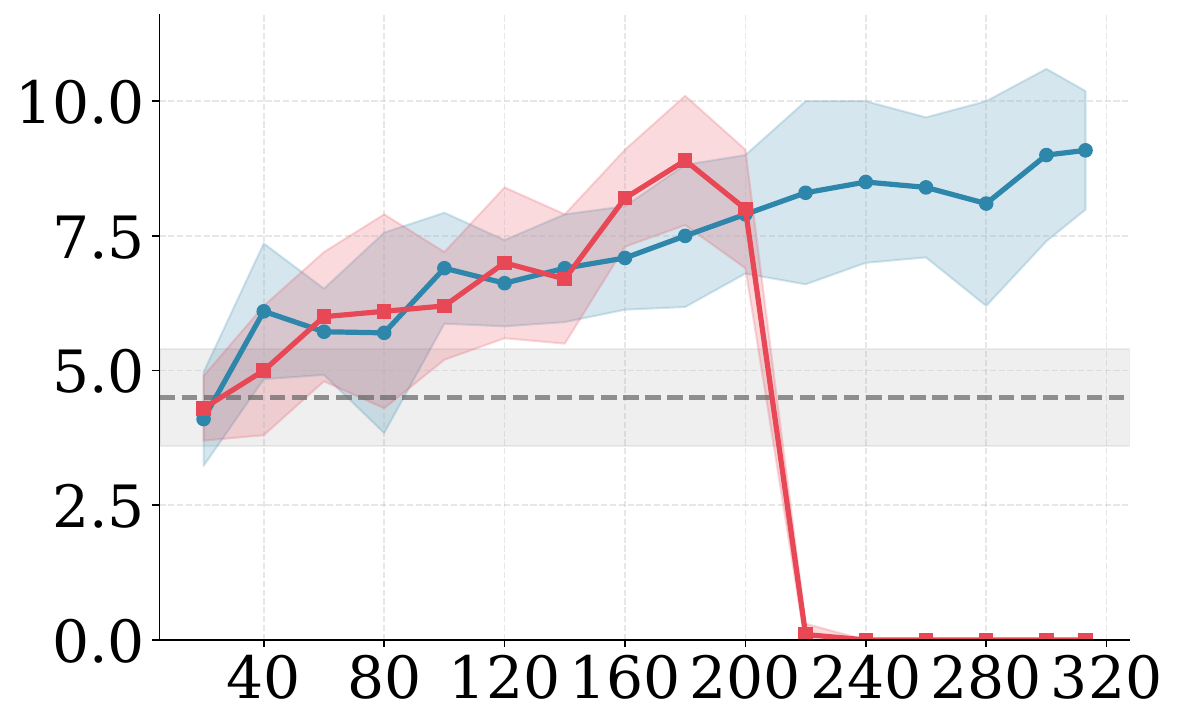} &
        \includegraphics[width=0.155\textwidth]{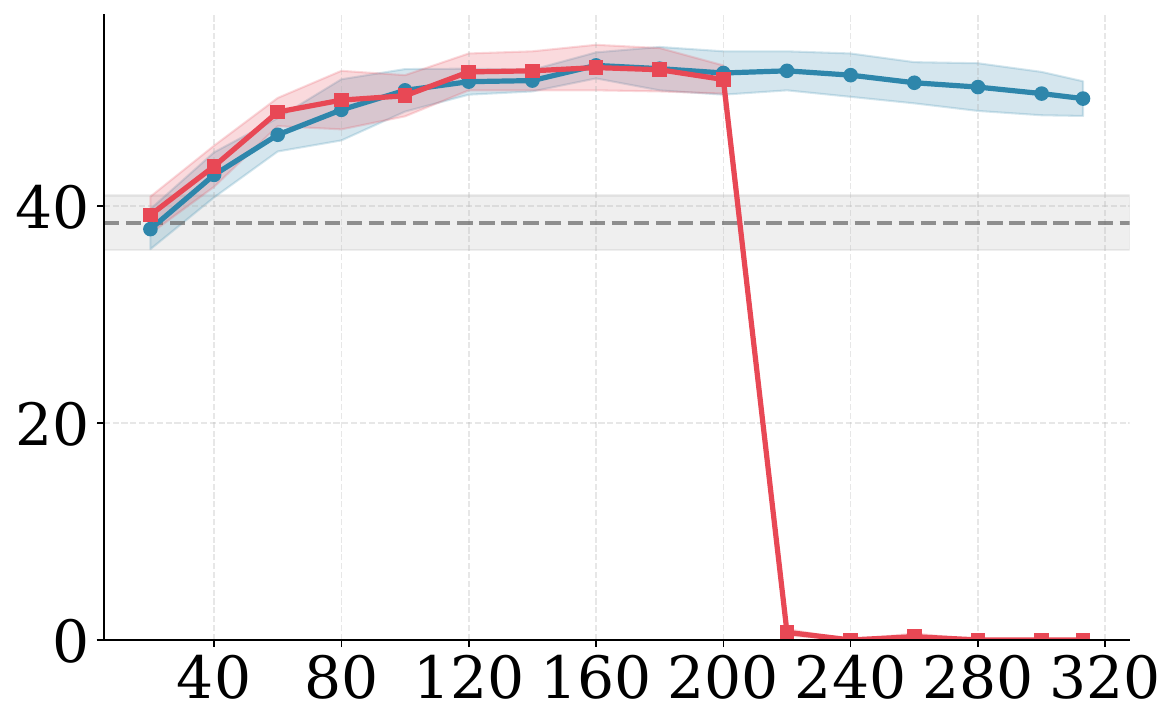} &
        \includegraphics[width=0.155\textwidth]{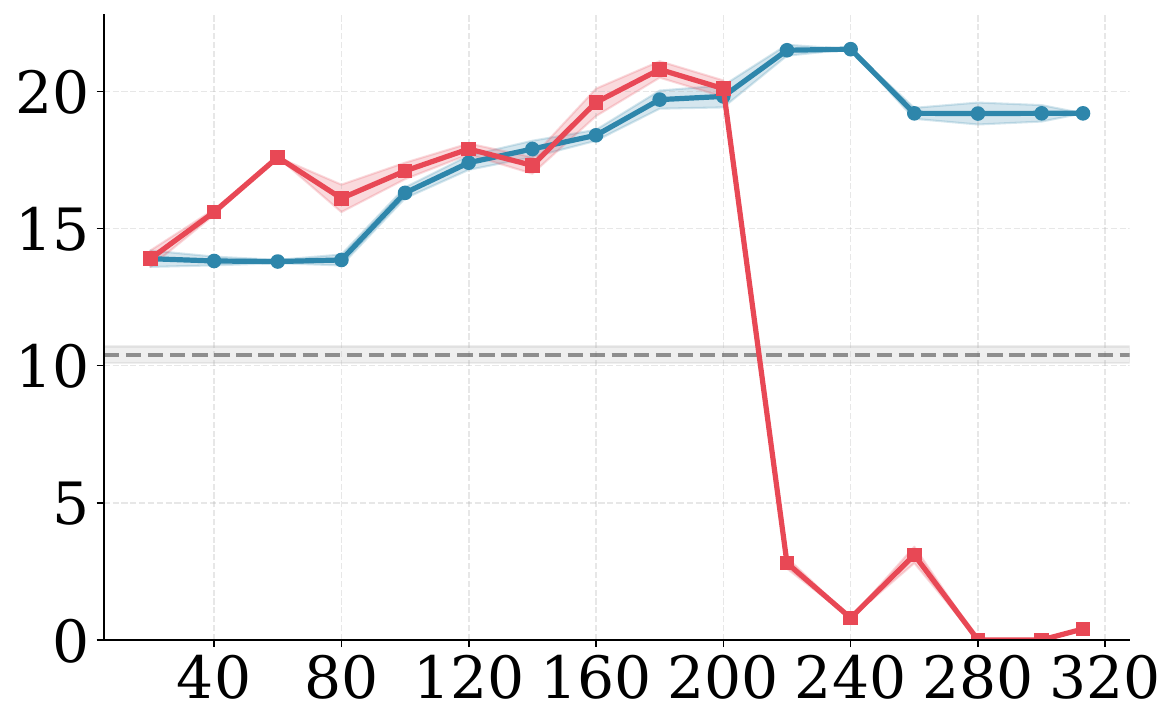} &
        \includegraphics[width=0.155\textwidth]{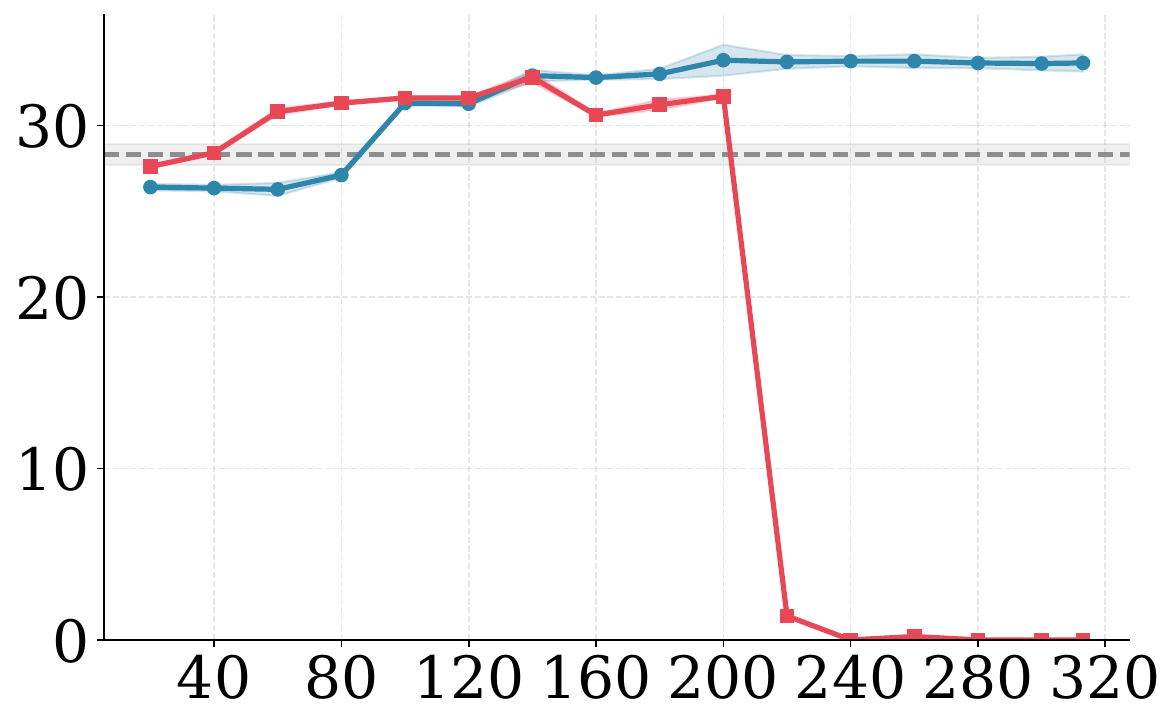} \\
        \rotatebox{90}{\scriptsize MajVote P@8} &
        \includegraphics[width=0.155\textwidth]{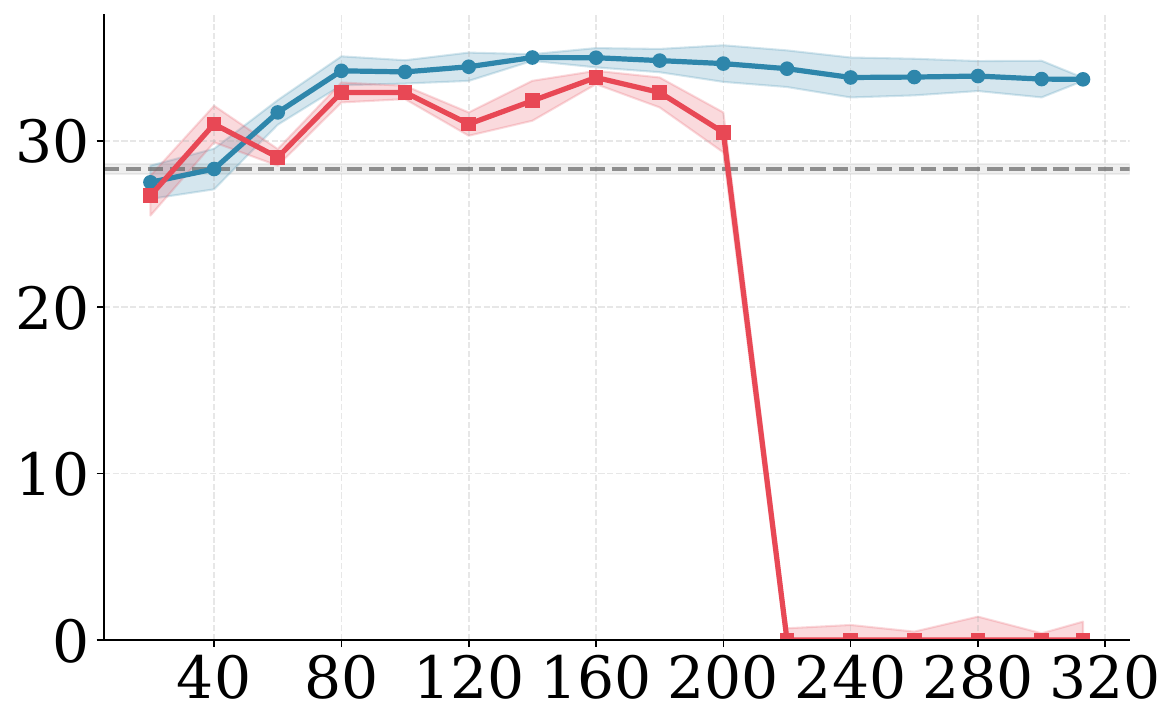} &
        \includegraphics[width=0.155\textwidth]{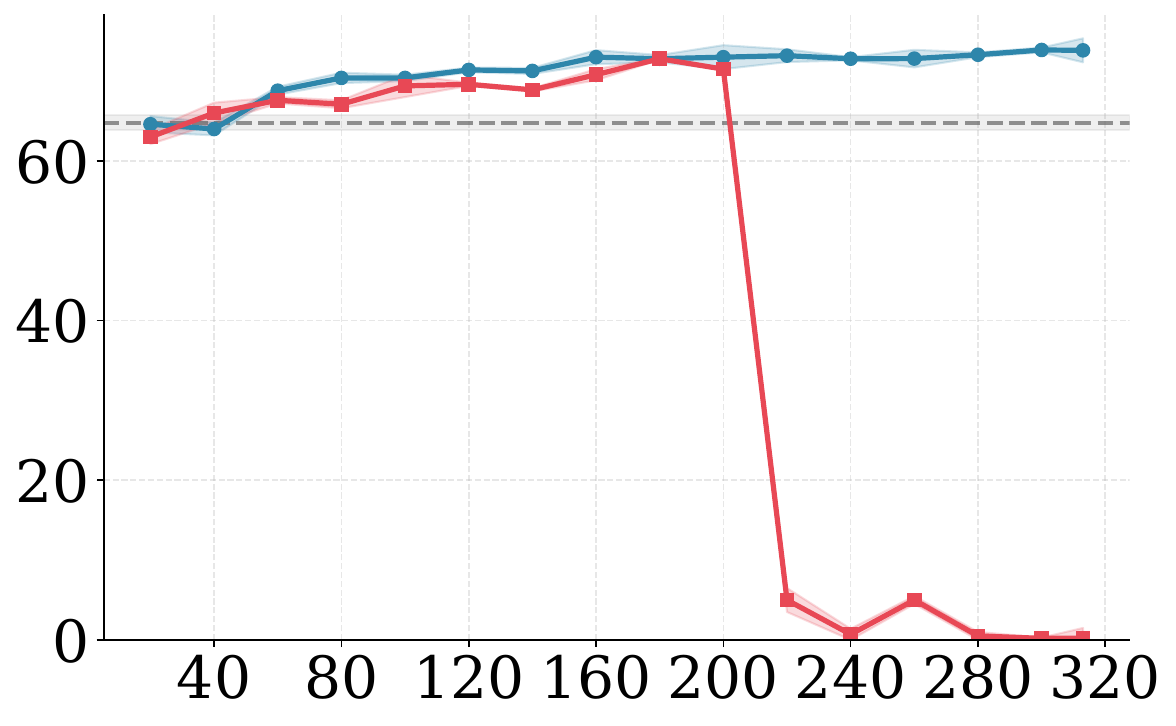} &
        \includegraphics[width=0.155\textwidth]{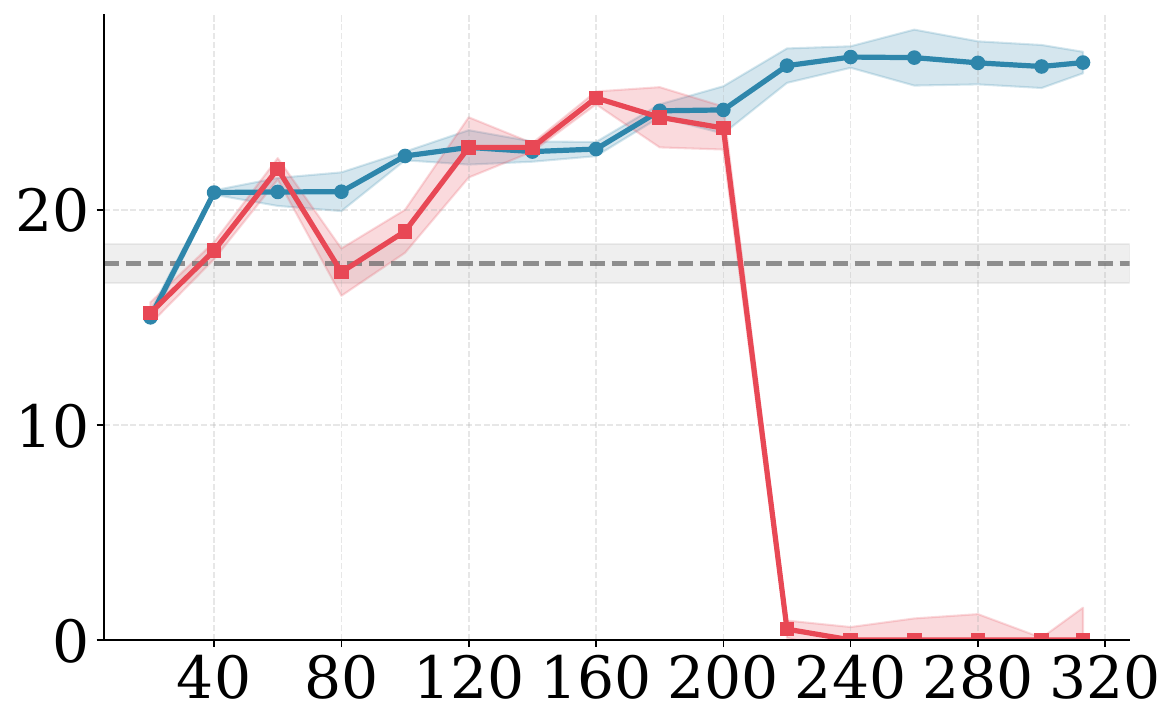} &
        \includegraphics[width=0.155\textwidth]{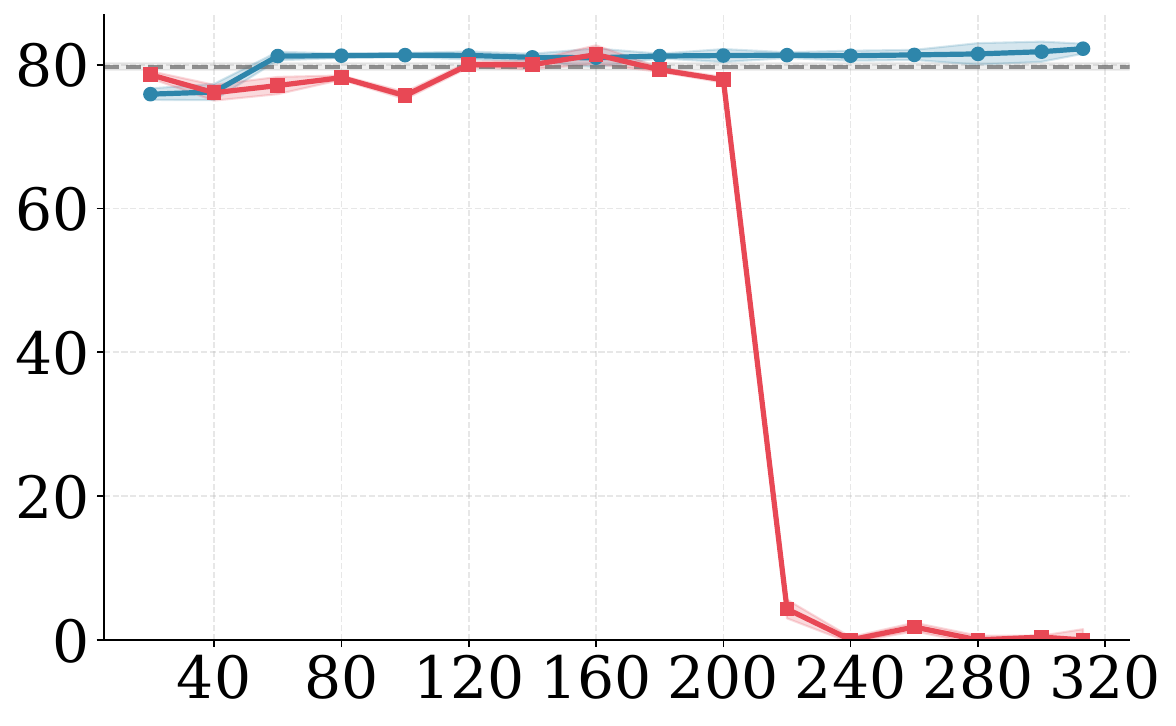} &
        \includegraphics[width=0.155\textwidth]{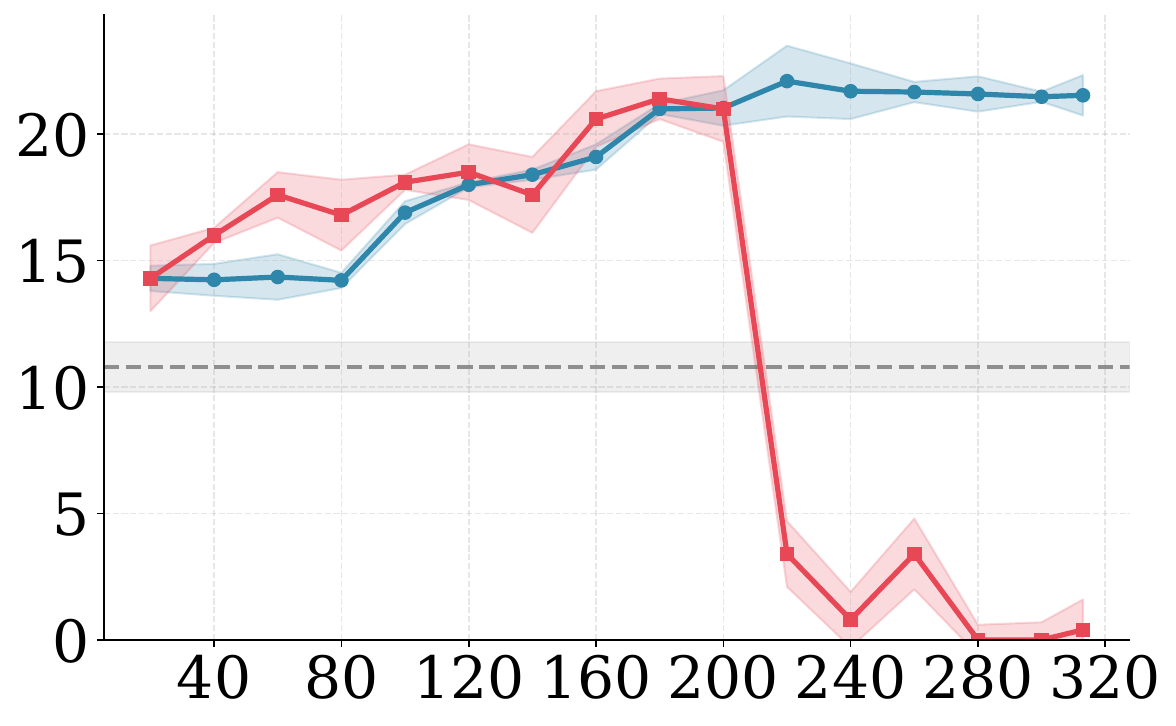} &
        \includegraphics[width=0.155\textwidth]{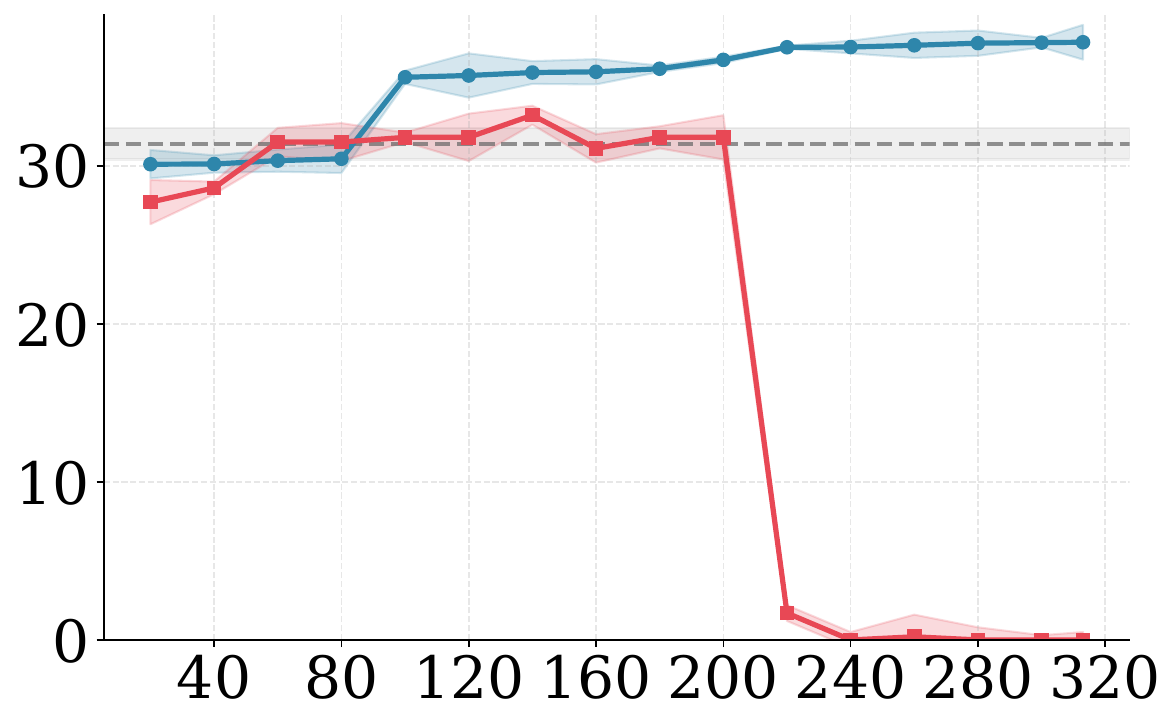} \\
        \rotatebox{90}{\scriptsize Entropy P@1} &
        \includegraphics[width=0.155\textwidth]{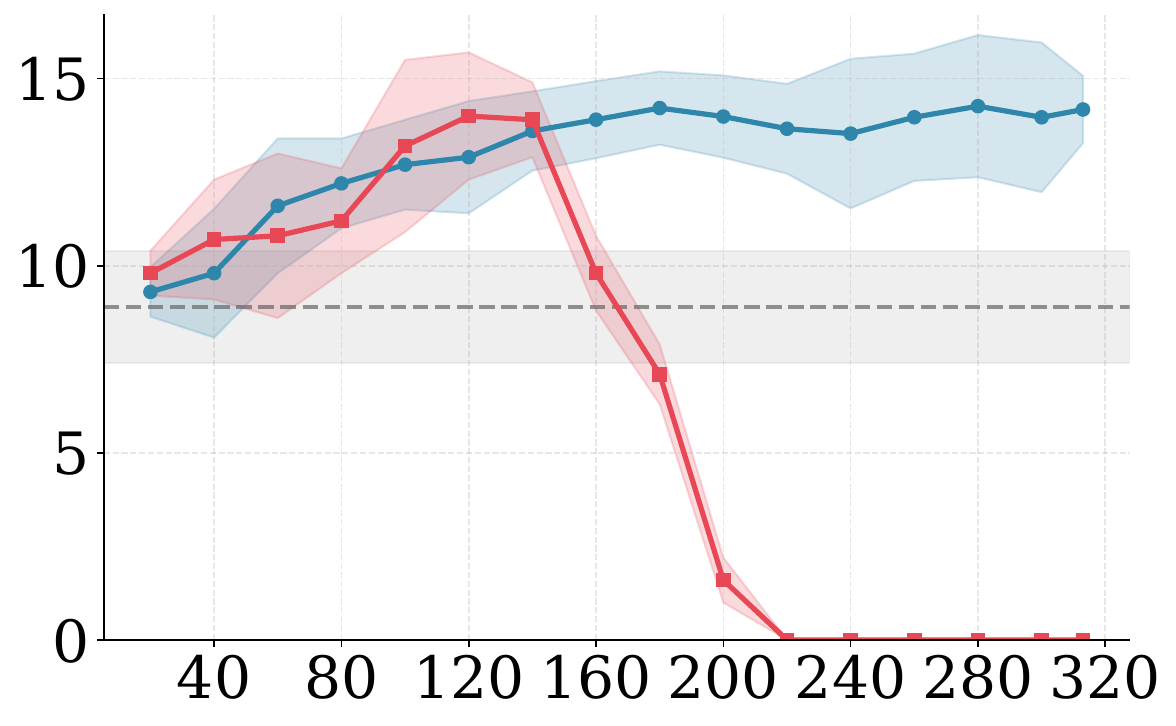} &
        \includegraphics[width=0.155\textwidth]{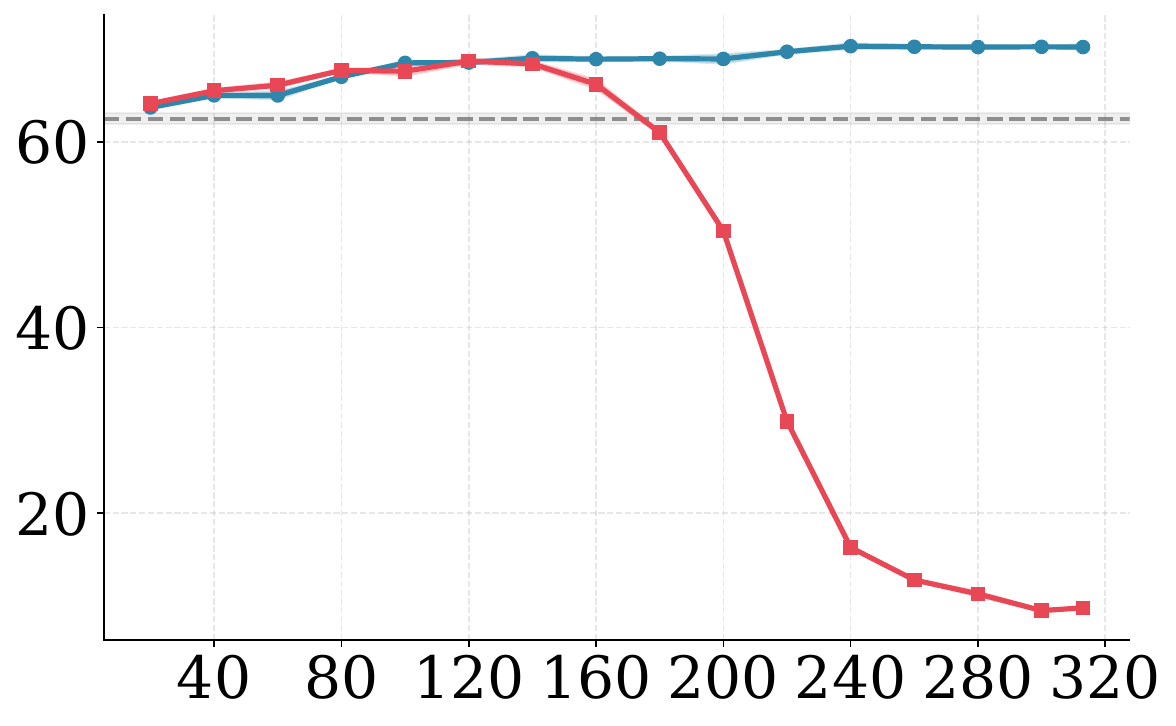} &
        \includegraphics[width=0.155\textwidth]{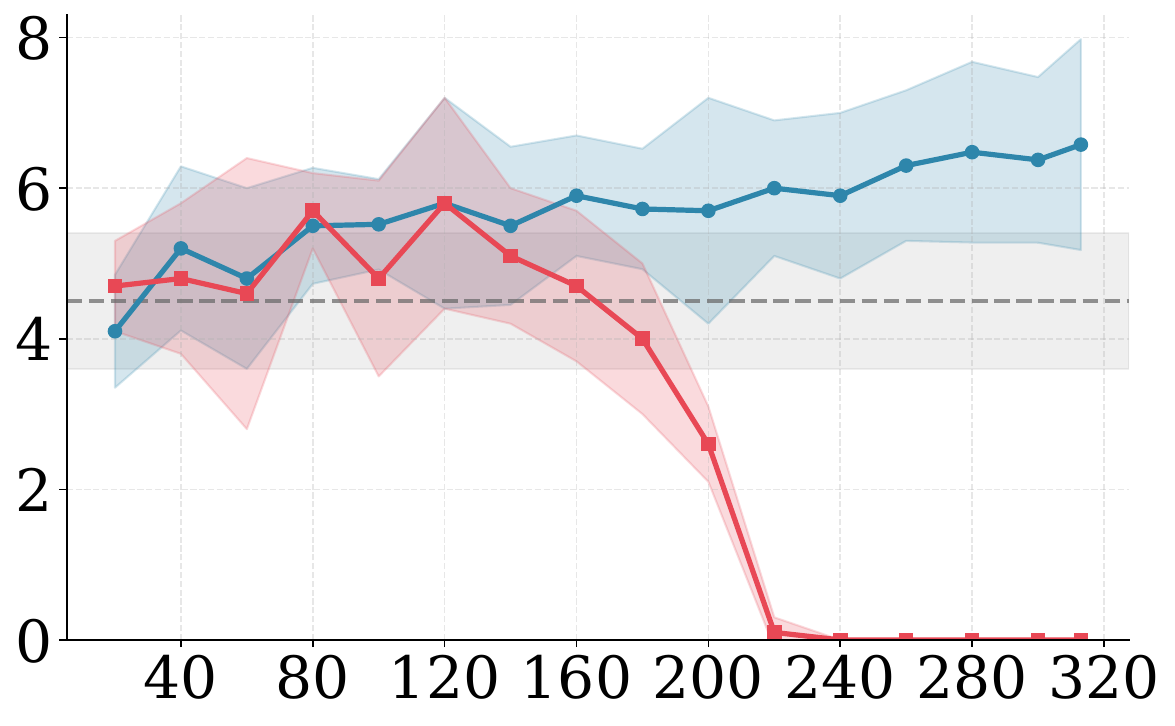} &
        \includegraphics[width=0.155\textwidth]{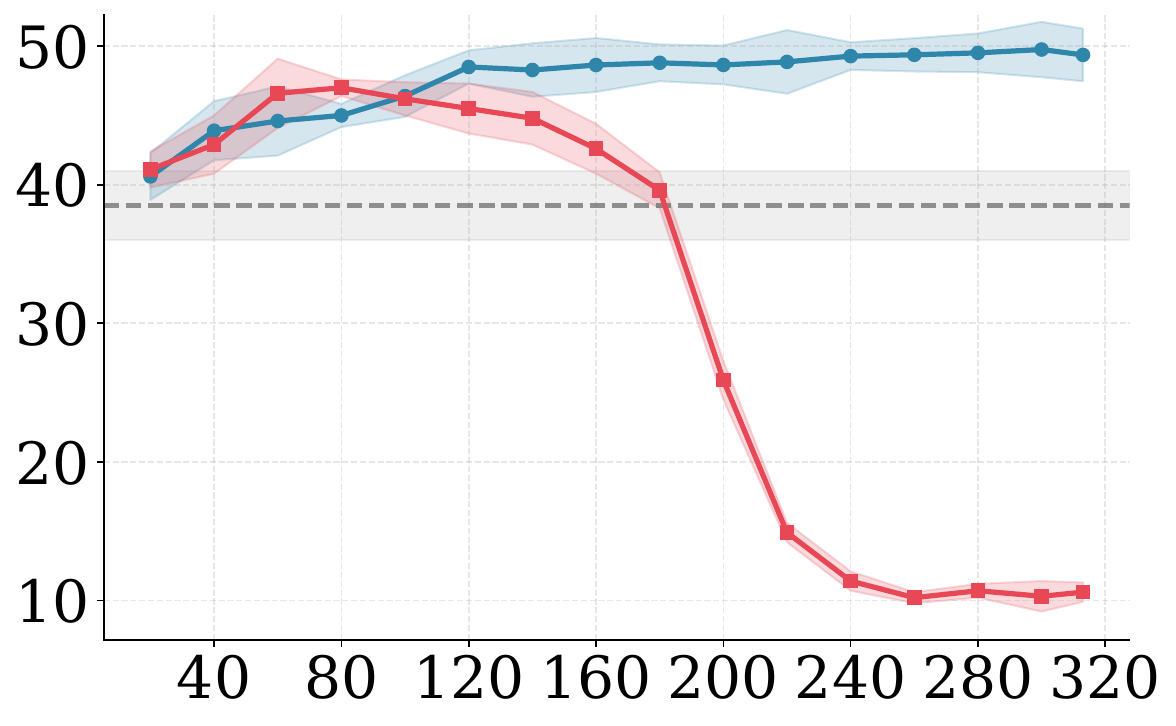} &
        \includegraphics[width=0.155\textwidth]{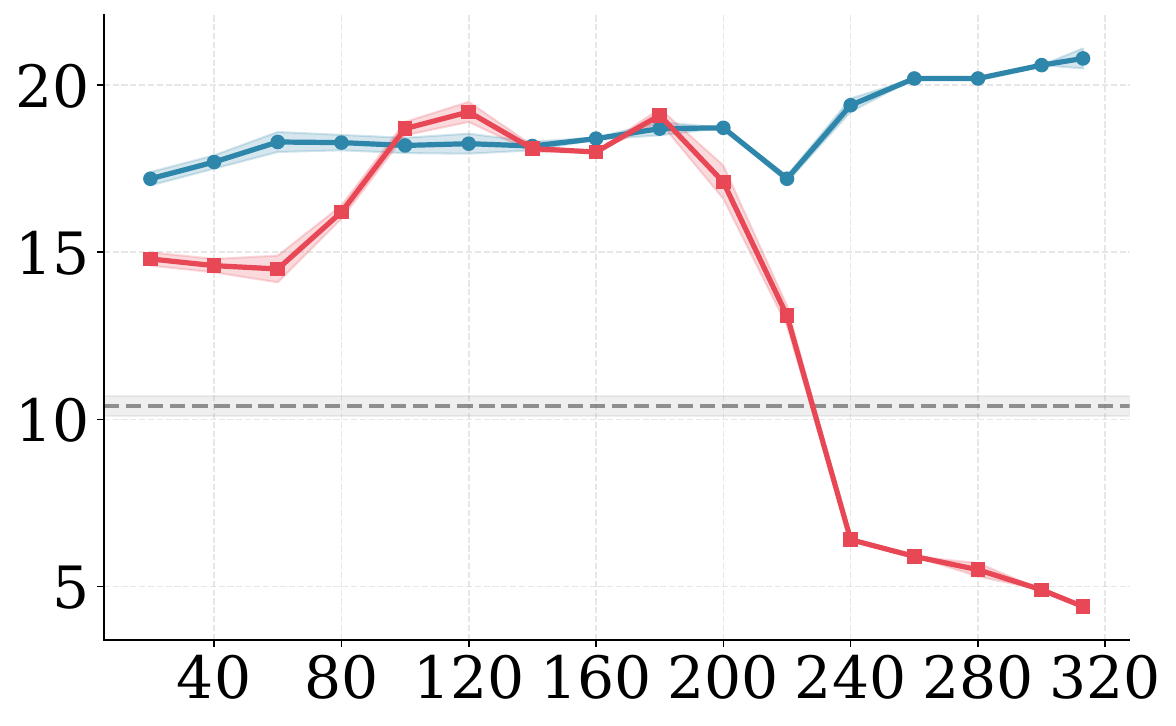} &
        \includegraphics[width=0.155\textwidth]{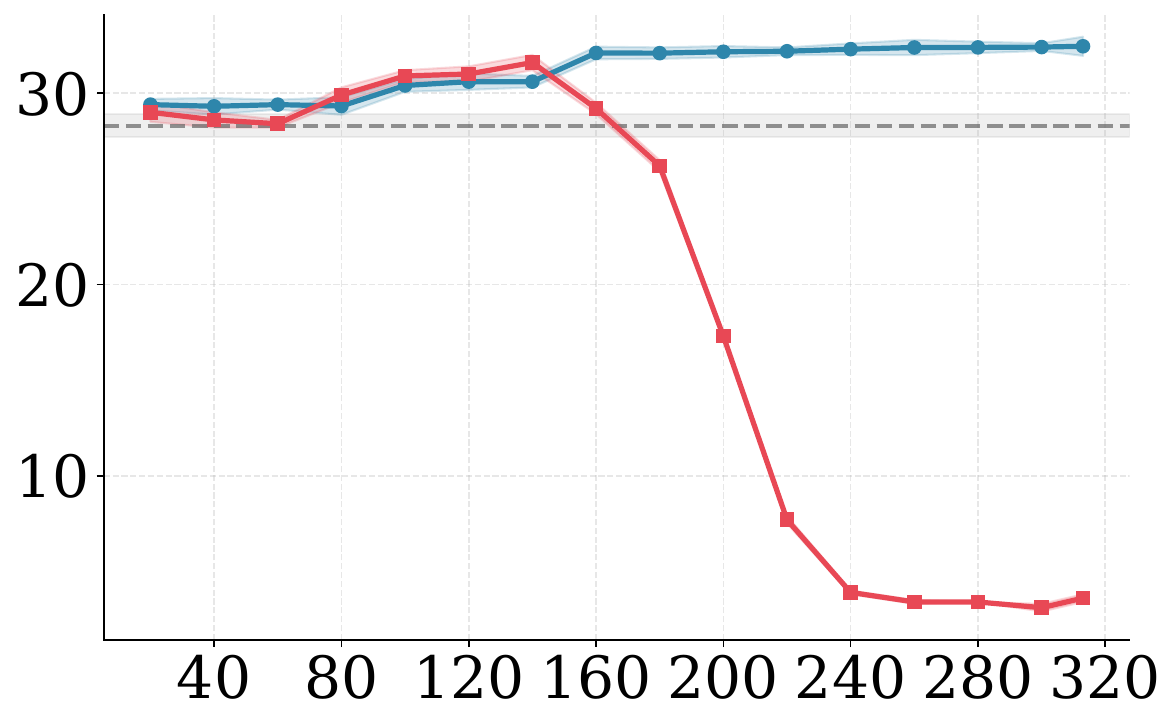} \\
        \rotatebox{90}{\scriptsize Entropy P@8} &
        \includegraphics[width=0.155\textwidth]{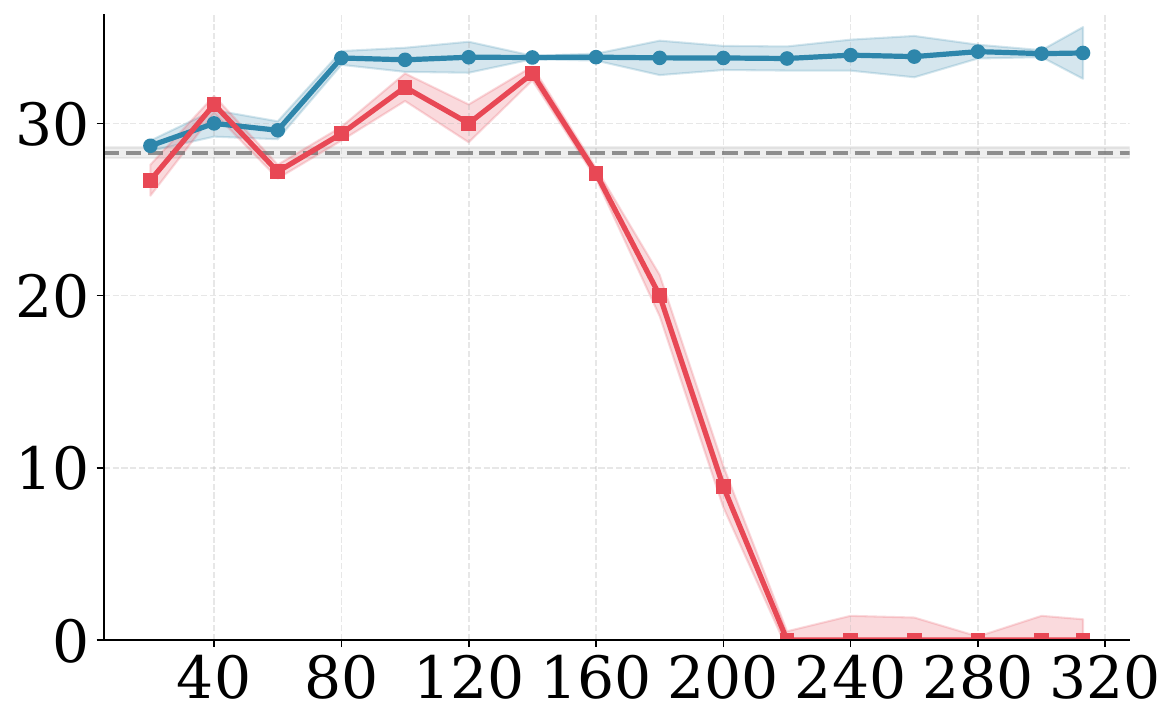} &
        \includegraphics[width=0.155\textwidth]{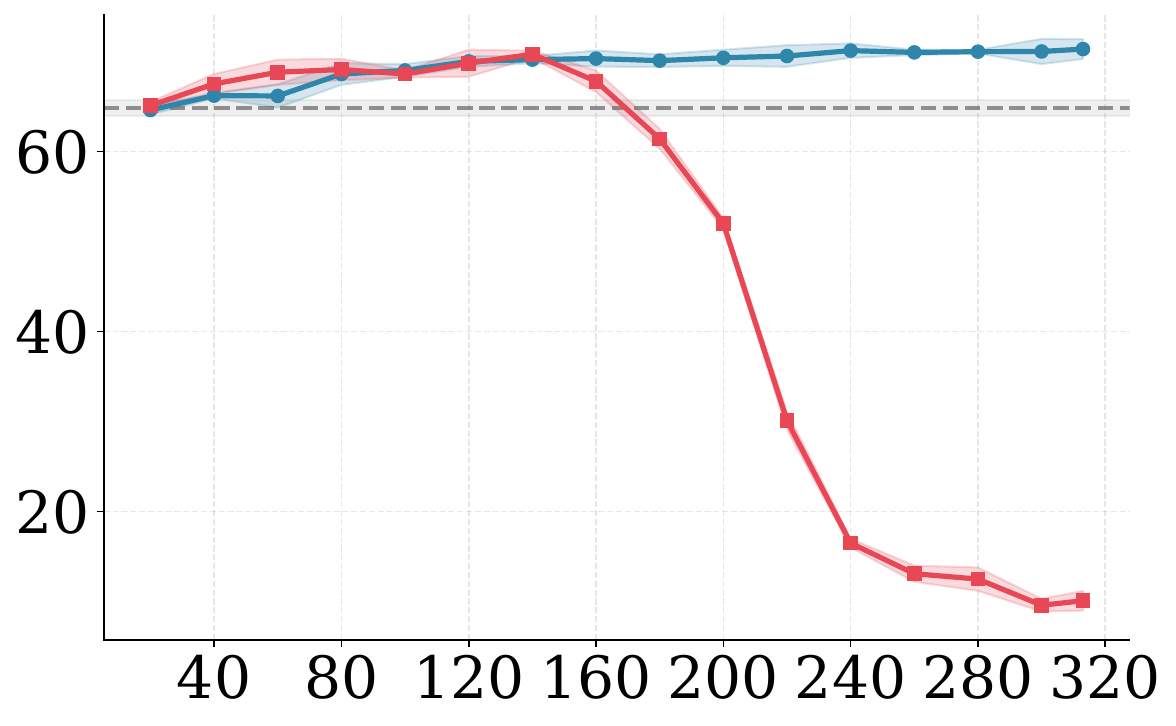} &
        \includegraphics[width=0.155\textwidth]{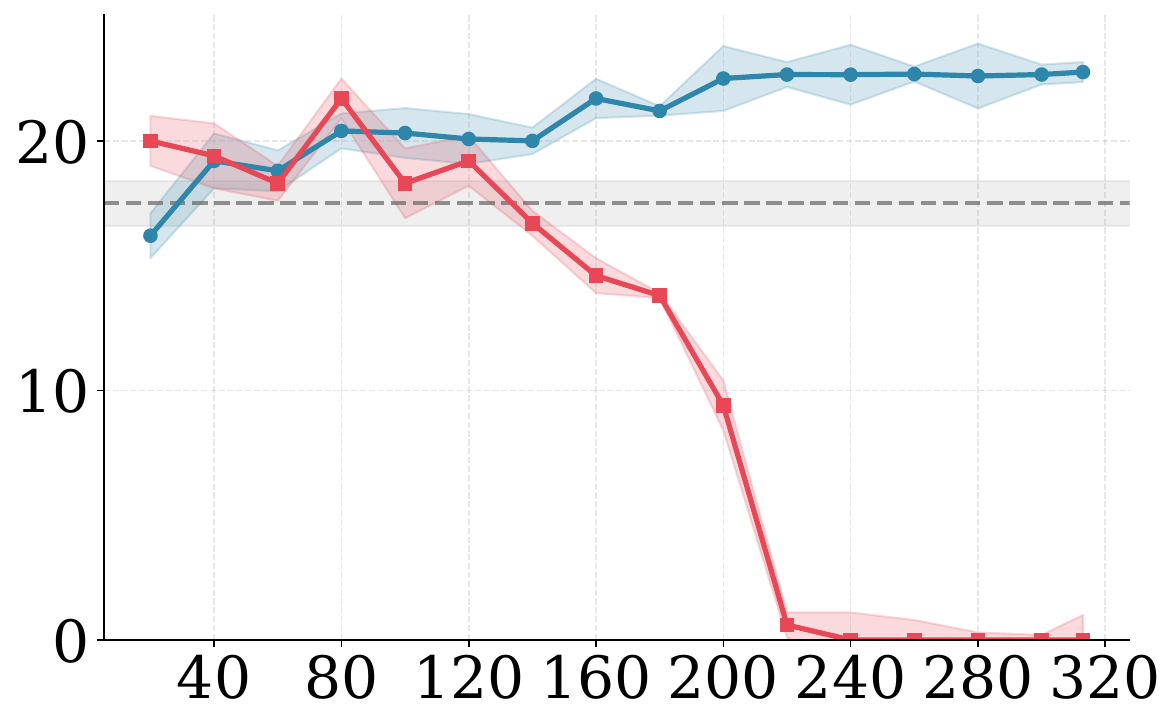} &
        \includegraphics[width=0.155\textwidth]{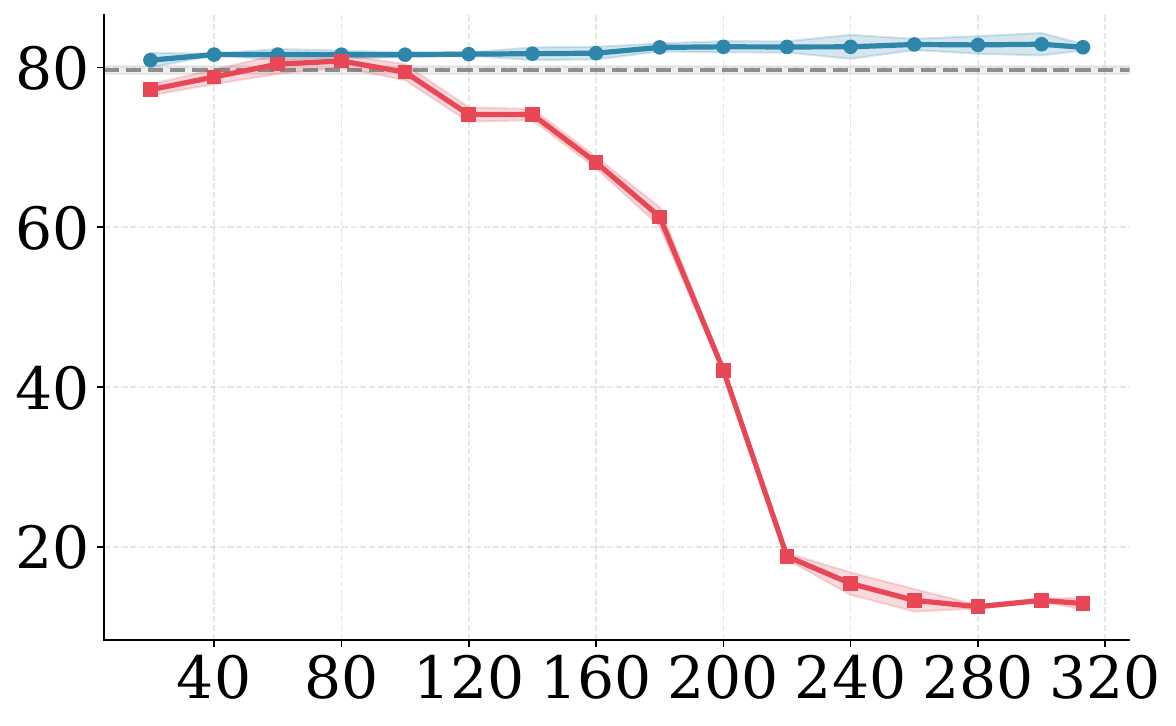} &
        \includegraphics[width=0.155\textwidth]{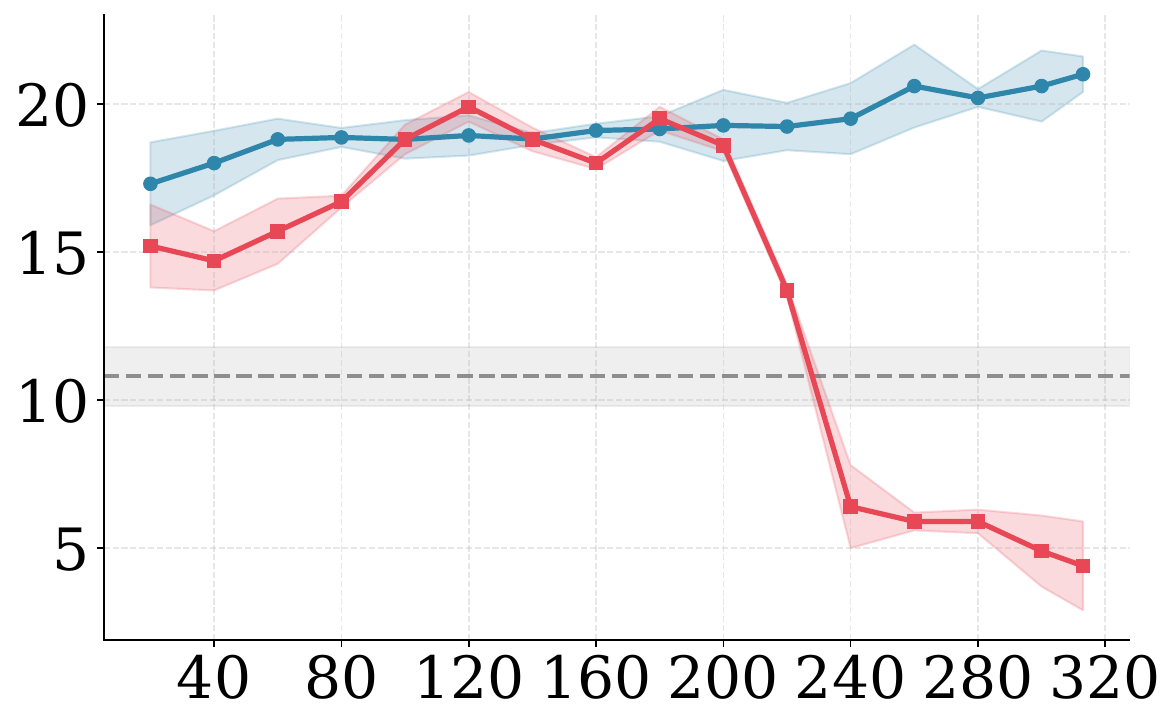} &
        \includegraphics[width=0.155\textwidth]{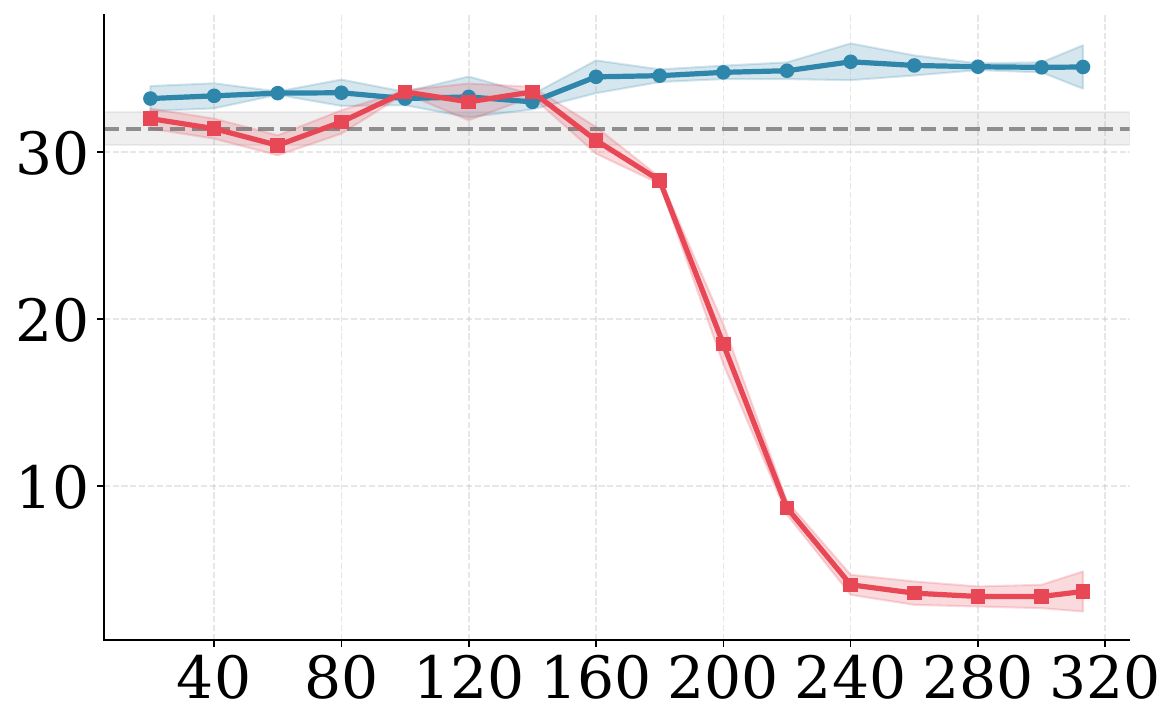} \\
        & \scriptsize Steps & \scriptsize Steps & \scriptsize Steps & \scriptsize Steps & \scriptsize Steps & \scriptsize Steps \\
    \end{tabular}
    \caption{The learning curves comparing \textcolor[HTML]{2E86AB}{VI-CuRL} against the baseline \textcolor[HTML]{E84855}{No Curriculum} across verifier-based (Oracle) and verifier-free (Majority Vote and Entropy) settings. Rows alternate between Pass@1 and Pass@8 for each setting.}
    \vspace{-2mm}
    \label{fig:learning_curves_grid}
\end{figure*}
\subsection{Pass@8 Performance}
While Pass@1 roughly measures the mode of the policy distribution (the ability to generate the best answer), Pass@8 evaluates the breadth and coverage of the model's reasoning capabilities, measuring the probability that at least one solution is correct across 8 independent samples. Table~\ref{tab:main_results_oracle_pass@8}, Table~\ref{tab:main_results_independent_pass@8_part1}, and Table~\ref{tab:main_results_independent_pass@8_part2} present the detailed Pass@8 results for the Oracle setting and the two Verifier-Free table splits, respectively. The results closely mirror the trends observed in the Pass@1 analysis: VI-CuRL consistently improves coverage across all benchmarks. Crucially, in the verifier-independent setting, where baselines often collapse to repetitive, incorrect outputs (leading to low Pass@8), VI-CuRL maintains high diversity and solvability. This consistency between Pass@1 and Pass@8 confirms that our method improves the fundamental quality of the policy without trading off diversity or overfitting to narrow solution paths.

\subsection{Absolute Variance Analysis}
Figure~\ref{fig:variance_absolute} showed the absolute variance values for the kept and full subsets. The kept subset (blue) maintained consistently lower variance than the full dataset (red), with the gap most pronounced during early training when aggressive curriculum selection was applied. This directly validated the mechanism underlying Theorem~\ref{thm:variance_decomposition}: by selecting high-confidence prompts, VI-CuRL created a more homogeneous training distribution with reduced gradient variance.

\subsection{Curriculum Control Ablation}
\label{app:curriculum_ablation}
To validate that VI-CuRL's gains stem from its specific annealed confidence curriculum rather than from any form of data filtering, we conduct a systematic ablation study on DeepSeek-R1-Distill-Qwen-1.5B under verifier-free RENT reward. Table~\ref{tab:curriculum_ablation} compares VI-CuRL against several alternative filtering strategies: (1) \textbf{Length-First Filter}, which retains the shortest responses; (2) \textbf{Random Filter}, which randomly retains prompts at the same rate $\beta_t$; and (3) \textbf{Reverse-Confidence Filter}, which retains the \emph{lowest}-confidence prompts (the opposite of VI-CuRL). While Length-First provides modest improvement ($+5.3$ avg), Random Filter is nearly indistinguishable from no curriculum ($+0.3$), and Reverse-Confidence actually \emph{hurts} performance ($-1.4$). VI-CuRL's annealed confidence selection yields a substantial $+16.0$ avg improvement, confirming that the specific curriculum signal and annealing schedule are both essential.

\begin{table*}[h!]
  \centering
  \small
  \caption{Curriculum-specific control ablation for DeepSeek-R1-Distill-Qwen-1.5B under verifier-free GRPO with RENT reward. Each cell reports Pass@1 / Pass@8. \colorbox{blue!10}{Blue} rows highlight our method.}
  \label{tab:curriculum_ablation}
  \resizebox{\linewidth}{!}{
    \begin{tabular}{lrrrrrr|r}
      \toprule
      \rowcolor{gray!15} \textbf{Setting} & \textbf{AIME} & \textbf{AIME} & \textbf{AMC} & \textbf{Math500} & \textbf{Minerva} & \textbf{Olympiad} & \textbf{Average} \\
      \rowcolor{gray!15} & \textbf{2024} & \textbf{2025} & \textbf{2023} & & \textbf{MATH} & \textbf{Bench} & \\
      \midrule
      RENT / No Curriculum & 3.0 / 6.6 & 5.3 / 6.7 & 31.1 / 48.6 & 55.3 / 56.8 & 12.3 / 13.2 & 22.2 / 25.1 & 21.5 / 26.2 \\
      RENT / Length-First Filter & 8.4 / 17.3 & 6.2 / 12.7 & 44.6 / 63.4 & 63.7 / 68.9 & 10.4 / 11.6 & 27.3 / 30.8 & 26.8 / 34.1 \\
      RENT / Random Filter & 3.2 / 6.8 & 5.5 / 6.9 & 31.4 / 48.8 & 55.6 / 57.1 & 12.5 / 13.4 & 22.4 / 25.3 & 21.8 / 26.4 \\
      RENT / Reverse-Confidence & 2.7 / 6.2 & 4.3 / 6.8 & 28.4 / 44.7 & 52.6 / 55.4 & 11.3 / 12.4 & 21.5 / 24.3 & 20.1 / 25.0 \\
      \rowcolor{blue!10} VI-CuRL / RENT & \textbf{19.2 / 42.5} & \textbf{15.6 / 33.5} & \textbf{61.3 / 86.9} & \textbf{77.6 / 82.0} & \textbf{14.6 / 16.5} & \textbf{37.0 / 43.6} & \textbf{37.5 / 50.9} \\
      \bottomrule
    \end{tabular}
  }
\end{table*}

\subsection{Long-Output Ablation}
\label{app:long_output}
A potential concern is that truncation artifacts (when responses hit the decoding cap) may affect the confidence signal. Table~\ref{tab:long_output} compares \texttt{max\_new\_tokens}$=3072$ (our default) with $8192$. Increasing the cap provides marginal improvement for both RENT ($+0.9$ avg) and VI-CuRL ($+1.1$ avg), while cap-hit rates are already low at $3072$ and decrease further at $8192$. This confirms that truncation does not meaningfully distort the confidence signal and our results are robust to the output length setting.

\begin{table*}[h!]
  \centering
  \small
  \caption{Long-output ablation for DeepSeek-R1-Distill-Qwen-1.5B (RENT reward).}
  \label{tab:long_output}
  \resizebox{\linewidth}{!}{
    \begin{tabular}{llrrrrrr|rcccr}
      \toprule
      \rowcolor{gray!15} \textbf{Setting} & \textbf{max\_tokens} & \textbf{AIME24} & \textbf{AIME25} & \textbf{AMC23} & \textbf{MATH500} & \textbf{Minerva} & \textbf{Olympiad} & \textbf{Avg} & \textbf{Cap-hit} & \textbf{No-boxed} & \textbf{Mean len} & \textbf{P90 len} \\
      \midrule
      RENT & 3072 & 3.0 & 5.3 & 31.1 & 55.3 & 12.3 & 22.2 & 21.5 & 3.9\% & 10.0\% & 1895 & 3004 \\
      RENT & 8192 & 3.7 & 6.3 & 32.8 & 56.7 & 12.9 & 22.4 & 22.4 & 1.7\% & 7.9\% & 2136 & 4251 \\
      \rowcolor{blue!10} VI-CuRL / RENT & 3072 & 19.2 & 15.6 & 61.3 & 77.6 & 14.6 & 37.0 & 37.5 & 2.9\% & 8.1\% & 1056 & 2805 \\
      \rowcolor{blue!10} VI-CuRL / RENT & 8192 & 20.3 & 16.7 & 62.8 & 78.3 & 15.2 & 38.4 & 38.6 & 1.2\% & 3.9\% & 2311 & 4511 \\
      \bottomrule
    \end{tabular}
  }
\end{table*}



\subsection{Mechanism Analysis: Why Annealing Matters}
\label{app:mechanism}
Table~\ref{tab:mechanism} examines why the annealed curriculum outperforms static alternatives. We compare four strategies with 16 rollout samples: (1) RENT / No Curriculum, (2) Static Confidence Filter (fixed $\beta=0.2$), (3) Soft Confidence Weighting (using confidence as sample weights without hard filtering), and (4) VI-CuRL's annealed schedule. The key metric is \emph{Upgration} (change from reference step $T_\mathrm{ref}$ to long-horizon $T_\mathrm{long}$): Static filtering achieves high initial performance ($30.2$) but collapses long-term ($-22.4$), while Soft Weighting decays less severely ($-10.2$) but still degrades. Only VI-CuRL maintains stability ($+0.8$) with the highest AULC ($13.0$ vs.~$1.0$ baseline). Table~\ref{tab:mechanism_breakdown} provides the per-dataset breakdown, confirming that VI-CuRL's advantage is consistent across all benchmarks.

\begin{table*}[h!]
  \centering
  \small
  \caption{Mechanism analysis with 16 rollout samples. $T_\mathrm{ref}$: step 313; $T_\mathrm{long}$: step 626. AULC is normalized with RENT/No Curriculum $= 1.0$. \colorbox{blue!10}{Blue} rows highlight our method.}
  \label{tab:mechanism}
  \resizebox{\linewidth}{!}{
    \begin{tabular}{lcccc|c}
      \toprule
      \rowcolor{gray!15} \textbf{Setting} & \textbf{Avg P@1 @ $T_\mathrm{ref}$} & \textbf{Avg P@1 @ $T_\mathrm{long}$} & \textbf{Best Avg P@1} & \textbf{Upgration} & \textbf{AULC} \\
      \midrule
      RENT / No Curriculum & 4.8 & 2.6 & 32.8 & $-$2.2 & 1.0 \\
      RENT / Static Conf.~Filter & 30.2 & 7.8 & 30.2 & $-$22.4 & 3.3 \\
      RENT / Soft Conf.~Weighting & 13.8 & 3.6 & 30.5 & $-$10.2 & 1.9 \\
      \rowcolor{blue!10} VI-CuRL / RENT & \textbf{31.7} & \textbf{32.5} & \textbf{36.7} & \textbf{$+$0.8} & \textbf{13.0} \\
      \bottomrule
    \end{tabular}
  }
\end{table*}

\begin{table*}[h!]
  \centering
  \small
  \caption{Per-dataset Pass@1 breakdown at $T_\mathrm{long}$ (step 626). \colorbox{blue!10}{Blue} rows highlight our method.}
  \label{tab:mechanism_breakdown}
  \resizebox{\linewidth}{!}{
    \begin{tabular}{lrrrrrr}
      \toprule
      \rowcolor{gray!15} \textbf{Setting} & \textbf{AIME2024} & \textbf{AIME2025} & \textbf{AMC2023} & \textbf{MATH500} & \textbf{Minerva} & \textbf{OlympiadBench} \\
      \midrule
      RENT / No Curriculum & 0.0 & 0.0 & 5.1 & 4.0 & 4.2 & 2.5 \\
      RENT / Static Conf.~Filter & 0.3 & 1.2 & 22.6 & 8.6 & 9.5 & 4.3 \\
      RENT / Soft Conf.~Weighting & 0.0 & 0.0 & 7.8 & 7.2 & 4.1 & 2.6 \\
      \rowcolor{blue!10} VI-CuRL / RENT & \textbf{14.3} & \textbf{5.4} & \textbf{51.3} & \textbf{70.4} & \textbf{20.5} & \textbf{33.1} \\
      \bottomrule
    \end{tabular}
  }
\end{table*}

\begin{figure*}[h!]
    \centering
    \setlength{\tabcolsep}{1pt}
    \begin{tabular}{cccc}
        \includegraphics[width=0.24\textwidth]{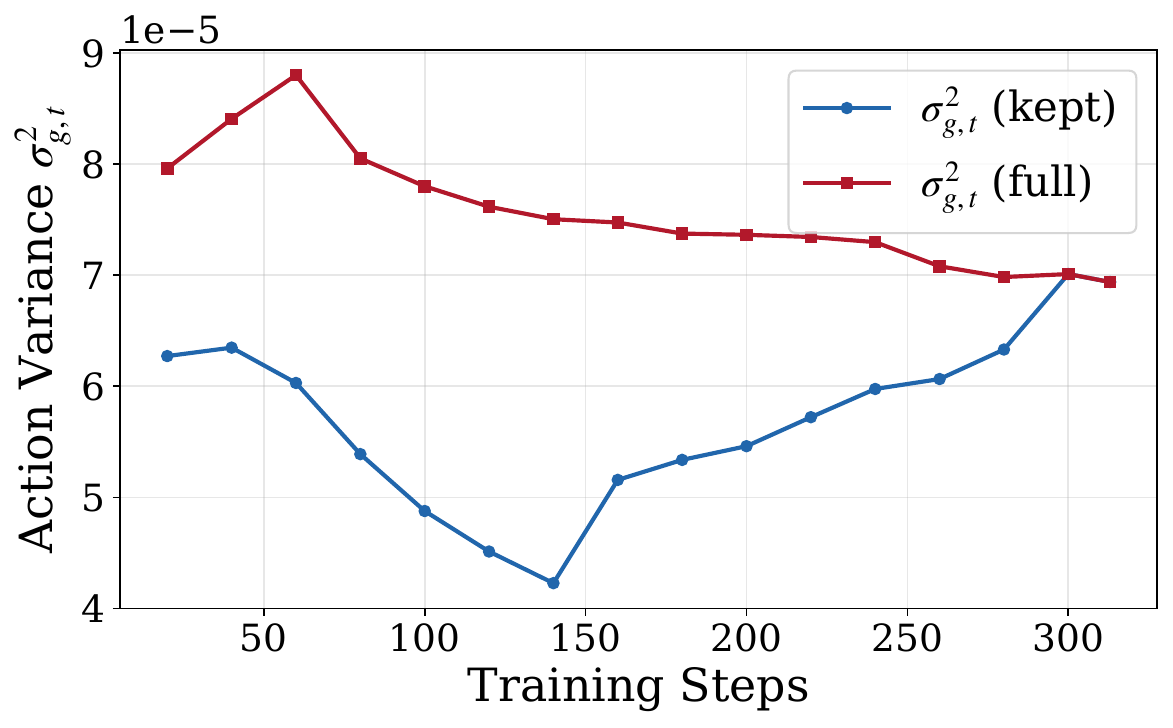} &
        \includegraphics[width=0.24\textwidth]{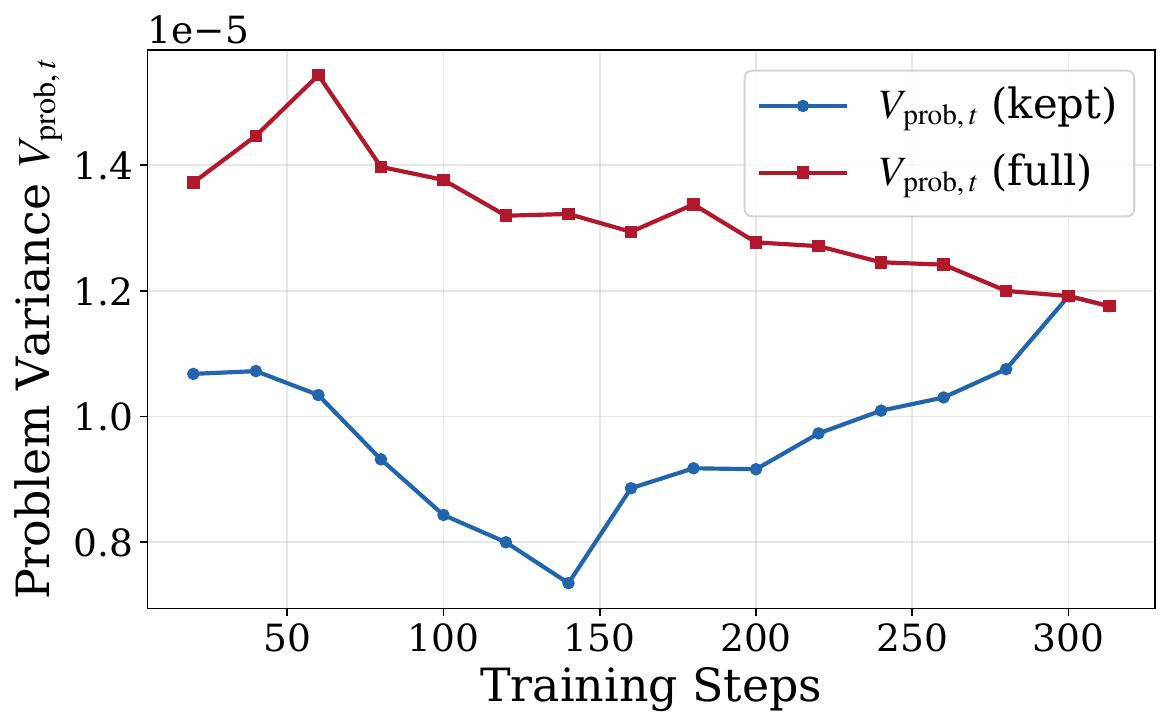} &
        \includegraphics[width=0.24\textwidth]{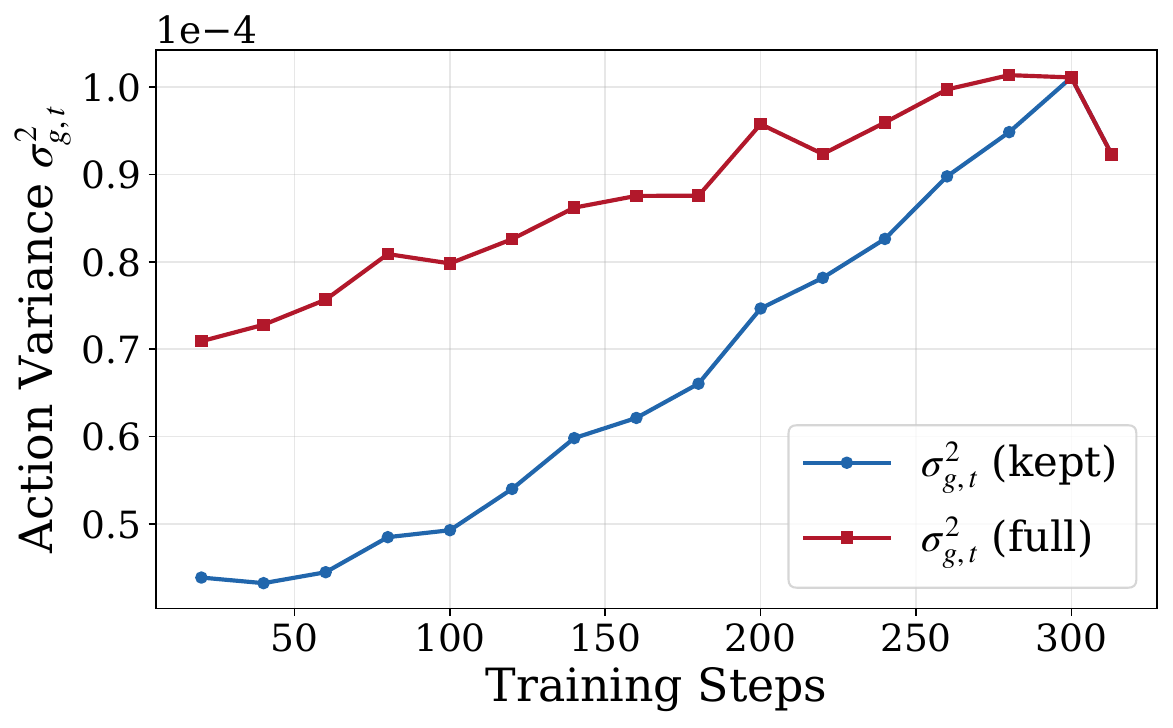} &
        \includegraphics[width=0.24\textwidth]{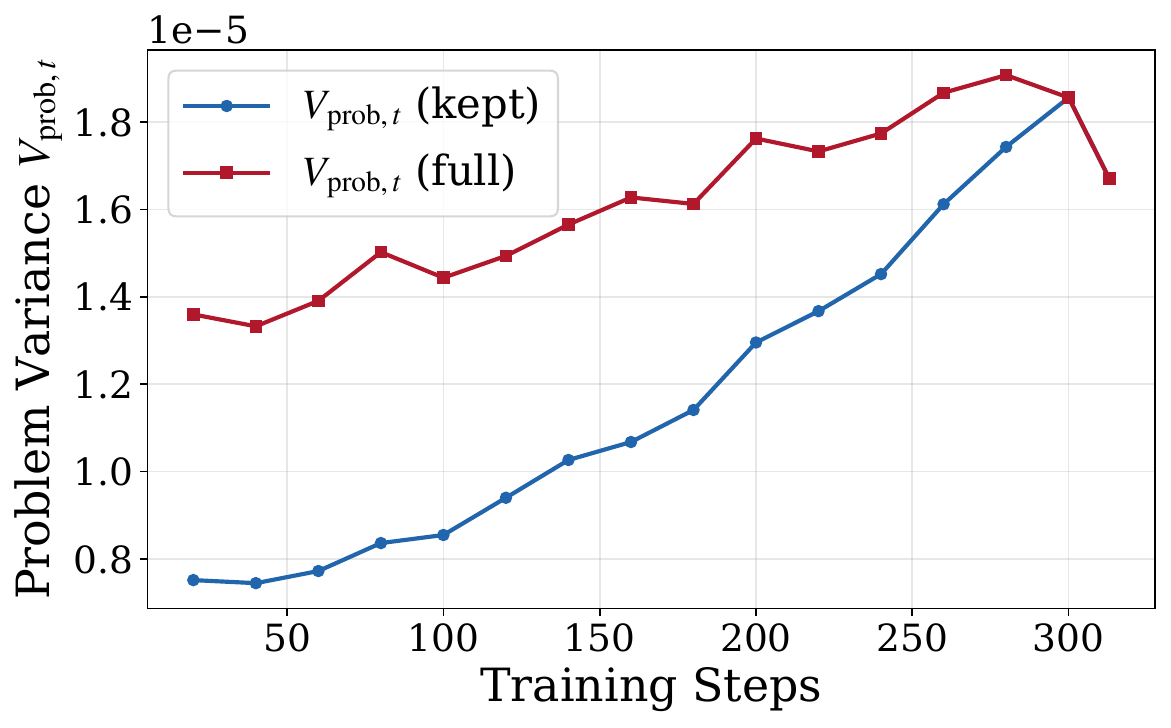} \\
        \footnotesize (a) Qwen: $\sigma_{g,t}^2$ & \footnotesize (b) Qwen: $V_{\mathrm{prob},t}$ & \footnotesize (c) DeepSeek: $\sigma_{g,t}^2$ & \footnotesize (d) DeepSeek: $V_{\mathrm{prob},t}$
    \end{tabular}
    \caption{\textbf{Absolute Variance Values.} The curriculum-selected subset (``kept'', blue) consistently exhibits lower absolute variance than the full dataset (``full'', red) during early training when $\beta_t$ is small. The gap narrows as training progresses and $\beta_t \to 1$.}
    \label{fig:variance_absolute}
\end{figure*}

\section{Proofs and Additional Analysis}\label{sec:proofs}

\subsection{Proof of Theorem~\ref{thm:asymptotic_unbiasedness}}\label{app:proof_asymptotic_unbiasedness}
\begin{proof}
The goal is to bound the difference between the true surrogate objective $\mathcal{L}(\theta)$ and the VI-CuRL surrogate $\mathcal{L}_t(\theta)$.
Recall the definitions:
\[ \mathcal{L}(\theta) = \E_{x \sim p(x)} \E_{y_{1:G} \sim \pi_{\theta_\mathrm{old}}(\cdot|x)} [\ell(\theta; x, y_{1:G})] \]
\[ \mathcal{L}_t(\theta) = \E_{x \sim p(x)} \E_{y_{1:G} \sim \pi_{\theta_\mathrm{old}}(\cdot|x)} \left[ \frac{w_t(x)}{\beta_t} \ell(\theta; x, y_{1:G}) \right] \]
We partitions the objective using the law of total expectation conditioned on the mask $w_t(x)$. Let $\mathcal{S}_t = \{x \mid w_t(x)=1\}$ be the selected subspace.
\begin{align*}
\mathcal{L}(\theta) &= \int \ell(\theta; x, y_{1:G}) p(x) \pi_{\theta_\mathrm{old}}(y_{1:G}|x) dy_{1:G} dx \\
&= \beta_t \mathcal{L}_t(\theta) + (1-\beta_t) \mathcal{L}_{\text{discard}}(\theta).
\end{align*}
Thus, we have the exact decomposition:
\[ \mathcal{L}(\theta) = \beta_t \mathcal{L}_t(\theta) + (1-\beta_t) \mathcal{L}_{\text{discard}}(\theta). \]
Now, we examine the difference:
\begin{align*}
\mathcal{L}(\theta) - \mathcal{L}_t(\theta) &= (1-\beta_t) (\mathcal{L}_{\text{discard}}(\theta) - \mathcal{L}_t(\theta)).
\end{align*}
Assuming the surrogate loss is bounded $|\ell(\theta; x, y_{1:G})| \le L_{\max}$, then:
\begin{align*}
|\mathcal{L}(\theta) - \mathcal{L}_t(\theta)| &\le (1-\beta_t) (L_{\max} + L_{\max}) \\
&= 2L_{\max}(1-\beta_t).
\end{align*}
This completes the proof.
\end{proof}

\subsection{Proof of Theorem~\ref{thm:variance_decomposition}}\label{app:proof_variance_decomposition}
\begin{proof}
Let $\hat{g}_t$ be the VI-CuRL gradient estimator:
\[ \hat{g}_t = \frac{w_t(x)}{\beta_t} g(x, y_{1:G}; \theta), \]
where $g(x, y_{1:G}; \theta) = \nabla_\theta \ell(\theta; x, y_{1:G})$ is the gradient of the surrogate loss.
We use the Law of Total Variance, conditioning on the prompt $x$:
\[ \Var(\hat{g}_t) = \E_{x}[\Var_{\mathbf{y}}(\hat{g}_t | x)] + \Var_{x}(\E_{\mathbf{y}}[\hat{g}_t | x]). \]
We analyze these two terms separately.

\textbf{1. The Expected Action Variance Term:}
Conditioned on a specific prompt $x$, the mask $w_t(x)$ and scalar $\beta_t$ are constants. Thus:
\begin{align*}
\Var_{y_{1:G}}(\hat{g}_t | x) &= \Var_{y_{1:G}}\left( \frac{w_t(x)}{\beta_t} g(x, y_{1:G}; \theta) \bigg| x \right) \\
&= \left( \frac{w_t(x)}{\beta_t} \right)^2 \Var_{y_{1:G}}( g(x, y_{1:G}; \theta) | x ), \quad \text{(by quadratic scaling of scalar variance, applied to trace)}
\end{align*}
Taking the expectation over $x$:
\begin{align*}
\E_{x}[\Var_{y_{1:G}}(\hat{g}_t | x)] &= \E_{x}\left[ \frac{w_t(x)^2}{\beta_t^2} \Var_{y_{1:G}}( g(x, y_{1:G}; \theta) | x ) \right].
\end{align*}
Since $w_t(x)$ is an indicator, $w_t(x)^2 = w_t(x)$.
\begin{align*}
\E_{x}[\Var_{y_{1:G}}(\hat{g}_t | x)] &= \frac{1}{\beta_t^2} \E_{x}\left[ w_t(x) \Var_{y_{1:G}}( g(x, y_{1:G}; \theta) | x ) \right] \\
&= \frac{1}{\beta_t} \cdot \frac{\E_{x}[ w_t(x) \Var_{y_{1:G}}( g(x, y_{1:G}; \theta) | x ) ]}{\beta_t} \\
&= \frac{1}{\beta_t} \E_{x | w_t(x)=1} [ \Var_{y_{1:G}}( g(x, y_{1:G}; \theta) | x ) ].
\end{align*}
We define $\sigma_{g,t}^2 \triangleq \E_{x | w_t(x)=1} [ \Var_{y_{1:G}}( g | x ) ]$ as the average \textbf{Action Variance}. Thus, the first term is $\frac{\sigma_{g,t}^2}{\beta_t}$.

\textbf{2. The Problem Variance Term:}
First, consider the inner expectation $\E_{y_{1:G}}[\hat{g}_t | x]$. Let $\bar{g}(x) = \E_{y_{1:G}|\pi_{\theta_\mathrm{old}}}[g | x]$ be the expected gradient for prompt $x$. Then:
\[ \E_{y_{1:G}}[\hat{g}_t | x] = \frac{w_t(x)}{\beta_t} \bar{g}(x). \]
The variance of this quantity over $x$ is:
\begin{align*}
\Var_{x}\left( \frac{w_t(x)}{\beta_t} \bar{g}(x) \right) &= \E_{x} \left[ \left\| \frac{w_t(x)}{\beta_t} \bar{g}(x) \right\|^2 \right] - \left\| \E_{x} \left[ \frac{w_t(x)}{\beta_t} \bar{g}(x) \right] \right\|^2.
\end{align*}
The second part is simply the squared norm of the true gradient of the surrogate objective $\nabla_\theta \mathcal{L}_t(\theta)$.
Expanding the first part and using the definition of $V_{\mathrm{prob}, t} = \Var_{x | w_t(x)=1}(\bar{g}(x))$:
\begin{align*}
\Var_{x}(\E_{y_{1:G}}[\hat{g}_t | x]) &= \frac{V_{\mathrm{prob}, t}}{\beta_t} + \frac{1-\beta_t}{\beta_t} \|\nabla_\theta \mathcal{L}_t(\theta)\|^2.
\end{align*}
Combining the Action Variance term and the Problem Variance term yields the final decomposition:
\[ \Var(\hat{g}_t) = \frac{\sigma_{g,t}^2}{\beta_t} + \frac{V_{\mathrm{prob}, t}}{\beta_t} + \frac{1-\beta_t}{\beta_t} \|\nabla_\theta \mathcal{L}_t(\theta)\|^2. \]
\end{proof}

\subsection{Proof of Lemma~\ref{lem:conf_aware_bound}}\label{app:proof_conf_aware_bound}
\begin{lemma}[Conditional Entropy-Envelope Bound]\label{lem:conf_aware_bound}
Suppose there exists a non-increasing function $G_{\conf}:[0,1]\to[0,\infty)$ such that for all selected prompts ($h(x) \le \tau_t$), the norm of the surrogate gradient is bounded by entropy:
\begin{equation}
\|\nabla_\theta \ell(\theta; x, y_{1:G})\| \le G_{\conf}(h(x)).
\end{equation}
Then
\begin{equation}
\Var(\hat g_t) \le \frac{G_{\conf}(\tau_t)^2}{\beta_t} - \|\nabla_\theta \mathcal{L}_t(\theta)\|^2.
\end{equation}
\end{lemma}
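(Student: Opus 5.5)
The plan is to bypass the three-term decomposition of Theorem~\ref{thm:variance_decomposition} and work from the raw second-moment identity $\Var(\hat g_t) = \E\|\hat g_t\|^2 - \|\E \hat g_t\|^2$. Here $h(x)$ is the normalized uncertainty, so $h(x)=u(x)=1-c(x)$. Under this reading, the selection rule $c(x)\ge\tau_t$ becomes $h(x)\le\tau_t$ once thresholds are expressed on the uncertainty scale; the lemma's notation implicitly reuses $\tau_t$ for that threshold, and I will state this identification explicitly at the start.

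First, use the unbiasedness already noted in Section~\ref{sec:theory}, $\E[\hat g_t] = \nabla_\theta \mathcal{L}_t(\theta)$, so the subtracted term is exactly $\|\nabla_\theta\mathcal{L}_t(\theta)\|^2$. It then remains to bound the second moment:
\[
\E\|\hat g_t\|^2 = \frac{1}{\beta_t^2}\,\E_x\!\Big[w_t(x)\,\E_{y_{1:G}}\|\nabla_\theta \ell(\theta;x,y_{1:G})\|^2\Big],
\]
where I use $w_t(x)^2 = w_t(x)$, as in the proof of Theorem~\ref{thm:variance_decomposition}.

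Second, on the event $w_t(x)=1$ we have $h(x)\le\tau_t$. The hypothesis gives $\|\nabla_\theta\ell\|\le G_{\conf}(h(x))$, and since $G_{\conf}$ is non-increasing and $h(x)\le\tau_t$, one might expect $G_{\conf}(h(x))\le G_{\conf}(\tau_t)$. This is the step I expect to be the main obstacle. A non-increasing function evaluated at a smaller argument gives a \emph{larger} value, so the inequality goes the wrong way unless the argument of $G_{\conf}$ is confidence rather than entropy. The plan is to read $G_{\conf}$ as a function of confidence $c(x)=1-h(x)$, or equivalently as non-decreasing in $h$. Under that reading the selected prompts satisfy $c(x)\ge\tau_t$, hence $G_{\conf}(c(x))\le G_{\conf}(\tau_t)$, which is the pointwise envelope the conclusion requires. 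I would state this orientation explicitly, in keeping with the lemma's name and the fact that gradient norms shrink as confidence grows.

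Third, substitute the envelope: $\E\|\hat g_t\|^2 \le \frac{1}{\beta_t^2} G_{\conf}(\tau_t)^2\, \E_x[w_t(x)] = G_{\conf}(\tau_t)^2/\beta_t$, using $\E_x[w_t]=\beta_t$. Subtracting $\|\nabla_\theta\mathcal{L}_t(\theta)\|^2$ gives the claim.

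As a consistency check, the bound should agree with Theorem~\ref{thm:variance_decomposition}. Adding the action-variance and problem-variance terms, and using $\E[\|\bar g\|^2 \mid w_t=1] = \beta_t\|\nabla_\theta\mathcal{L}_t(\theta)\|^2$, the first two terms of the decomposition sum to at most $\frac{G_{\conf}(\tau_t)^2}{\beta_t} - \|\nabla_\theta\mathcal{L}_t(\theta)\|^2$ minus the masking term. The masking term then adds back to recover exactly the same bound, so both routes agree.
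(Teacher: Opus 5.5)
Your proof follows the paper's argument: expand $\E\|\hat g_t\|^2$ using $w_t^2=w_t$, apply the envelope on the selected set, use $\E_x[w_t(x)]=\beta_t$, then subtract $\|\nabla_\theta\mathcal{L}_t(\theta)\|^2$. You also correctly fix the monotonicity step, which the paper gets backwards: with $G_{\conf}$ non-increasing, $h(x)\le\tau_t$ gives $G_{\conf}(h(x))\ge G_{\conf}(\tau_t)$, and the literal statement then fails (e.g.\ $G_{\conf}(h)=1-h$, $\tau_t=\tfrac12$, every prompt at $h(x)=0.1$ so $\beta_t=1$, and gradients $\pm v$ equally likely with $\|v\|=0.9$, giving $\Var(\hat g_t)=0.81>\tfrac14$), so your reading of $G_{\conf}$ as non-increasing in the confidence $c(x)$ with selection $c(x)\ge\tau_t$ (the orientation the paper's own remark after the lemma presupposes) is genuinely required, and likewise one minor slip in your optional consistency check: the identity you need there is $\|\E[\bar g\mid w_t=1]\|^2=\|\nabla_\theta\mathcal{L}_t(\theta)\|^2$, whereas $\E[\|\bar g\|^2\mid w_t=1]=\beta_t\|\nabla_\theta\mathcal{L}_t(\theta)\|^2$ is false in general, though the conclusion you reach there is correct.
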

This lemma shows that as the curriculum threshold $\tau_t$ increases, the numerator $G_{\conf}(\tau_t)^2$ decreases, counteracting the decrease in the denominator $\beta_t$. Under reasonable assumptions (e.g., if $\beta_t \propto (1-\tau_t)$ and $G_{\conf}(\tau_t)^2 \propto (1-\tau_t)$), this bound can remain stable instead of diverging.
\begin{proof}
We derive a bound for the second moment $\E[\|\hat{g}_t\|^2]$ under the entropy-envelope assumption.
\[ \hat{g}_t = \frac{w_t(x)}{\beta_t} \nabla_\theta \ell(\theta; x, y_{1:G}). \]
Taking the squared norm and expectation:
\begin{align*}
\E[\|\hat{g}_t\|^2] &= \E_{x, y_{1:G}} \left[ \left\| \frac{w_t(x)}{\beta_t} \nabla_\theta \ell \right\|^2 \right] \\
&= \frac{1}{\beta_t^2} \E_{x} \left[ w_t(x) \E_{y_{1:G}|x} [ \|\nabla_\theta \ell(\theta; x, y_{1:G})\|^2 ] \right].
\end{align*}
Using the assumption $\|\nabla_\theta \ell\| \le G_{\conf}(h(x))$ applied to the prompt $x$:
\begin{align*}
\E_{y_{1:G}|x} [ \|\nabla_\theta \ell\|^2 ] \le G_{\conf}(h(x))^2.
\end{align*}
Since $w_t(x)=1 \implies h(x) \le \tau_t$, and $G_{\conf}$ is non-increasing, we have $G_{\conf}(h(x)) \le G_{\conf}(\tau_t)$.
\begin{align*}
\E[\|\hat{g}_t\|^2] &\le \frac{1}{\beta_t^2} \E_{x} [ w_t(x) G_{\conf}(\tau_t)^2 ] \\
&= \frac{G_{\conf}(\tau_t)^2}{\beta_t^2} \E_{x}[w_t(x)] \\
&= \frac{G_{\conf}(\tau_t)^2}{\beta_t}.
\end{align*}
Finally, using $\Var(\hat{g}_t) = \E[\|\hat{g}_t\|^2] - \|\E[\hat{g}_t]\|^2$, we obtain:
\begin{align*}
\Var(\hat{g}_t) \le \frac{G_{\conf}(\tau_t)^2}{\beta_t} - \|\nabla_\theta \mathcal{L}_t(\theta)\|^2.
\end{align*}
\end{proof}

\subsection{Proof of Theorem~\ref{thm:curriculum_sensitive}}\label{app:proof_curriculum_sensitive}
\begin{theorem}[Conditional Curriculum-Sensitive Variance Bound]\label{thm:curriculum_sensitive}
Assume that for each phase $t$ there exist constants $a_1,a_2\ge0$ and $\alpha>0$ such that the variance numerators from Theorem~\ref{thm:variance_decomposition} decay faster than the retention rate:
\begin{equation} \label{eq:assumption_A}
\sigma_{g,t}^2 \le a_1\,\beta_t^{1+\alpha}, \quad \text{and} \quad V_{\mathrm{prob},t} \le a_2\,\beta_t^{1+\alpha}.
\end{equation}
Then the total variance is bounded by
\begin{equation}
\Var(\hat g_t) \le (a_1+a_2)\,\beta_t^{\alpha} + \frac{1-\beta_t}{\beta_t}\, \bigl\|\nabla_\theta \mathcal{L}_t(\theta)\bigr\|^2.
\end{equation}
\end{theorem}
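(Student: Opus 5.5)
The plan is to start from the exact identity of Theorem~\ref{thm:variance_decomposition} and bound its first two terms using the assumption \eqref{eq:assumption_A}. The third term, the masking variance, stays untouched. Since the decomposition is an equality and every term in it is nonnegative, the bound will follow from monotonicity of addition once each numerator is replaced by its upper bound.

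The steps, in order, are as follows.

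First, write
\[
\Var(\hat g_t) = \frac{\sigma_{g,t}^2}{\beta_t} + \frac{V_{\mathrm{prob},t}}{\beta_t} + \frac{1-\beta_t}{\beta_t}\,\bigl\|\nabla_\theta \mathcal{L}_t(\theta)\bigr\|^2 .
\]
This uses that $\beta_t\in(0,1]$, so dividing by $\beta_t$ is legitimate and preserves inequalities.

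Second, substitute $\sigma_{g,t}^2 \le a_1\beta_t^{1+\alpha}$. This gives $\sigma_{g,t}^2/\beta_t \le a_1\beta_t^{\alpha}$.

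Third, in the same way, substitute $V_{\mathrm{prob},t} \le a_2\beta_t^{1+\alpha}$ to get $V_{\mathrm{prob},t}/\beta_t \le a_2\beta_t^{\alpha}$.

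Fourth, add the two bounds and carry the masking term over unchanged. This yields $(a_1+a_2)\beta_t^{\alpha} + \frac{1-\beta_t}{\beta_t}\|\nabla_\theta\mathcal{L}_t(\theta)\|^2$.

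There is essentially no obstacle here; the argument is a direct substitution. The only point needing care is that the conditional quantities $\sigma_{g,t}^2$ and $V_{\mathrm{prob},t}$ must be well defined, which requires conditioning on a set of positive probability. That is guaranteed by $\beta_t>0$.

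If desired, one can close with a remark on the regime of interest. Because $\alpha>0$, the first two contributions vanish as $\beta_t\to 0$. At the other end, the masking term vanishes as $\beta_t\to1$. Together these reflect the bias–variance trade-off discussed after Theorem~\ref{thm:variance_decomposition}.
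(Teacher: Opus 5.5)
Your proposal is correct and matches the paper's proof: you substitute the two assumed bounds into the exact identity of Theorem~\ref{thm:variance_decomposition}, divide by $\beta_t>0$ to obtain $a_1\beta_t^{\alpha}$ and $a_2\beta_t^{\alpha}$, and carry the masking-variance term over unchanged. One small remark: the nonnegativity of the terms is not needed, since replacing two summands of an equality by upper bounds already gives the inequality.
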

In practice, assumption~\eqref{eq:assumption_A} is naturally satisfied because the curriculum selects high-confidence samples where the induced gradient estimates are more homogeneous. Under this circumstance, the first two terms vanish as $\beta_t \to 0$. The remaining term is the masking variance, which is also expected to be small in practice, as the masking-variance term, $\|\nabla_\theta \mathcal{L}_t(\theta)\|$, is expected to remain controlled when the selected low-uncertainty subset induces small or well-aligned surrogate gradients. The theorem thus confirms that VI-CuRL successfully trades a controllable bias for a dramatically reduced variance.
\begin{proof}
This proof follows directly from substituting the assumption bounds into the exact decomposition derived in Theorem~\ref{thm:variance_decomposition}.
Recall:
\[ \Var(\hat{g}_t) = \frac{\sigma_{g,t}^2}{\beta_t} + \frac{V_{\mathrm{prob}, t}}{\beta_t} + \frac{1-\beta_t}{\beta_t} \|\nabla_\theta \mathcal{L}_t(\theta)\|^2. \]
Assumption~\eqref{eq:assumption_A} posits that the curriculum effectively selects easier samples such that:
\[ \sigma_{g,t}^2 \le a_1 \beta_t^{1+\alpha} \quad \text{and} \quad V_{\mathrm{prob}, t} \le a_2 \beta_t^{1+\alpha}, \]
for some $\alpha > 0$.
Substituting these inequalities:
\begin{align*}
\frac{\sigma_{g,t}^2}{\beta_t} &\le \frac{a_1 \beta_t^{1+\alpha}}{\beta_t} = a_1 \beta_t^\alpha \\
\frac{V_{\mathrm{prob}, t}}{\beta_t} &\le \frac{a_2 \beta_t^{1+\alpha}}{\beta_t} = a_2 \beta_t^\alpha.
\end{align*}
Summing these terms:
\[ \Var(\hat{g}_t) \le (a_1 + a_2) \beta_t^\alpha + \frac{1-\beta_t}{\beta_t} \|\nabla_\theta \mathcal{L}_t(\theta)\|^2. \]
As $\beta_t \to 0$ (early curriculum), the term $\beta_t^\alpha$ vanishes, ensuring the variance remains bounded despite the $1/\beta_t$ factor in the exact decomposition.
\end{proof}

\section{Implementation Details}\label{app:implementation}

This section details the prompt configurations, hyperparameters, and datasets used to ensure the reproducibility of our experiments. We adopt a structured presentation to distinguish experimental components clearly.

\subsection{Prompt Templates}
\label{app:prompt-templates}

\paragraph{Training \& Generation.}
We employ a consistent prompt structure for both training rollouts and evaluation, designed to elicit step-by-step reasoning (Chain-of-Thought) and strictly enforce the output format required for rule-based verification.

\begin{tcolorbox}[colback=gray!5!white,colframe=gray!50!black,title=Standard Prompt Template]
\textbf{User:} \\
\texttt{\{QUESTION\}}

\vspace{0.5em}
Let's think step by step and enclose the reasoning process within \texttt{<think>} and \texttt{</think>} tags.
The final result in the answer MUST BE within \texttt{\textbackslash boxed\{\}}.
\end{tcolorbox}

\noindent During preprocessing, this is formatted as a single-turn conversation. The ground-truth answer is attached separately for reward computation.

\paragraph{Verification Logic.}
The rule-based verifier parses the model's output to extract the content inside the last \texttt{\textbackslash boxed\{\dots\}} tag. It then performs robust equality checking (handling fractions, decimals, and algebraic equivalents) against the ground truth to assign a binary reward $\tilde{R} \in \{0, 1\}$.

\subsection{Configuration and Hyperparameters}
\label{app:train-config}
We utilize Group Relative Policy Optimization (GRPO) as the underlying reinforcement learning algorithm. In the VI-CuRL setting, we introduce curriculum-specific parameters.

\begin{table}[ht]
    \centering
    \caption{Detailed hyperparameters for GRPO training and VI-CuRL curriculum.}
    \label{tab:hparams}
    \resizebox{\linewidth}{!}{
    \begin{tabular}{l c}
        \toprule
        \rowcolor{gray!10} \textbf{Hyperparameter} & \textbf{Value} \\
        \midrule
        \multicolumn{2}{c}{\textit{Optimization \& Model}} \\
        Algorithm & GRPO \\
        Training Strategy & Fully Sharded Data Parallel 2 (FSDP2) + Zero Redundancy Optimizer Stage 3 (ZeRO-3) \\
        Optimizer & AdamW \\
        Learning Rate & $3 \times 10^{-6}$ \\
        Total Epochs & 1 \\
        Global Batch Size & 128 \\
        Gradient Accumulation & 1 \\
        \midrule
        \multicolumn{2}{c}{\textit{Rollout \& GRPO}} \\
        Rollout Engine & vLLM \\
        Context Window & 512 (Input) + 3072 (Output) \\
        Group Size ($G$) & 8 \\
        Sampling Temperature & 1.0 \\
        Advantage & Group-Normalized ($A_i = \frac{R_i - \bar{R}}{\sigma + \epsilon}$) \\
        KL Coefficient ($\beta_{KL}$) & 0.001 \\
        Entropy Coefficient & 0.0 \\
        Reference Model & Frozen Initial Policy \\
        \midrule
        \multicolumn{2}{c}{\textit{VI-CuRL Curriculum}} \\
        Initial Retention ($\beta_{\text{start}}$) & 0.2 \\
        Curriculum Duration ($T_{\text{curr}}$) & 100\% of Training Steps \\
        Curriculum Signal & Confidence ($c(x)$ from Eq.~\eqref{eq:confidence}) \\
        Retention Schedule & Linear Annealing \\
        \bottomrule
    \end{tabular}
    }
\end{table}

\subsection{Dataset Composition}
\label{app:datasets}
Our experiments leverage a combination of established benchmarks for training and comprehensive evaluation.

\paragraph{Training Source.}
\begin{description}
    \item[DeepScaleR] \citep{DeepScaleR}: A large-scale math reasoning corpus used for all outcome-supervised training runs.
    \item[WebInstruct-Verified] \citep{ma2025generalreasoner}: A verified instruction-following dataset used for general knowledge RL training. For our general-domain experiments, we filter out mathematics prompts to focus on non-mathematical reasoning and knowledge-intensive tasks.
\end{description}

\paragraph{Mathematical Reasoning Evaluation Suite.}
We evaluate mathematical reasoning on six distinct datasets to assess robustness and generalization.

\begin{description}
    \item[MATH500] \citep{lightman2024letsverify}: A subset of the MATH benchmark containing 500 representative problems.
    \item[AIME (2024 \& 2025)] \citep{aime2024, aime2025}: Recent sets from the American Invitational Mathematics Examination, testing high-difficulty competition math.
    \item[AMC 2023] \citep{amc23}: Problems from the American Mathematics Competitions, varying in difficulty.
    \item[Minerva Math] \citep{minerva}: A comprehensive suite evaluating quantitative reasoning capabilities.
    \item[OlympiadBench] \citep{olympiadbench}: An advanced benchmark specifically designing to test olympiad-level problem solving.
\end{description}

\paragraph{General Knowledge Evaluation Suite.}
To evaluate whether VI-CuRL generalizes beyond formal mathematical reasoning, we additionally test on four general knowledge and reasoning benchmarks.
\begin{description}
    \item[MMLU-Pro] \citep{wang2024mmlu}: A more challenging version of MMLU designed to evaluate broad multitask language understanding with harder questions and expanded answer choices.
    \item[GPQA-Diamond] \citep{rein2024gpqa}: A high-difficulty graduate-level question answering benchmark emphasizing scientific knowledge and expert reasoning.
    \item[TheoremQA] \citep{chen-etal-2023-theoremqa}: A theorem-driven question answering benchmark that evaluates whether models can apply formal scientific and mathematical principles to solve problems.
    \item[WebInstruct-Validation] \citep{ma2025generalreasoner}: The validation split associated with WebInstruct-Verified, used to measure general instruction-following and reasoning performance under the same data family as the general-domain training source.
\end{description}

\end{document}